\documentclass{article}
\usepackage{microtype}
\usepackage{graphicx}
\usepackage{subcaption}
\usepackage{booktabs}
\usepackage{hyperref}

\usepackage{algorithm}
\usepackage[accepted]{icml2026}
\usepackage{amsmath}
\usepackage{amssymb}
\usepackage{mathtools}
\usepackage{amsthm}
\usepackage{dsfont}

\usepackage{makecell}

\usepackage[capitalize,noabbrev]{cleveref}

\theoremstyle{plain}
\newtheorem{theorem}{Theorem}[section]
\newtheorem{proposition}[theorem]{Proposition}
\newtheorem{lemma}[theorem]{Lemma}

\theoremstyle{definition}
\newtheorem{definition}[theorem]{Definition}
\newtheorem{assumption}[theorem]{Assumption}
\theoremstyle{remark}
\newtheorem{remark}[theorem]{Remark}

\usepackage{multirow}
\usepackage{adjustbox}
\usepackage[textsize=tiny]{todonotes}
\icmltitlerunning{DistMatch: Adaptive Binning via Distribution Matching for Robust Sequential CP}

\begin{document}

\twocolumn[
  \icmltitle{DistMatch: Adaptive Binning  via Distribution Matching \\ for Robust Sequential Conformal Prediction}
  \icmlsetsymbol{equal}{*}

  \begin{icmlauthorlist}
    \icmlauthor{Enver Menadjiev}{equal,sch}
    \icmlauthor{Jihyeon Seong}{equal,sch}
    \icmlauthor{Jisu Yeo}{sch}
    \icmlauthor{Jaesik Choi}{sch,comp}
  \end{icmlauthorlist}
  \icmlaffiliation{sch}{Korea Advanced Institute of Science and Technology, South Korea}
  \icmlaffiliation{comp}{INEEJI, South Korea}
  \icmlcorrespondingauthor{Jaesik Choi}{jaesik.choi@kaist.ac.kr}
  \icmlkeywords{Machine Learning, ICML}
  \vskip 0.3in
]

\printAffiliationsAndNotice{\icmlEqualContribution} 
\begin{abstract}
Sequential conformal prediction (CP) provides valid uncertainty quantification under the assumption of residual exchangeability. However, this assumption is often violated in real-world time series due to temporal dependencies and distributional shifts. While recent methods attempt to approximate exchangeability through reweighting, identifying optimal weights remains an open challenge. To address this limitation, we propose DistMatch\footnote{\url{https://github.com/enver1323/dist_match_conformal}}, a binning-based method that recursively partitions residuals within a binary tree using the Kolmogorov–Smirnov (KS) statistic. We theoretically show that this partitioning induces approximately exchangeable leaves, thereby avoiding the need for reweighting. By applying quantile regression with online updates within each leaf, DistMatch enables locally adaptive inference and improves robustness to distributional shifts. Extensive experiments demonstrate that DistMatch outperforms existing sequential CP methods.
\end{abstract}
\section{Introduction}
Conformal Prediction (CP) provides valid uncertainty quantification for arbitrary predictive models under the assumption of exchangeability \cite{vovk2005algorithmic,shafer2008tutorial}. Exchangeability implies that the joint distribution of the data is invariant under permutations, which guarantees that prediction intervals attain the desired coverage on unseen samples \cite{kim2020predictive}. In sequential settings, however, distributional shifts often violate this assumption, motivating existing sequential CP methods to approximate exchangeability via residual-based reweighting.

Reweighting schemes assign data-dependent, continuous weights to residuals; however, they face a fundamental challenge in accurately estimating these weights, which can distort the empirical residual distribution. For instance, temporal reweighting methods~\cite{barber2023conformal} track global discrepancy by upweighting recent residuals, discarding informative early samples and amplifying weight misspecification under abrupt shifts. To address this limitation, similarity-based retrieval approaches \cite{auer2023hopcpt,lee2025kowcpi} emphasize past residuals that are deemed similar to an observed input; nevertheless, their performance remains highly sensitive to retrieval quality, as even small similarity estimation errors can assign non-negligible weight to unrelated or noisy samples.

\begin{figure}
    \centering
    \includegraphics[width=0.9\linewidth]{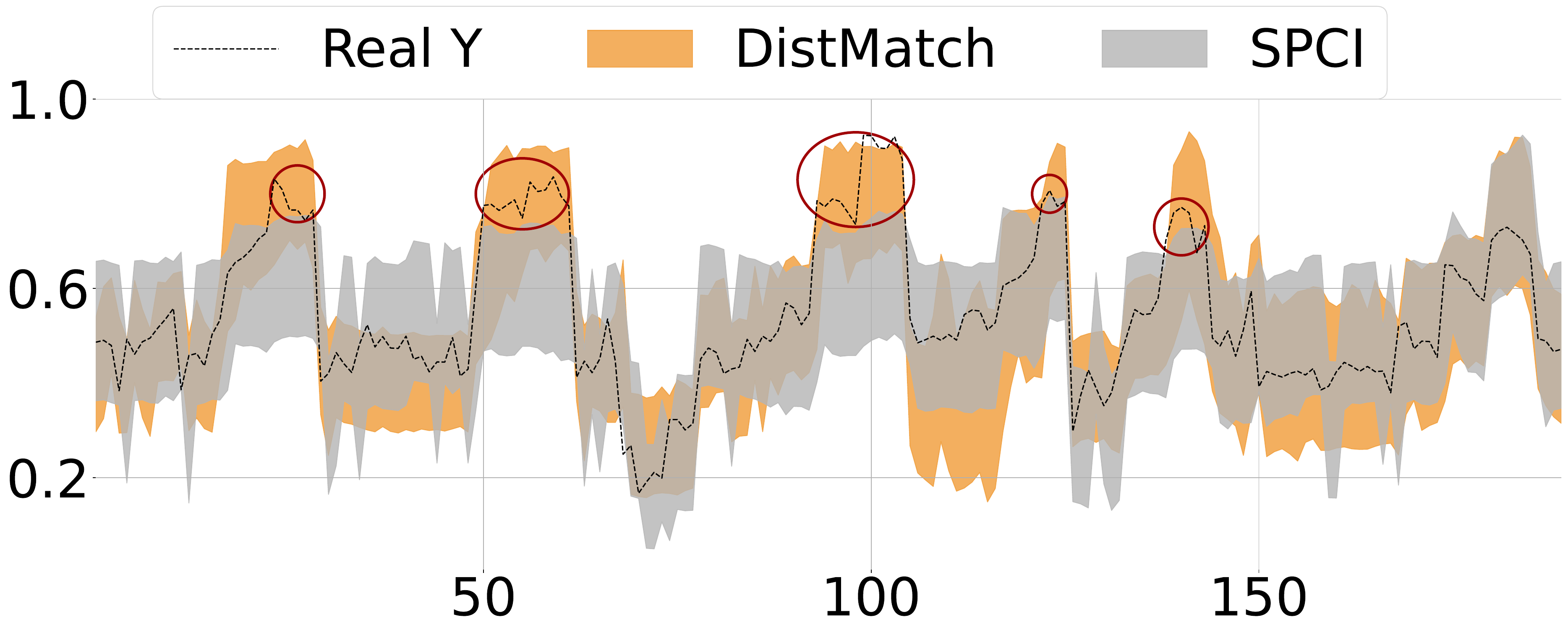}
    \caption{The figure compares sequential CP methods on the Solar dataset under extreme distribution shifts. DistMatch effectively captures these shifts through local adaptive quantile estimation via distribution-based binning, as highlighted by the red circles, in contrast to SPCI, which relies on sliding-window model updating.}
    \label{fig:distmatch_vs_spci}
\end{figure}

Binning-based approaches, by contrast, offer a promising alternative by grouping similar samples into independent subgroups without retrieval. Unlike reweighting-based methods, binning preserves empirical distributions by avoiding explicit weighting schemes, provided that samples within each bin are approximately exchangeable. However, ensuring approximate local exchangeability hinges on defining an appropriate splitting criterion, which is nontrivial in practice. To this end, we employ the Kolmogorov–Smirnov (KS) statistic \cite{massey1951ks}, a nonparametric measure that partitions samples purely based on distributional similarity, without relying on temporal assumptions such as global stationarity \cite{gong2012kolmogorov}. As illustrated in Figure~\ref{fig:ks_test_synthetic}, the KS statistic effectively distinguishes highly skewed distributions, a common characteristic of real-world residuals.

Motivated by these observations, we propose DistMatch, a binary tree–based framework that recursively bins residuals using the KS statistic, yielding approximately exchangeable leaves with bounded distribution. Specifically, DistMatch represents the local distributional context at each time step using a residual patch $\tilde{\varepsilon}_t$ paired with its target residual $\varepsilon_{t+1}$, capturing the relationship between recent and future residual behavior, and performs partitioning based on the patch similarity.  When distributional shifts are detected, the tree expands to form branches that group samples with tightly bounded empirical distributions. Finally, at each leaf, DistMatch applies quantile regression with independent online updates, enabling locally adaptive and robust inference.

We introduce a notion of approximate local exchangeability based on the KS statistic, admitting samples whose residual distributions satisfy an exchangeability bound and conferring robustness to mild shifts. In contrast, Total Variation (TV)-based approaches~\cite{barber2023conformal} bound only global discrepancies, yielding weaker guarantees. Under this definition, we prove that DistMatch attains target-level approximate local exchangeability despite patch-level training, ensuring valid coverage. Under severe shifts, robustness is sustained by diverse split anchors from an ensemble of subsampled trees and online leaf-wise quantile regression updates that adapt local quantiles without altering tree structure. Consequently, DistMatch achieves state-of-the-art performance even under extreme shifts (Figure~\ref{fig:distmatch_vs_spci}). To our knowledge, DistMatch is the first binning-based sequential CP framework with guarantees grounded in approximate local exchangeability. Our contributions are as follows:

\begin{figure}
    \centering
    \includegraphics[width=0.9\linewidth]{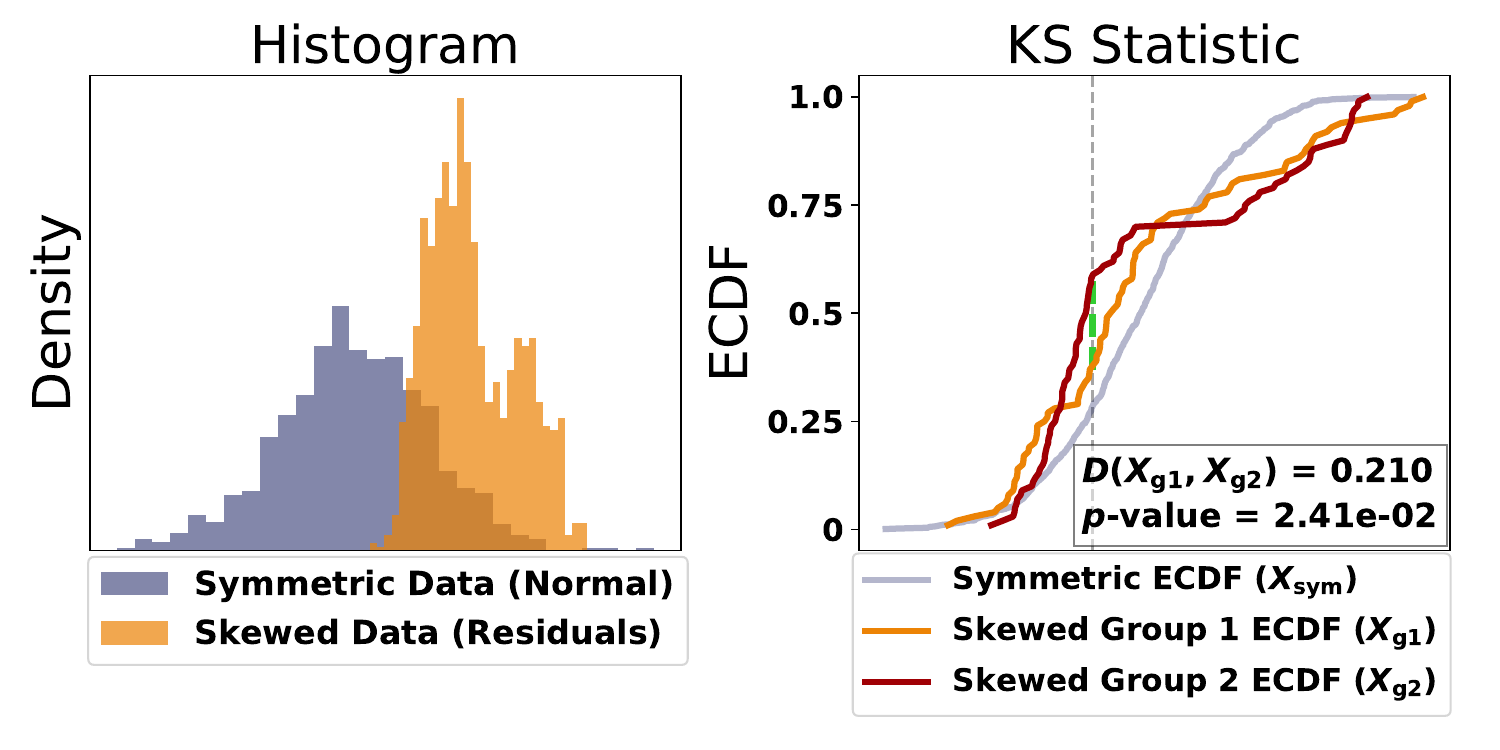}
    \caption{While real-world residuals exhibit highly skewed distributions (left subplot), the KS statistic successfully distinguishes between two different, highly skewed distributions (right subplot).}
    \label{fig:ks_test_synthetic}
\end{figure}

\begin{itemize}
\item We propose DistMatch, a binning-based method for sequential CP that replaces reweighting with KS-based distribution matching.
\item We theoretically show that DistMatch induces approximately exchangeable leaves, thereby guaranteeing valid conformal coverage.
\item We show that DistMatch consistently outperforms existing baselines across diverse real-world datasets.
\end{itemize}
\section{Related Work}

Recent sequential CP methods attempt to approximate exchangeability through various reweighting strategies when exact exchangeability is violated \cite{stankeviciute2021conformal}. Temporal reweighting schemes \cite{tibshirani2019weightedcp,barber2023conformal} rely on likelihood ratios or kernel-based decay to track global discrepancy while prioritizing recent samples; however, inaccurate tracking of distributional changes can lead to substantial coverage loss. To overcome these limitations, similarity- and copula-based methods~\cite{auer2023hopcpt,lee2025kowcpi,sun2024copula} assign continuous weights via learned similarity or explicit dependency models; however, misspecification of either propagates into distorted quantile estimates. Methods that avoid explicit reweighting, such as ensemble- and update-based approaches~\cite{xu2021enbpi,xu2023spci}, use sliding windows or adaptive retraining; however, truncating historical data may impair coverage under long-range dependence. Although some methods explicitly address distributional shifts via auxiliary or labeled data, or global correction mechanisms~\cite{jeary2024verifiably,xu2025wasserstein,kasa2025adapting}, such requirements limit their practical applicability.

Probabilistic forecasting offers an alternative for uncertainty quantification, yet most methods rely on modeling assumptions that can limit robustness under distribution shift. Gaussian Processes (GP)~\cite{williams1995gaussian} assume Gaussian priors that may be sensitive to heavy-tailed data, while Mixture Density Networks (MDN)~\cite{bishop1994mixture} depend on parametric mixtures whose calibration can degrade under degeneracy. Monte Carlo Dropout (MCD)~\cite{gal2016dropout} relies on a variational approximation that may become miscalibrated over shifted time. Consequently, their coverage is not guaranteed under severe shifts.

\begin{figure*}[ht]
    \centering
    \begin{subfigure}[b]{0.6\textwidth}
        \centering
        \includegraphics[width=\linewidth]{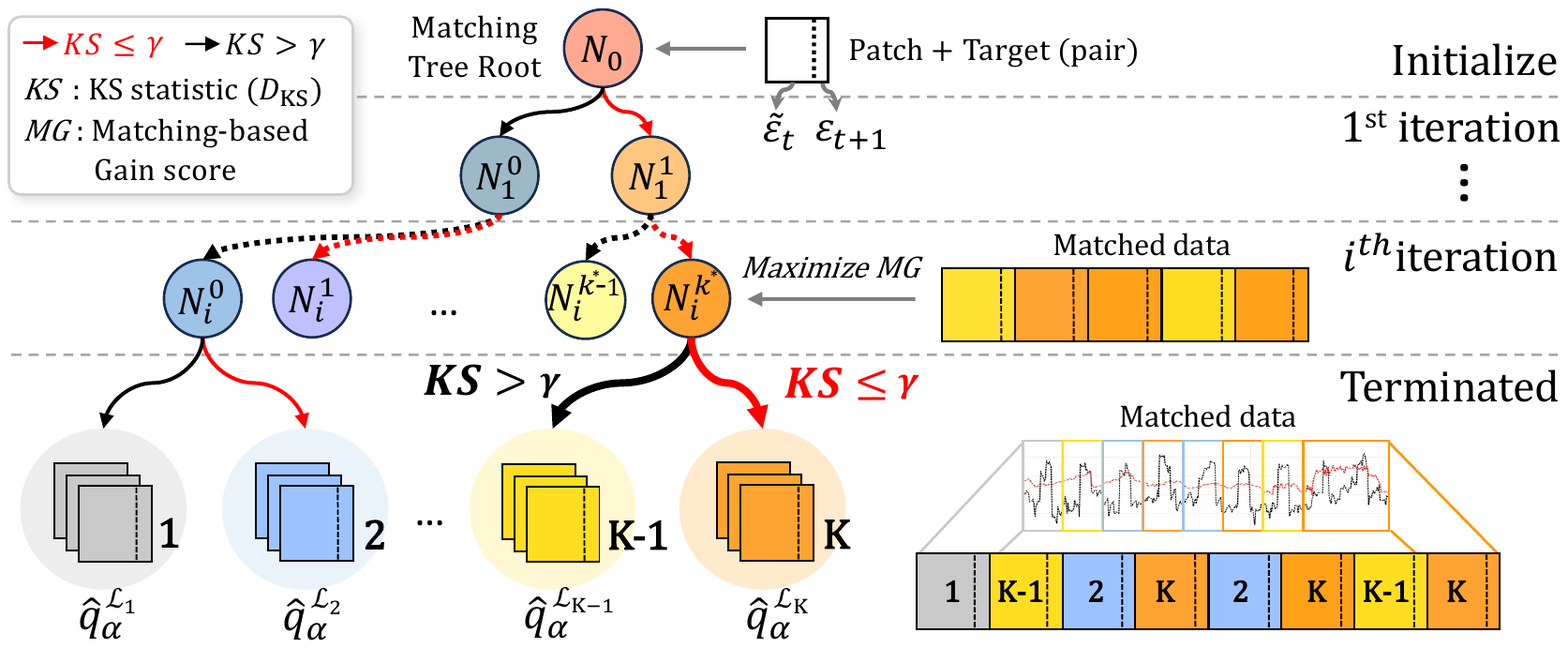}
        \caption{DistMatch - Training phase}
        \label{fig:sub-architecture}
    \end{subfigure}
    \hspace{0.05\linewidth}
    \begin{subfigure}[b]{0.26\textwidth}
        \centering
        \includegraphics[width=\linewidth]{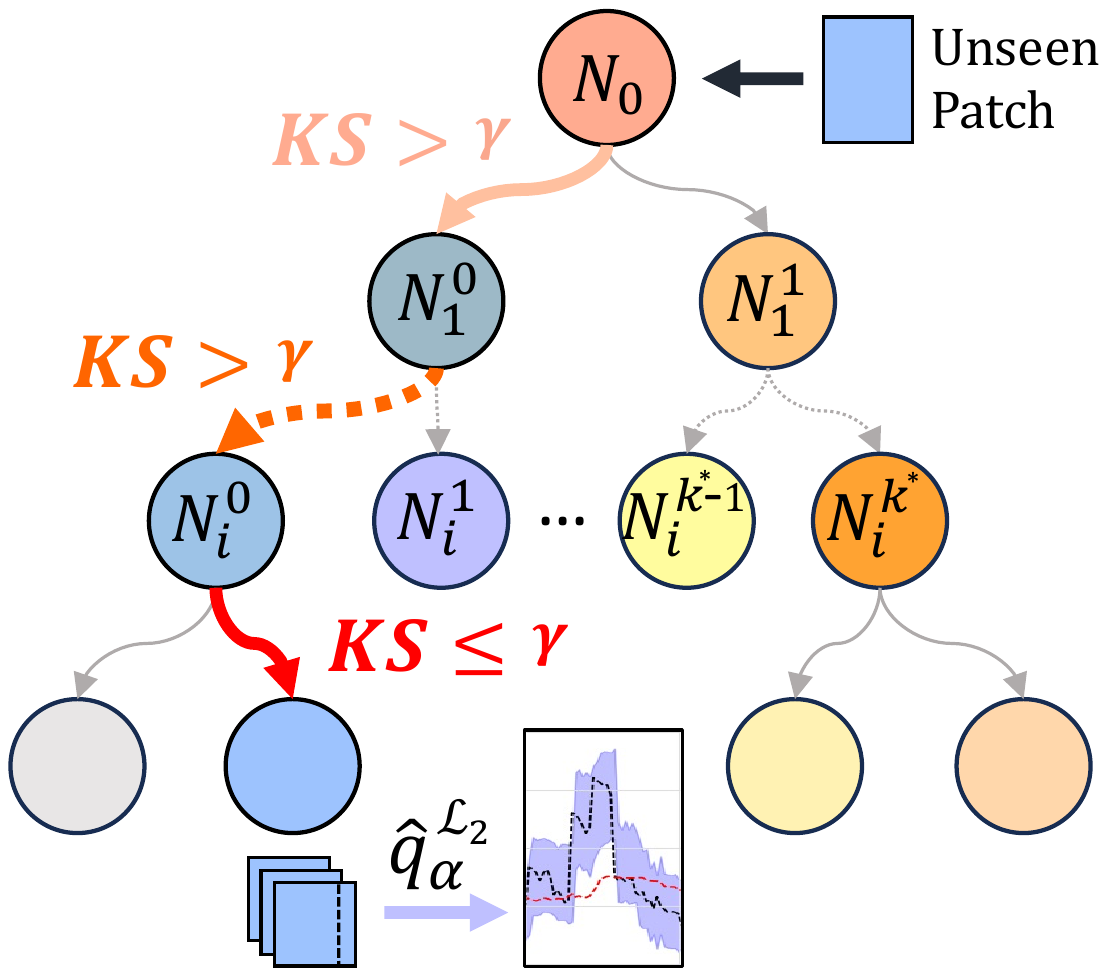}
        \caption{DistMatch - Inference phase}
        \label{fig:main-architecture}
    \end{subfigure}
    
    \caption{DistMatch operates in two phases. During training, it learns a split anchor ($s$) by maximizing the MG score over all candidate patches in the calibration set, thereby identifying an optimal split node $N(s, \mathcal{T}_R, \mathcal{T}_L)$.  Sample pairs are routed to the left or right child depending on whether their distributions match or differ. During inference, only the learned split anchors are used to assign unseen patches to the appropriate leaf, where the quantile regressor $\hat{q}^{\mathcal{L}_{(\cdot)}}_\alpha$ estimates target residuals based on the local distribution within that leaf. Similar colors (e.g., red–orange–yellow) denote samples from the same parent, reflecting shared underlying distributions.
    }
    \label{fig:architecture}
\end{figure*}

\section{Preliminary}
\subsection{Exchangeability}
Consider a multivariate time series $X = \{(x_t, y_t)\}_{t=1}^T$, where each input $x_t \in \mathbb{R}^d$ is a $d$-dimensional vector, $y_t$ is a scalar response, and $T$ denotes the total number of time steps. Let an arbitrary point prediction model $\phi$ produce outputs $\hat{y}_t = \phi(x_t)$ and residuals be defined as $\varepsilon_t = y_t - \hat{y}_t$.

\begin{definition}[Exchangeability of Residuals]
\label{def:exchangeability_residual}
The sequence of residuals \( \varepsilon_1, \varepsilon_2, \dots, \varepsilon_T \) is said to be \emph{exchangeable} if, for any permutation \( \pi \) of the indices \( \{1, 2, \dots, T\} \), the joint distribution remains invariant as,
\begin{equation}
    P(\varepsilon_1, \varepsilon_2, \dots, \varepsilon_T) = P(\varepsilon_{\pi(1)}, \varepsilon_{\pi(2)}, \dots, \varepsilon_{\pi(T)}).
\end{equation}
\end{definition}

\subsection{Kolmogorov–Smirnov Statistic}
The Kolmogorov–Smirnov (KS) statistic measures the distance between two empirical distribution functions.
\begin{definition}[Two-Sample KS Statistic]
\label{def: ks}
Let ${S}(\varepsilon)=\{\varepsilon_{1},\dots,\varepsilon_{t}\}$ be a finite sample of residuals.
Its empirical cumulative distribution function (ECDF) is defined as
\begin{equation}
    F_{{S}({\varepsilon})}(x)
    = \frac{1}{|S(\varepsilon)|}\sum_{t \in \mathcal{I}(S(\varepsilon))}\mathds{1}\{\varepsilon_t \le x\},
\end{equation}
where $\mathcal{I}(\cdot)$ denotes corresponding index set. Then, the \emph{KS statistic (distance)} between 
${S}({\varepsilon}_i)$ and ${S}({\varepsilon}_j)$ is defined as
\begin{equation}
\label{eq:ks-statistic}
    D_{\mathrm{KS}}(S(\varepsilon_i), S(\varepsilon_j))
    = \sup_{x\in\mathbb{R}}
    \bigl|F_{{S}({\varepsilon}_i)}(x)-F_{{S}({\varepsilon}_j)}(x)\bigr|.
\end{equation}
\end{definition}

Several studies~\cite{gong2012kolmogorov,professor_paper} advocate using the raw KS statistic as a direct measure of distributional similarity, avoiding reliance on $p$-values and temporal assumptions such as global stationarity. Compared to alternatives such as the Wasserstein distance (WD)~\cite{villani2009wasserstein}, KL divergence~\cite{kullback1951divergence}, or mutual information (MI)~\cite{shannon1948mutual}—which often require density estimation, shared support, or incur high computational cost—the KS statistic is a density-free criterion that enables efficient and stable discrimination between residuals under skewed distributions (Appendix~\ref{appx:motivation_ks}).

\section{Method}

\subsection{Split Conformal Prediction}
Given a multivariate time series $X$, our objective is to construct the narrowest possible prediction region $\hat{\mathcal{C}}_t^\alpha(x_{t+1})$ such that $y_{t+1} \in \hat{\mathcal{C}}_t^\alpha(x_{t+1})$ with probability at least $1 - \alpha$. The marginal and conditional coverage, as defined by \citet{vovk2005algorithmic}, are respectively given by:
\begin{align}
    \mathbb{P}\left(y_{t+1} \in \hat{\mathcal{C}}_t^\alpha\left(x_{t+1}\right)\right) & \geq 1 - \alpha \label{eq:marginal_cov} \\
    \mathbb{P}\left(y_{t+1} \in \hat{\mathcal{C}}_t^\alpha\left(x_{t+1}\right) | x_{t+1}\right) & \geq 1 - \alpha. \label{eq:cond_cov}
\end{align}

Originally introduced by \citet{papadopoulos2007conformal}, the split conformal prediction (SCP) framework partitions the data into two disjoint subsets $X = \left\{X^{(\mathrm{Tr})}, X^{(\mathrm{Cal})}\right\}$, where $X^{(\mathrm{Tr})}$ is used to train a predictor \( \phi \colon \mathbb{R}^d \to \mathbb{R} \) and $X^{(\mathrm{Cal})}$ is used for calibration. After training $\phi$ on $X^{(\mathrm{Tr})}$, nonconformity scores are computed on \( X^{(\mathrm{Cal})} \) as univariate absolute residuals: \(|\varepsilon_t| = |y_t - \phi(x_t)|\), which are then used to construct prediction intervals around the predictions from $\phi$.

However, SPCI \cite{xu2023spci} uses signed residuals to preserve directional information, allowing the model to learn asymmetric conditional quantiles. Following this setting, DistMatch adopts signed residuals to capture the full distributional structure, where \( \{\varepsilon_t\}_{t=1'}^{T} \in Z^{(c)} \) and $Z^{(c)}$ denotes the set of signed residuals from $X^{(\mathrm{Cal})}$. Let \( \hat{q}_{1 - \alpha} \) denote the  \( (1 - \alpha) \) empirical quantile of the residuals. The resulting CP interval for a new input \( x_{t+1} \) is given by:
\begin{equation}
\label{eq:cp-interval}
\hat{\mathcal{C}}^{\alpha}_t(x_{t+1})
=
\bigl[
  \hat{y}_{t+1} + \hat{q}_{\alpha/2},
  \ \hat{y}_{t+1} + \hat{q}_{1-\alpha/2}
\bigr].
\end{equation}

\subsection{DistMatch}
We propose DistMatch, a novel binning method that groups samples with bounded distributions from heterogeneous data. As illustrated in Figure~\ref{fig:architecture}, DistMatch first segments residuals into patches ($\tilde{\varepsilon}_{t}$) and pairs them with their corresponding target residuals ($\varepsilon_{t+1}$), forming patch-target pairs $(\tilde{\varepsilon}_{t}, \varepsilon_{t+1})$. These pairs are then passed through a binary matching tree, where recursive KS statistic–based matching scores partition them into approximately exchangeable leaves. Finally, each leaf is equipped with a quantile regressor that performs locally adaptive online quantile estimation.

\subsubsection{Patching}
Given $Z^{(c)}$, we employ a patching technique~\cite{nie2023patchtst} that enables KS-based comparisons over finite residual sets (Definition~\ref{def: ks}). We define each patch as \( \tilde{\varepsilon}_t = \left\{ \varepsilon_{t-w+1}, \varepsilon_{t - w + 2}, \ldots, \varepsilon_{t} \right\} \), using a sliding window of size $w \in \mathbb{Z}^+$. We then construct the input as a sequence of pairs $\mathcal{D} = \{(\tilde{\varepsilon}_t, \varepsilon_{t+1})\}_{t=w}^{T-1}$, where indices beyond $T-1$ correspond to unseen samples. Patching captures local temporal dependence, ensuring convergence and target-level approximate local exchangeability (Section~\ref{sec:theory}).

\subsubsection{Matching tree}
\label{sec:method_matching_tree}
Given $\mathcal{D}$, DistMatch builds a binary tree grouping similar residual patches.
In the absence of labels, we define a matching-based gain score ($\mathrm{MG}$) to guide tree construction, which quantifies how many samples are distributionally similar to a given candidate anchor $\tilde{\varepsilon}_i$, as defined by
\begin{equation}
\label{eq:match_ks}
\mathrm{MG}(\tilde{\varepsilon}_i)
= \sum_{j \in \mathcal{I}(\mathcal{D})} \mathds{1}\{{D}_{\mathrm{KS}}(\tilde{\varepsilon}_i, \tilde{\varepsilon}_j) \le \gamma\},
\end{equation}
where $D_{\mathrm{KS}}$ denotes the KS statistic between residual patches following Eq.~\eqref{eq:ks-statistic} and $\gamma \leq 0.1$ is a data-dependent threshold that controls the trade-off between within-leaf sample size and within-leaf empirical bounds.

As described in Algorithm~\ref{alg:distmatch}, DistMatch grows a binary tree by recursively partitioning $\mathcal{D}$ at each split node $N$. At a given node, a split anchor $s$ is selected to maximize the number of samples whose residual patches lie within a KS distance of $\gamma$ from the anchor. Samples matched to the anchor are routed to the right child $\mathcal{T}_R$, while the remaining samples are routed to the left child $\mathcal{T}_L$. The recursion terminates either when 1) no further splitting is necessary, or 2) no split satisfying the minimum leaf size $n_{\min}$ is possible, yielding leaves $\{\mathcal{L}\}_{k=1}^K$ that satisfy approximate local exchangeability (Section~\ref{sec:theory}). Note that DistMatch bootstraps $B$ trees with ratio $\theta$ to enhance robustness under shifts.

\subsubsection{Quantile Regression Models}
Given $\{\mathcal{L}\}_{k=1}^K$, DistMatch independently applies a quantile regressor within each leaf. We adopt a Quantile Regression Forest (QRF) \cite{meinshausen2006quantile}, which preserves the local empirical distribution through tree-based partitioning (Appendix \ref{sec:ablation_studies_qrf}). As illustrated in Figure~\ref{fig:qrf}, it defines the conditional empirical distribution of the target residual as
\begin{equation}
\hat{F}_{\mathcal{L}_{(\cdot)}}(y \mid \tilde{\varepsilon}_t)
=
\sum_{t \in \mathcal{I}(\mathcal{L}_{(\cdot)})}
w_t(\tilde{\varepsilon}_t)\,
\mathds{1}\{\varepsilon_{t+1} \le y\},
\end{equation}
where $w_t(\tilde{\varepsilon}_t)$ satisfies $\sum_{t \in \mathcal{I}(\mathcal{L}_{(\cdot)})} w_t(\tilde{\varepsilon}_t) = 1$.
Then, $\tau$-quantile is given by
\begin{equation}
\hat{q}_{\tau}^{\mathcal{L}_{(\cdot)}}(\tilde{\varepsilon}_t)
=
\inf\{y \in \mathbb{R} : \hat{F}_{\mathcal{L}_{(\cdot)}}(y \mid \tilde{\varepsilon}_t) \ge \tau\}.
\end{equation}

\begin{algorithm}
\caption{DistMatch Training}
\label{alg:distmatch}
\begin{algorithmic}[1]
    \REQUIRE Dataset $\mathcal{D} = \{(\tilde{\varepsilon}_t, \varepsilon_{t+1})\}_{t=w}^{T-1}$, KS threshold $\gamma$
    \ENSURE Matched Binary Tree ($\{N\} \in \mathcal{T}$)
    
    \FUNCTION{DistMatch($\mathcal{D}$)}
        \FOR{$i = 1$ \textbf{to} $|\mathcal{D}|$}
            \STATE $\mathcal{I}_i \leftarrow \{ j \in \{1, \dots, |\mathcal{D}|\}: \text{solve Eq.} \eqref{eq:match_ks} \}$;

        \ENDFOR
        \STATE $i^\star \leftarrow \arg\max_{1 \le i \le |\mathcal{D}|} |\mathcal{I}_i|$;
        \STATE $s \leftarrow \tilde{\varepsilon}_{i^\star}$;
        
        \IF{$|D| - |I_{i^\star}| < n_{\min}$ \textbf{or} $|\mathcal{I}_{i^\star}| = |\mathcal{D}|$}
            \STATE \textbf{Return }\textit{Leaf node ($\mathcal{L}=\mathcal{D}$)}
        \ENDIF
            
        \STATE $\mathcal{D}_R \leftarrow \{(\tilde{\varepsilon}_j, \varepsilon_{j+1}) : j \in \mathcal{I}_{i^\star}\}$;
        \STATE $\mathcal{D}_L \leftarrow \{(\tilde{\varepsilon}_{j'}, \varepsilon_{j'+1}) : j' \notin \mathcal{I}_{i^\star}\}$;
        \STATE $\mathcal{T}_R$ $\leftarrow$ \textbf{DistMatch}($\mathcal{D}_R$);
        \STATE $\mathcal{T}_L$ $\leftarrow$ \textbf{DistMatch}($\mathcal{D}_L$);
        \STATE \textbf{Return } $N(s, \mathcal{T}_R, \mathcal{T}_L)$
    \ENDFUNCTION
\end{algorithmic}
\end{algorithm}

\begin{figure}[t]
    \centering
    \includegraphics[width=0.99\linewidth]{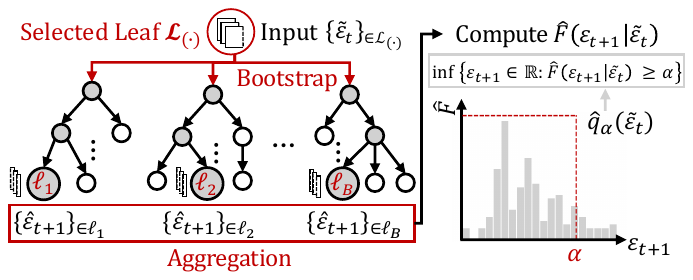}
    \caption{Given any arbitrary leaf $\mathcal{L}_{(\cdot)}$ from DistMatch, the QRF builds a forest by bootstrapping the input patches $\{\tilde{\varepsilon}_{t}\}_{\in \mathcal{L}_{(\cdot)}}$ and applies a quantile operator based on conditional ECDFs to estimate the target residual $\{\varepsilon_{t+1}\}_{\in \mathcal{L}_{(\cdot)}}$ via binning.}
    \label{fig:qrf}
\end{figure}

\subsubsection{Inference}
\label{sec:inference}
At inference time, as shown in Figure~\ref{fig:main-architecture}, the learned tree routes the unseen residual patch $\tilde{\varepsilon}_{T}$ by recursively comparing its distribution to the split anchors ($s_{(\cdot)}$) at each internal node ($N_{(\cdot)}$). Once a leaf node ($\mathcal{L}_{(\cdot)}$) is reached, the stored residual pairs are passed to a QRF, which is trained on $\{\tilde{\varepsilon}_t, \varepsilon_{t+1}\}_{\in \mathcal{L}_{(\cdot)}}$ and then used to estimate $\hat{\varepsilon}_{T+1}$. We further adopt the adaptive refinement strategy proposed by \citet{xu2023spci} to reduce unnecessary interval width under asymmetric or moderately shifted residual distributions, solving:
\begin{equation}
\label{eq:quantile_bin_beta}
    \delta^\star = \mathrm{argmin}_{\delta \in [0,\alpha]} \left| \hat{q}_{1 - \alpha + \delta}(\tilde{\varepsilon}_T) - \hat{q}_{\delta}(\tilde{\varepsilon}_T) \right|.
\end{equation}
\begin{equation}
\label{eq:adaptive_quantile_bin}
    \hat{\mathcal{C}}_{T}^{\alpha}(x_{T+1}) = \left[ \hat{y}_{T+1} + \hat{q}_{\delta^\star}(\tilde{\varepsilon}_T),\ \hat{y}_{T+1} + \hat{q}_{1 - \alpha + \delta^\star}(\tilde{\varepsilon}_T) \right].
\end{equation}
Finally, we append the true target residual $\varepsilon_{T+1}$ to the assigned leaf ($\mathcal{L}_{(\cdot)}$), enabling continuous online updates of the QRF. By retraining the QRF after each new instance, this procedure continuously refines the local residual distributions within each leaf (Appendix \ref{appx:tradeoff_analysis}).
\section{Theory}
\label{sec:theory}
We analyze the distributional structure induced by recursive KS-based splitting and show that it suffices for valid coverage under the local temporal assumption.

\subsection{Approximate Local Exchangeability}
We first introduce KS-based approximate local exchangeability, which requires only within-leaf distributional similarity rather than global exchangeability.

\begin{definition}[$\gamma^\star$-Approximate Local Exchangeability]
\label{def:approx_exchangeability}
Let $\mathcal{L}_{(k^\star)}$ denote the leaf selected by $\tilde{\varepsilon}_T$. The target residuals $\{\varepsilon_{t+1}\}_{t \in \mathcal{I}(\mathcal{L}_{(k^\star)})}$ are $\gamma^\star$-approximately locally exchangeable with respect to the unseen target $\varepsilon_{T+1}$ if
\begin{equation}
\label{eq:approx_exch}
\max_{t \in \mathcal{I}(\mathcal{L}_{(k^\star)})} D_{\mathrm{KS}}(P_{t+1}, P_{T+1}) \le \gamma^\star,
\end{equation}
where $P_{t+1}$ and $P_{T+1}$ denote the distributions of $\varepsilon_{t+1}$ and $\varepsilon_{T+1}$, respectively. Notably, $\gamma^\star=0$ recovers the i.i.d. case, while $\gamma^\star>0$ allows mild shifts between samples. 
\end{definition}

Definition~\ref{def:approx_exchangeability} is well aligned with quantile estimation, as it directly controls discrepancies in cumulative distribution functions (Appendix~\ref{appx:def_discussion}). We further introduce local temporal assumptions to ensure that KS-based splitting meaningfully separates heterogeneous patch distributions into approximately exchangeable leaves at the target level.

\begin{assumption}[Local Stationarity and Patch Mixing]
\label{assum:mild_dependence} 
The observed process is locally stationary in the sense of \cite{dahlhaus1997fitting}, and the predictor $\phi$ is a smooth mapping, thus the residual process 
$\{\varepsilon_t = y_t - \phi(x_t)\}$ inherits local stationarity. 
Consequently, the patch sequence $\{\tilde{\varepsilon}_t\}$ is $\beta$-mixing 
with coefficients $\{\beta_m\}_{m\ge 1}$ satisfying 
$\sum \beta_m^{1/2} < \infty$. 
In addition, there exists a conditional distribution kernel $\mathcal{K}$ with 
$\varepsilon_{t+1}\mid \tilde{\varepsilon}_t \sim \mathcal{K}(\tilde{\varepsilon}_t)$ 
that is KS-stable: for any patch distributions $\tilde{P}, \tilde{Q}$,
\begin{equation}
D_{\mathrm{KS}}\!\left(\tilde{P}\mathcal{K},\, \tilde{Q}\mathcal{K}\right)
\le C\, D_{\mathrm{KS}}(\tilde{P},\tilde{Q}),
\end{equation}
where $\tilde{P}\mathcal{K}$ denotes the marginal distribution of 
$\varepsilon_{t+1}$ when $\tilde{\varepsilon}_t \sim \tilde{P}$.
\end{assumption}

\begin{remark}
\label{rem:ls_to_mixing}
Local stationarity \cite{dahlhaus1997fitting} is standard for sliding-window analyses 
of nonstationary time series and provides both physical and theoretical 
grounding for the $\beta$-mixing condition. 
Under the \citet{dahlhaus1997fitting} framework, the joint distribution of two well-separated, non-overlapping windows factorizes asymptotically as $T\to\infty$; thus, cross-window cumulants vanish with separation. In finite samples, this 
manifests as the joint distribution of two distant patches approaching the 
product of their marginals, ruling out infinitely long memory and supporting 
the decay of $\beta_m$ as $m$ grows. 
For overlapping patches generated by a sliding window of size $w$, dependence 
between $\tilde{\varepsilon}_{t_1}$ and $\tilde{\varepsilon}_{t_2}$ is 
mechanically induced by shared elements only when $m=|t_1-t_2|\le w$; once $m>w$ the patches are disjoint and the residual dependence reflects solely the underlying process's decaying memory, which diminishes under local stationarity in a manner consistent with $\beta$-mixing.
\end{remark}

\subsection{Convergence of the Matching Tree}
We now formalize the convergence behavior of the KS-based matching tree at the patch level.

\begin{theorem}[Convergence]
\label{thm:convergence}
Suppose Assumption~\ref{assum:mild_dependence} holds. 
For any initial calibration set, Algorithm~\ref{alg:distmatch} satisfies:

\textbf{1. (Approximate local exchangeability)} Any matched terminal leaf $\mathcal{L}_{(\cdot)}$ satisfies
    \begin{equation}
        \max_{i,j \in \mathcal{I}(\mathcal{L}_{(\cdot)})}
        D_{\mathrm{KS}}(P_i, P_j) \le 2\gamma.
    \end{equation}

\textbf{2. (Minimum leaf size)} All leaves satisfy $|\mathcal{L}_{(\cdot)}| \ge n_{\min}$.
\end{theorem}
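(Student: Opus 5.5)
I will prove the two parts separately, starting with termination of Algorithm~\ref{alg:distmatch}. At any internal node the matched set $\mathcal{I}_{i^\star}$ always contains the anchor itself, since $D_{\mathrm{KS}}(\tilde{\varepsilon}_{i^\star},\tilde{\varepsilon}_{i^\star})=0\le\gamma$. The algorithm splits only when $|\mathcal{I}_{i^\star}|<|\mathcal{D}|$, so both children are nonempty and strictly smaller than the parent. The recursion therefore halts after finitely many steps for any finite initial calibration set.

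For part 1, I interpret $P_i$ at the patch level as the empirical distribution of the patch $\tilde{\varepsilon}_i$, which is the quantity the algorithm actually controls. A \emph{matched} terminal leaf is one reached through at least one right branch. Let $s$ be the anchor of the last right branch on the path to it. Every sample in the leaf belongs to $\mathcal{D}_R$ of that node, so $D_{\mathrm{KS}}(\tilde{\varepsilon}_i,s)\le\gamma$ for every $i$ in the leaf. Routing within a right subtree only removes samples, so this bound survives all later splits. Since $D_{\mathrm{KS}}$ is the sup-norm distance between ECDFs, it is a pseudometric, and the triangle inequality gives
\begin{equation*}
D_{\mathrm{KS}}(\tilde{\varepsilon}_i,\tilde{\varepsilon}_j)\le D_{\mathrm{KS}}(\tilde{\varepsilon}_i,s)+D_{\mathrm{KS}}(s,\tilde{\varepsilon}_j)\le 2\gamma .
\end{equation*}
If $P_i$ is instead read as the population patch law, I would add an estimation step. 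Assumption~\ref{assum:mild_dependence} supplies $\beta$-mixing with $\sum\beta_m^{1/2}<\infty$, so a DKW-type concentration bound for ECDFs of dependent data holds (for example via a Berbee coupling into blocks). This yields $2\gamma$ up to a vanishing term. I expect this to be the main obstacle: each patch has only $w$ points, so the ECDF-to-law gap is not small for fixed $w$. I would therefore present the bound at the empirical patch level and remark that the population version holds asymptotically in $w$. The target-level guarantee, obtained through the KS-stability constant $C$, belongs to a later result and is not needed here.

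For part 2, I argue by induction on the recursion, with the invariant that every node ever passed to \textsc{DistMatch} has at least $n_{\min}$ samples, and I assume the initial calibration set satisfies $|\mathcal{D}|\ge n_{\min}$. A node becomes a leaf with $\mathcal{L}=\mathcal{D}$, so the bound transfers directly. A split happens only when $|\mathcal{D}|-|\mathcal{I}_{i^\star}|\ge n_{\min}$, which handles the left child. The right child $\mathcal{D}_R$ has size $|\mathcal{I}_{i^\star}|$, and because $i^\star$ maximizes the match count (MG), $|\mathcal{I}_{i^\star}|\ge|\mathcal{I}_j|$ for every $j$.

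This maximality alone does not bound $|\mathcal{I}_{i^\star}|$ below by $n_{\min}$. The clean fix is to read the stopping rule as ``no split with both children of size at least $n_{\min}$,'' as the prose in Section~\ref{sec:method_matching_tree} states, that is, $\min(|\mathcal{I}_{i^\star}|,|\mathcal{D}|-|\mathcal{I}_{i^\star}|)<n_{\min}$. With that reading, both children inherit the invariant and part 2 follows. I would state explicitly that this is the version of the check being analyzed.
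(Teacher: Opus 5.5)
Your argument is the paper's. Part 1 is the same triangle inequality through a split anchor, with $D_{\mathrm{KS}}(P_i,P_j)$ read as $D_{\mathrm{KS}}(\tilde{\varepsilon}_i,\tilde{\varepsilon}_j)$, exactly as the paper does; its proof never uses Assumption~\ref{assum:mild_dependence}. Part 2 is the same induction over nodes. Your extras are refinements the paper leaves implicit:
\begin{itemize}
\item the termination argument;
\item anchoring at the last right branch, which covers leaves created by the size test, where the anchor computed at the leaf's own node need not match every sample. A leaf created by the all-match test, even the leftmost one, is covered the same way by that node's own anchor.
\end{itemize}

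On Part 2 your caution is warranted, and this is where you depart from the paper. In its Case~3 the paper asserts that negating the line-7 test yields $|\mathcal{I}_{i^\star}|>n_{\min}$. The negation only gives $|\mathcal{D}|-|\mathcal{I}_{i^\star}|\ge n_{\min}$ and $|\mathcal{I}_{i^\star}|<|\mathcal{D}|$.

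The literal claim in fact fails for $n_{\min}\ge 2$. Take $n_{\min}=2$ and three patches pairwise more than $\gamma$ apart. Every match set is a singleton, and neither termination condition holds ($3-1\ge 2$ and $1\neq 3$). The node therefore splits, and its right child becomes a leaf of size $1$.

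So analyzing the prose stopping rule $\min(|\mathcal{I}_{i^\star}|,|\mathcal{D}|-|\mathcal{I}_{i^\star}|)<n_{\min}$ is the necessary repair, not just a convenience. Your nonemptiness observation also proves Part 2 for the algorithm as written whenever $n_{\min}\le 1$. That case includes the default $n_{\min}=0$ used in the experiments, so it is worth stating.
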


\begin{proof}
    Appendix~\ref{appx:theorem_contraction}.
\end{proof}

\begin{remark}
    Samples that do not satisfy the matching criterion are routed to a leftmost leaf. Empirically, these samples constitute around or less than 1\% of the calibration set (Appendix~\ref{appx:ood}).
\end{remark}
\label{remark:ood}

Theorem~\ref{thm:convergence} shows that terminal leaves exhibit uniformly bounded patch-level distributions, while unmatched branches continue recursive splitting until they either produce a matched leaf or reach the minimum leaf size $n_{\min}$. Consequently, the tree construction KS threshold theoretically bounds the within-leaf KS diameter up to $2\gamma$, ensuring patch-level approximate local exchangeability.

\subsection{Coverage Guarantee via Distributional Stability}
We now provide a theoretical coverage guarantee for DistMatch based on a matching tree constructed from a fixed calibration set with adaptive quantile refinement.

\begin{theorem}[Coverage Guarantee]
\label{thm:coverage_guarantee}
Suppose Assumption~\ref{assum:mild_dependence} holds. Let $\mathcal{L}_{(k^\star)}$ be the leaf selected by the unseen patch $\tilde{\varepsilon}_T$ among $\{\mathcal{L}_k\}_{k=1}^K$, and let $\hat{q}^{\mathcal{L}_{(k^\star)}}_{1-\alpha+\delta^\star}$ be any consistent estimator of the $(1-\alpha+\delta^\star)$-conditional quantile constructed within $\mathcal{L}_{(k^\star)}$, with $\delta^\star \in [0,\alpha]$ from Eq.~\eqref{eq:quantile_bin_beta}. Define the within-leaf target divergence
\begin{equation}
\label{eq:within-leaf-divergence}
m_{\mathcal{L}_{(k^\star)}} := \max_{t\in\mathcal{I}(\mathcal{L}_{(k^\star)})} D_{\mathrm{KS}}(P_{t+1}, P_{T+1}) \;\le\; 2C\gamma + \mathcal{O}(\sigma_{\mathrm{mix}}),
\end{equation}
where $C=\mathcal{O}(1)$ depends only on the regularity of the conditional kernel $\mathcal{K}$ and $\sigma_{\mathrm{mix}} = \mathcal{O}\!\big(\sum_{m\ge 1}\sqrt{\beta_m}\big)$. Then, with probability at least $1-\xi$,
\begin{equation}
\label{eq:bound}
\begin{aligned}
\Big|\mathbb{P}\!\left(\varepsilon_{T+1} \le \hat{q}^{\mathcal{L}_{(k^\star)}}_{1-\alpha+\delta^\star}\right) - (1-\alpha)\Big| \;\le\;
& m_{\mathcal{L}_{(k^\star)}} + \varrho_{\mathcal{L}_{(k^\star)}}(\xi) \\
& + \delta^\star + 1/|\mathcal{L}_{(k^\star)}|.
\end{aligned}
\end{equation}
where $\varrho_{\mathcal{L}_{(k^\star)}}(\xi) = \mathcal{O}\!\big(\sigma_{\mathrm{mix}}\sqrt{\log(1/\xi)/|\mathcal{L}_{(k^\star)}|}\big)$.
\end{theorem}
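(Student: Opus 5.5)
The plan is to split the coverage error into four pieces by the triangle inequality, one for each term on the right of Eq.~\eqref{eq:bound}. Write $n:=|\mathcal{L}_{(k^\star)}|$ and $\tau:=1-\alpha+\delta^\star$. Let $\bar F:=\frac1n\sum_{t\in\mathcal{I}(\mathcal{L}_{(k^\star)})}F_{t+1}$ be the averaged target CDF in the leaf, and let $\hat F$ be the within-leaf estimator. Let $F_{T+1}$ be the CDF of $\varepsilon_{T+1}$ and set $\hat q:=\hat q^{\mathcal{L}_{(k^\star)}}_{\tau}$. Then
\begin{equation*}
\bigl|F_{T+1}(\hat q)-(1-\alpha)\bigr| \le \bigl|F_{T+1}(\hat q)-\bar F(\hat q)\bigr| + \bigl|\bar F(\hat q)-\hat F(\hat q)\bigr| + \bigl|\hat F(\hat q)-\tau\bigr| + |\tau-(1-\alpha)|.
\end{equation*}
The last term is exactly $\delta^\star$. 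The third term is at most $1/n$, because $\hat F$ is a step function with jumps of size $1/n$ (or at most of order the largest QRF weight, which I would take as $\mathcal{O}(1/n)$ for a consistent estimator). By the definition of $\hat q$ as an infimum, $\tau\le\hat F(\hat q)<\tau+1/n$.

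\textbf{Divergence term.} For the first term, I use that $\bar F$ is a convex combination of the $F_{t+1}$, so $\sup_x|F_{T+1}-\bar F|\le\max_t D_{\mathrm{KS}}(P_{t+1},P_{T+1})=m_{\mathcal{L}_{(k^\star)}}$. I then justify the bound $m_{\mathcal{L}_{(k^\star)}}\le 2C\gamma+\mathcal{O}(\sigma_{\mathrm{mix}})$ in Eq.~\eqref{eq:within-leaf-divergence}. By Theorem~\ref{thm:convergence}, the patch laws in the leaf have KS diameter at most $2\gamma$. The unseen patch $\tilde\varepsilon_T$ was routed by the same anchors, so its law is within the same diameter, up to a perturbation that accounts for the gap between population and empirical KS. I would bound that perturbation by $\mathcal{O}(\sigma_{\mathrm{mix}})$ via a $\beta$-mixing Dvoretzky–Kiefer–Wolfowitz-type inequality. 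Applying the KS-stability of $\mathcal{K}$ from Assumption~\ref{assum:mild_dependence} with $\tilde P\mathcal{K}=P_{t+1}$ and $\tilde Q\mathcal{K}=P_{T+1}$ then gives the factor $C$.

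\textbf{Concentration term.} For the second term I need a uniform concentration $\sup_x|\hat F(x)-\bar F(x)|\le\varrho_{\mathcal{L}_{(k^\star)}}(\xi)$ with probability $1-\xi$ for dependent, non-identically distributed data. I would use Berbee's coupling, blocking the leaf indices into alternating blocks and replacing the dependent blocks by independent copies at a cost controlled by $\sum_m\beta_m$. I then apply a Bernstein/DKW bound to the independent blocks. Tracking variances with the summable $\sqrt{\beta_m}$ (the Rio covariance inequality) yields a variance proxy $\sigma^2_{\mathrm{mix}}$ and the rate $\sigma_{\mathrm{mix}}\sqrt{\log(1/\xi)/n}$. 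For QRF weights I would treat $\hat F$ as a consistent estimator and absorb its deviation from the unweighted ECDF into this term.

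\textbf{Main obstacle.} The hardest step is this concentration step, because leaf membership depends on the data: the tree is built on the same calibration set, so $\mathcal{I}(\mathcal{L}_{(k^\star)})$ is random. I would first condition on the tree structure, which is determined by the patches, and argue that conditional on the patches the targets are drawn via $\mathcal{K}$. The mixing concentration then applies to the targets within a fixed index set. If that conditioning fails, the fallback is a union bound over the at most $K$ leaves, replacing $\xi$ by $\xi/K$, which only changes constants inside the $\mathcal{O}(\cdot)$. Summing the four terms gives Eq.~\eqref{eq:bound}.
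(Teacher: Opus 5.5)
Your four-term split is the paper's argument made explicit. The paper also gets $D_{\mathrm{KS}}(\tilde{\varepsilon}_t,\tilde{\varepsilon}_T)\le 2\gamma$ through the shared anchor (on a matched leaf) and pushes this to the targets by KS-stability with an $\mathcal{O}(\sigma_{\mathrm{mix}})$ slack. It then splits $\sup_x|\hat F-F_{T+1}|$ into a bias part, bounded by convexity of $\bar F$, and a concentration part, and gets $\delta^\star+1/|\mathcal{L}_{(k^\star)}|$ from the empirical quantile.

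The genuine gap is the step you flag as the main obstacle, and neither of your repairs closes it.
\begin{itemize}
\item \emph{Conditioning on the patches is degenerate.} With a sliding window, $\varepsilon_{t+1}$ is the last coordinate of $\tilde{\varepsilon}_{t+1}$. So given the calibration patches, every leaf target except $\varepsilon_T$ is already fixed, and $\varepsilon_T$ sits inside $\tilde{\varepsilon}_T$. The targets are not ``drawn via $\mathcal{K}$'' given the patches, and nothing random is left to concentrate.
\item \emph{The union bound over the $K$ realized leaves is invalid}, because those leaves are chosen by the data. A union bound must run over a data-independent family. Even a fixed anchor path yields a random index set (a KS-ball around a random patch), and a union over all subsets destroys the rate.
\item \emph{$\bar F$ is the wrong centering for a selected subsample.} Membership of $t$ is decided by $\tilde{\varepsilon}_t$, which is coupled to $\varepsilon_{t+1}$ through $\mathcal{K}$. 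Take a stationary, persistent AR(1) residual process. Then $P_{t+1}=P_{T+1}$ for all $t$, so $m_{\mathcal{L}_{(k^\star)}}=0$ and $\bar F=F_{T+1}$. Yet a leaf collecting high-level patches has targets whose ECDF stays bounded away from $\bar F$ in KS distance however large the leaf grows.
\end{itemize}
So $\sup_x|\hat F-\bar F|\le\varrho_{\mathcal{L}_{(k^\star)}}(\xi)$ fails in general. For the same reason, $\mathbb{P}(\varepsilon_{T+1}\le\hat q)$ cannot be equated with $F_{T+1}(\hat q)$ when $\hat q$ is chosen through $\tilde{\varepsilon}_T$.

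The paper's proof has the same hole: its Step~4 applies Rio's $\beta$-mixing concentration to $\mathcal{I}(\mathcal{L}_{(k^\star)})$ as though it were a fixed index set. Your proposal therefore matches the paper only by making that same unstated assumption, not by the fixes you offer. A sound version would compare the leaf ECDF with the average of the conditional laws $\mathcal{K}(\tilde{\varepsilon}_t)$ and measure coverage under $\mathcal{K}(\tilde{\varepsilon}_T)$. The bias then becomes $\max_t D_{\mathrm{KS}}(\mathcal{K}(\tilde{\varepsilon}_t),\mathcal{K}(\tilde{\varepsilon}_T))\le 2C\gamma$ by KS-stability read on realized patches. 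It would also decouple tree construction from the targets, e.g.\ by building the tree on an earlier block and using a martingale (Freedman-type) bound on a later one.

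There are two smaller points.
\begin{itemize}
\item The paper takes $\hat q$ to be the unweighted leaf quantile, and you should too. A QRF conditional CDF differs from the leaf ECDF by an amount that does not shrink with $|\mathcal{L}_{(k^\star)}|$, so it cannot be absorbed into $\varrho_{\mathcal{L}_{(k^\star)}}(\xi)$, and its jumps need not be $1/|\mathcal{L}_{(k^\star)}|$.
\item Your reading of the $\mathcal{O}(\sigma_{\mathrm{mix}})$ slack as a DKW error differs from the paper's, which feeds the realized patches directly into KS-stability. If you keep it, it is really of order $\sigma_{\mathrm{mix}}\sqrt{\log(|\mathcal{L}_{(k^\star)}|/\xi)/w}$. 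It also holds only on an event that must be charged to $\xi$, and it controls the marginal law of the values inside a patch rather than the patch law on which KS-stability is stated.
\end{itemize}
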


\begin{proof}
Appendix~\ref{appx:theorem_coverage}.
\end{proof}

\begin{remark}
Theorem~\ref{thm:coverage_guarantee} decomposes the coverage error into a bias 
term $m_{\mathcal{L}}$, which decreases with finer partitioning (smaller $\gamma$), 
and a variance term $\varrho_{\mathcal{L}}$, which increases due to finite-sample 
effects as $|\mathcal{L}|$ shrinks. Given the KS threshold $\gamma$, DistMatch balances this trade-off by stopping splits when further reductions in $m_{\mathcal{L}}$ are outweighed by increases in $\varrho_{\mathcal{L}}$, where $\gamma$ is optimized using only the calibration set.
\end{remark}

\begin{remark}
\label{rem:patch_target}
Definition~\ref{def:approx_exchangeability} characterizes exchangeability at the \emph{target level}, while Algorithm~\ref{alg:distmatch} routes based on \emph{patch-level} distributions. Under $\beta$-mixing, delayed dependence decays with patch size $w$, so the future target residual distribution evolves smoothly relative to the patch distribution. The KS bound thus enforces uniform closeness between the unseen and leaf-wise target distributions, yielding only a controlled additive coverage error. The $\mathcal{O}(\sigma_{\mathrm{mix}})$ term, which quantifies the propagation error from temporal dependence, diminishes as $\{\beta_m\}$ decay faster. 
\end{remark}

\subsection{Bootstrapping and Online Updating for Robustness}
\label{sec:prop_ood}

Building on the fixed tree structure, we finally show that bootstrap averaging extends the marginal guarantee of Theorem~\ref{thm:coverage_guarantee} to the online sequence. Beyond Assumption \ref{assum:mild_dependence}, we additionally require exponential mixing decay, bootstrap routing coverage at a nontrivial probability level, and stability of quantile averaging across the ensemble.

\begin{assumption}[Fast Mixing Rate]
\label{assump:fast-mixing}
The $\beta$-mixing coefficients from Assumption~\ref{assum:mild_dependence} decay exponentially:
\begin{equation}
\beta_m
\le
C_{\beta} e^{-\lambda m},
\end{equation}
for some constants $\lambda > 0$ and $C_{\beta} > 0$.
\end{assumption}

\begin{assumption}[Bootstrap Diversity]
\label{assump:diversity}
Let $\{\mathcal{T}^{(b)}\}_{b=1}^B$ denote a bootstrap ensemble of matching trees
constructed with sampling ratio $\theta \in (0,1]$.
For any unseen patch $\tilde{\varepsilon}_T$, with probability at least $p_{\min}$
over the bootstrap randomness and tree construction,
there exists at least one tree that routes $\tilde{\varepsilon}_T$
to a matched leaf:
\begin{equation}
\mathbb{P}
\left(
\exists\, b \in \{1,\ldots,B\}
\;\text{s.t.}\;
D_{\mathrm{KS}}(\tilde{\varepsilon}_T, s^{(b)}) \le \gamma
\right)
\ge
p_{\min},
\end{equation}
where $s^{(b)}$ denotes the split anchor associated with the selected leaf in tree $b$.
\end{assumption}

\begin{assumption}[Quantile Averaging Stability]
\label{assump:quantile-averaging}
Let $\{q^{(b)}\}_{b=1}^B$ denote the $\tau$-quantile estimators obtained from the bootstrap ensemble,
and define the averaged quantile
\begin{equation}
\bar{q}
=
\frac{1}{B}
\sum_{b=1}^B
q^{(b)}.
\end{equation}
There exists a constant $\kappa_{\mathrm{avg}} = O(B^{-1})$ such that
\begin{equation}
\left|
\mathbb{P}(Y \le \bar{q}) - \tau
\right|
\le
\frac{1}{B}
\sum_{b=1}^B
\left|
\mathbb{P}(Y \le q^{(b)}) - \tau
\right|
+
\kappa_{\mathrm{avg}}.
\end{equation}
\end{assumption}

Assumptions~\ref{assump:fast-mixing}--\ref{assump:quantile-averaging} ensure 1) a sharpened concentration rate, 2) that most unseen patches reach a matched right leaf, and 3) controlled aggregation across the ensemble.

\begin{proposition}[Online Robust Coverage via Bootstrap Averaging]
\label{prop:online-coverage}
Let $\{\mathcal{T}^{(b)}\}_{b=1}^B$ be a bootstrap ensemble of matching trees on a fixed calibration set, and define the ensemble quantile $\hat{q}^{\mathrm{ens}}_{\tau}(\tilde{\varepsilon}_t) := B^{-1}\sum_{b=1}^B \hat{q}^{\,\mathcal{L}^{(b)}_t}_{\tau}(\tilde{\varepsilon}_t)$, where $\mathcal{L}^{(b)}_t$ is the leaf selected by tree $b$ for $\tilde{\varepsilon}_t$. For $t = T,\ldots,T+n_{\mathrm{test}}-1$, form
\begin{equation}
\label{eq:online-interval}
\hat{\mathcal{C}}_t^{\alpha} = \big[\hat{y}_{t+1} + \hat{q}^{\mathrm{ens}}_{\delta^\star}(\tilde{\varepsilon}_t),\;\hat{y}_{t+1} + \hat{q}^{\mathrm{ens}}_{1-\alpha+\delta^\star}(\tilde{\varepsilon}_t)\big],
\end{equation}
with $\delta^\star \in [0,\alpha]$ given by Eq.~\eqref{eq:quantile_bin_beta}, and after observing $\varepsilon_{t+1}$ append it to $\mathcal{L}^{(b)}_t$ and retrain.
Under Assumptions~\ref{assum:mild_dependence} and \ref{assump:fast-mixing}--\ref{assump:quantile-averaging}, with probability at least $1-\xi$ over the calibration sampling and online process,
\begin{equation}
\frac{1}{n_{\mathrm{test}}}\sum_{t=T}^{T+n_{\mathrm{test}}-1}\mathbb{1}\{y_{t+1}\in\hat{C}_t^{\alpha}\} \;\ge\; 1-\alpha-\epsilon_{\mathrm{online}},
\end{equation}
where
\begin{equation}
\begin{aligned}
\label{eq:epsilon-online}
\epsilon_{\mathrm{online}} \le\;
& (1-p_{\min})m_{\max} + 2C\gamma + O(\sigma_{\mathrm{mix}}) \\
& + O\!\left(\sqrt{\tfrac{\log(n_{\mathrm{test}}/\xi)}{|\mathcal{L}|_{\min}}}\right) \\
& + O(\sqrt{\beta_w}) + O(B^{-1}) + \delta^\star.
\end{aligned}
\end{equation}
with $|\mathcal{L}|_{\min}:=\min_{b,t}|\mathcal{L}^{(b)}_t|$, $m_{\max}:=\max_{b,t} m_{\mathcal{L}^{(b)}_t}$, $\sigma_{\mathrm{mix}}=O\!\big(\sum_{m\ge 1}\sqrt{\beta_m}\big)$, and $w$ the patch window size.
\end{proposition}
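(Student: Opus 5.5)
The plan is to reduce the empirical online coverage to a per-step conditional coverage statement, and then concentrate the average. Write $Z_t := \mathbb{1}\{y_{t+1}\in\hat{\mathcal{C}}_t^{\alpha}\}$ and $\pi_t := \mathbb{P}(Z_t=1\mid \mathcal{F}_t)$, where $\mathcal{F}_t$ is the $\sigma$-field generated by the calibration set, the bootstrap randomness, and the online history up to time $t$. This makes the interval, the selected leaves $\mathcal{L}^{(b)}_t$, and the retrained QRFs $\mathcal{F}_t$-measurable. The decomposition is
\begin{equation*}
\frac{1}{n_{\mathrm{test}}}\sum_t Z_t \;=\; \frac{1}{n_{\mathrm{test}}}\sum_t \pi_t \;+\; \frac{1}{n_{\mathrm{test}}}\sum_t (Z_t-\pi_t).
\end{equation*}
It suffices to show $\pi_t \ge 1-\alpha-(\text{bias terms})$ uniformly in $t$, and that the fluctuation sum is small with probability $1-\xi$.

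\textbf{Per-step bias.} Fix $t$ and a tree $b$. I split the interval into its two endpoints. I apply Theorem~\ref{thm:coverage_guarantee}, with $T$ replaced by $t$ and the leaf $\mathcal{L}^{(b)}_t$, to the quantiles $\hat q_{1-\alpha+\delta^\star}$ and $\hat q_{\delta^\star}$, and subtract the two CDF statements. This gives per-tree coverage error at most $m_{\mathcal{L}^{(b)}_t}+\varrho+\delta^\star+1/|\mathcal{L}|$.

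Next I condition on the event of Assumption~\ref{assump:diversity}.

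\begin{itemize}
\item \emph{On the event} (probability $\ge p_{\min}$), the unseen patch is within $\gamma$ of the anchor of a matched leaf. Every patch in that leaf is also within $\gamma$ of the anchor, so the patch-level KS diameter is $2\gamma$, as in Theorem~\ref{thm:convergence}. KS-stability of $\mathcal{K}$ in Assumption~\ref{assum:mild_dependence} then gives target divergence $\le 2C\gamma$. The patches are only $\beta$-mixing, not independent, and this dependence adds the $O(\sigma_{\mathrm{mix}})$ term. The overlap between the test patch and the most recent in-leaf patches is what I expect to produce $O(\sqrt{\beta_w})$: via Berbee's coupling, patches separated by more than $w$ can be replaced by independent copies at total-variation cost $\beta_w$, which becomes $\sqrt{\beta_w}$ after the Cauchy–Schwarz step used in the concentration.
\item \emph{Off the event}, I can only use the crude bound $m_{\max}$, weighted by $1-p_{\min}$.
\end{itemize}

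Finally, Assumption~\ref{assump:quantile-averaging} passes from the per-tree errors to the ensemble average $\hat q^{\mathrm{ens}}$ at an extra cost $\kappa_{\mathrm{avg}}=O(B^{-1})$, applied to each endpoint.

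\textbf{Concentration.} The variance term $\varrho$ must hold simultaneously for all $n_{\mathrm{test}}$ steps and for the growing leaves. I take a union bound over $t$ with $\xi/n_{\mathrm{test}}$ per step. Using the exponential mixing of Assumption~\ref{assump:fast-mixing}, $\sigma_{\mathrm{mix}}$ becomes a constant, and the blocking Bernstein inequality for $\beta$-mixing sequences gives $O(\sqrt{\log(n_{\mathrm{test}}/\xi)/|\mathcal{L}|_{\min}})$. Since online appends only enlarge leaves, $|\mathcal{L}|_{\min}$ can be taken at calibration size, and the $1/|\mathcal{L}|$ term is absorbed here.

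The martingale part $\sum_t(Z_t-\pi_t)$ is a bounded martingale difference sequence, so Azuma–Hoeffding gives $O(\sqrt{\log(1/\xi)/n_{\mathrm{test}}})$. I would absorb this into the leaf-size term, assuming $n_{\mathrm{test}}\gtrsim|\mathcal{L}|_{\min}$, or else add it to the bound explicitly.

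\textbf{Main obstacle.} The hardest step is justifying the per-step use of Theorem~\ref{thm:coverage_guarantee} under online retraining. Once $\varepsilon_{t+1}$ is appended, the leaf contents depend on past coverage outcomes, so the i.i.d.-like in-leaf concentration is no longer literal. My first attempt is to argue that the tree structure and anchors are fixed from calibration, so appended points satisfy the same matched-leaf KS bound. Conditional on the routing, the leaf sample is then a $\beta$-mixing subsequence, and the coupling argument above applies with only the $\sqrt{\beta_w}$ correction for the recently appended, dependent samples.
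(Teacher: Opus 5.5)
The step that fails is the per-step bias bound inside your martingale decomposition. You put the calibration set, the bootstrap draw and the online history up to time $t$ into $\mathcal{F}_t$. So the trees, their anchors and the patch $\tilde{\varepsilon}_t$ are all $\mathcal{F}_t$-measurable, hence so is every routing event, and the event of Assumption~\ref{assump:diversity} has conditional probability $0$ or $1$ given $\mathcal{F}_t$. ``Conditioning on the event (probability $\ge p_{\min}$)'' inside $\pi_t$ is therefore not available, and the uniform bound $\pi_t\ge 1-\alpha-(1-p_{\min})m_{\max}-\cdots$ does not follow. Let $E^{(b)}_{\mathrm{OOD},t}$ be the event that tree $b$ sends $\tilde{\varepsilon}_t$ to its leftmost leaf. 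What your ingredients actually give is $\pi_t\ge 1-\alpha-\frac{m_{\max}}{B}\sum_{b=1}^{B}\mathds{1}\{E^{(b)}_{\mathrm{OOD},t}\}-\cdots$, which degrades to $1-\alpha-m_{\max}-\cdots$ whenever every tree does so. After averaging over $t$ you would need a high-probability bound on this realized OOD frequency along the test stream. Assumption~\ref{assump:diversity} is a per-patch probability over the bootstrap draw, so it controls only the expectation over that draw. Since one ensemble is drawn once and reused at every step, the realized frequency need not be close to that expectation. Two related mismatches remain. On the event of Assumption~\ref{assump:diversity} only \emph{some} tree is matched, not necessarily the tree $b$ you fixed. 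And Theorem~\ref{thm:coverage_guarantee} bounds coverage under the marginal law $P_{t+1}$, whereas $\pi_t$ is governed by the conditional law of $\varepsilon_{t+1}$ given $\mathcal{F}_t$, so you would need a conditional version of that theorem.

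The paper orders the argument the other way round. It bounds the \emph{marginal} per-step coverage $M_t=\mathbb{P}(y_{t+1}\in\hat{\mathcal{C}}_t^{\alpha})$, which averages over the bootstrap and routing randomness. The law of total probability over $E^{(b)}_{\mathrm{OOD},t}$ then directly gives the weight $1-p_{\min}$ on $m_{\max}$. The paper obtains $\mathbb{P}(\neg E^{(b)}_{\mathrm{OOD},t})\ge p_{\min}$ for each $b$ from the trees being identically distributed, although strictly the ``there exists'' form of the assumption only gives $p_{\min}/B$. It then applies Assumption~\ref{assump:quantile-averaging} and the $\delta^\star$ shift. Only at the end does it relate $\frac{1}{n_{\mathrm{test}}}\sum_t Z_t$ to $\frac{1}{n_{\mathrm{test}}}\sum_t M_t$, by splitting the test horizon into length-$w$ blocks and coupling them under $\beta$-mixing. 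This coupling, not the matched-leaf bias where you placed it, is the source of its $O(\sqrt{\beta_w})$ term. Your Azuma step is cleaner than that blocking argument: it needs no mixing and is valid for a fixed ensemble. The paper's aggregation step passes over this point, even though all $Z_t$ share the same trees and calibration set. Your two-endpoint treatment is also more careful than the paper's, which controls only the upper quantile before asserting interval coverage. To keep these advantages, however, you must either state the result in expectation over the bootstrap draw (averaging before you condition) or add an assumption or argument that controls the realized OOD frequency along the test stream.
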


\begin{proof}
Appendix~\ref{appx:prop_ood}.
\end{proof}

\begin{remark}
\label{rem:shift_robustness}
Proposition~\ref{prop:online-coverage} clarifies why DistMatch maintains coverage to unseen samples without assuming periodicity in the underlying process. Under a mild shift, Definition~\ref{def:approx_exchangeability} admits any calibration sample whose residual distribution lies within $\gamma$ of the unseen patch; thus, homogeneous right leaves absorb a gradual distributional shift between calibration and unseen sets. Under severe shift, two complementary mechanisms preserve validity. First, the bootstrap ensemble of $B$ trees with diverse split anchors (Assumption~\ref{assump:diversity}) routes each unseen patch to a matched right leaf in at least one tree with probability $p_{\min}$; the resulting fallback error scales as $(1-p_{\min})m_{\max}$ in $\epsilon_{\mathrm{online}}$ and decays as $B$ grows. Second, the online QRF updating appends each observed residual to its assigned leaf and refits the leaf-wise quantile estimator. Together, these mechanisms preserve validity over the online sequence without modifying the tree structure.
\end{remark}

In summary, DistMatch's KS-based binning overcomes the limitations of reweighting with theoretical guarantees on both convergence and coverage: samples are treated uniformly within each leaf, weight estimation is replaced by a nonparametric binary routing rule, and coverage error is bounded transparently by the threshold $\gamma$.
\begin{table*}[!ht]
    \centering
    \caption{Quantitative evaluation on five real-world datasets at $\alpha=0.1$ using RF point predictor. Best widths and Winkler (Win.) scores are in \textbf{bold}; methods with miscoverage above $0.25 \alpha$ are greyed out. DistMatch consistently outperforms other baselines across datasets, achieving the smallest widths and Winkler scores among methods that satisfy the target coverage.}
    \resizebox{0.9\width}{!}{
    \begin{tabular}{|c|r|c|c|c|c|c|c|c|c|c|c|c|c|c|}
    \hline
         \multicolumn{2}{|c|}{\multirow{2}{*}{UC Model}} &
         DistMatch &
         KOWCPI & 
         HopCPT &
         SPCI &
         EnbPI &
         NexCP &
         CP &
         Copula  \\
         \multicolumn{2}{|c|}{} & (Ours) & \citeyearpar{lee2025kowcpi} & \citeyearpar{auer2023hopcpt} & \citeyearpar{xu2023spci} & \citeyearpar{xu2021enbpi} & \citeyearpar{barber2023conformal} & \citeyearpar{stankeviciute2021conformal} & \citeyearpar{sun2024copula} \\
    \hline
        \multirow{3}{*}{\rotatebox{90}{Elec.}}
& Win. $\downarrow$ & $\boldsymbol{1.97^{\pm 0.01}}$ & $2.44^{\pm 0.00}$ & $3.35^{\pm 0.21}$ & $2.54^{\pm 0.06}$ & $\color{gray}{3.36}$ & $3.51$ & $3.93$ & $3.77$ \\
& Cov. $\uparrow$ & $0.92^{\pm 0.00}$ &  $0.90^{\pm 0.00}$ & $0.91^{\pm 0.02}$ & $0.90^{\pm 0.00}$ & $\color{gray}{0.85}$ & $0.92$ & $0.97$ & $0.96$ \\
& Width $\downarrow$ & $\boldsymbol{0.27^{\pm 0.00}}$ & $0.38^{\pm 0.00}$ & $0.50^{\pm 0.05}$ & $0.28^{\pm 0.00}$ & $\color{gray}{0.45}$ & $0.57$ & $0.71$ & $0.67$ \\
\hline
        \multirow{3}{*}{\rotatebox{90}{Solar}}
& Win. $\downarrow$ & $\boldsymbol{1.54^{\pm 0.01}}$ &  $2.07^{\pm 0.00}$ & $1.90^{\pm 0.13}$ & $\color{gray}{1.98^{\pm 0.15}}$ & $2.55$ & $3.74$ & $\color{gray}{3.56}$ & $3.54$ \\
& Cov. $\uparrow$ & $0.91^{\pm 0.00}$  & $0.93^{\pm 0.00}$ & $0.92^{\pm 0.01}$ & $\color{gray}{0.85^{\pm 0.01}}$ & $0.88$ & $0.89$ & $\color{gray}{0.86}$ & $0.88$ \\
& Width $\downarrow$ & $\boldsymbol{60.00^{\pm 0.40}}$ & $109.31^{\pm 0.00}$ & $97.04^{\pm 8.21}$ & $\color{gray}{47.36^{\pm 5.14}}$ & $112.57$ & $215.67$ & $\color{gray}{167.91}$ & $187.69$\\
\hline
    \multirow{3}{*}{\rotatebox{90}{Wind}}
& Win. $\downarrow$ & $\boldsymbol{2.15^{\pm 0.01}}$ & $3.54^{\pm 0.00}$ & $3.72^{\pm 0.31}$ & $\color{gray}{2.19^{\pm 0.00}}$ & $\color{gray}{4.37}$ & $3.98$ & $\color{gray}{4.40}$ & $\color{gray}{4.10}$ \\
& Cov. $\uparrow$ & $0.90^{\pm 0.00}$ & $0.89^{\pm 0.00}$ & $0.90^{\pm 0.01}$ & $\color{gray}{0.83^{\pm 0.00}}$ & $\color{gray}{0.85}$ & $0.89$ & $\color{gray}{0.74}$ & $\color{gray}{0.82}$ \\
& Width $\downarrow$ & $\boldsymbol{69.04^{\pm 0.03}}$ & $139.25^{\pm 0.00}$ & $151.66^{\pm 12.82}$ & $\color{gray}{63.14^{\pm 0.18}}$ & $\color{gray}{145.63}$ & $171.67$ & $\color{gray}{147.57}$ & $\color{gray}{159.00}$ \\
\hline
    \multirow{3}{*}{\rotatebox{90}{META}}
& Win. $\downarrow$ & $\boldsymbol{0.12^{\pm 0.00}}$ &  $0.61^{\pm 0.00}$ & $\color{gray}{0.25^{\pm 0.08}}$ & $\boldsymbol{0.12^{\pm 0.00}}$ & $\color{gray}{0.19}$ & $0.25$  & $0.29$ & $0.27$  \\
& Cov. $\uparrow$ & $0.90^{\pm 0.02}$ &  $0.91^{\pm 0.00}$ & $\color{gray}{0.83^{\pm 0.10}}$ & $0.96^{\pm 0.00}$ & $\color{gray}{0.83}$ & $0.90$  & $0.98$ & $0.96$  \\
& Width $\downarrow$ &  $\boldsymbol{4.28^{\pm 0.03}}$ &  $26.27^{\pm 0.00}$ & $\color{gray}{8.08^{\pm 0.65}}$ & $5.00^{\pm 0.04}$ & $\color{gray}{5.77}$ & $11.25$ & $14.77$ & $12.96$ \\
\hline
    \multirow{3}{*}{\rotatebox{90}{NVDA}}
& Win. $\downarrow$ & $\boldsymbol{0.49^{\pm 0.00}}$ & $0.72^{\pm 0.00}$ & $\color{gray}{1.59^{\pm 0.01}}$  & $\color{gray}{1.05^{\pm 0.00}}$ & $\color{gray}{1.01}$ & $1.42$ & $2.94$ & $2.94$  \\
& Cov. $\uparrow$ & $0.90^{\pm 0.00}$ & $0.89^{\pm 0.00}$ & $\color{gray}{0.75^{\pm 0.00}}$  & $\color{gray}{0.78^{\pm 0.01}}$ & $\color{gray}{0.79}$ & $0.88$ & $0.90$ & $0.89$  \\
& Width $\downarrow$ & $\boldsymbol{2.71^{\pm 0.01}}$ & $4.42^{\pm 0.00}$ & $\color{gray}{6.93^{\pm 0.01}}$  & $\color{gray}{3.23^{\pm 0.01}}$ & $\color{gray}{3.53}$ & $8.98$ & $16.30$ & $14.52$ \\
\hline
    \end{tabular}}
    \label{tab:all_results_main}
\end{table*}

\begin{figure}[ht]
\centering
 \includegraphics[width=0.9\linewidth]{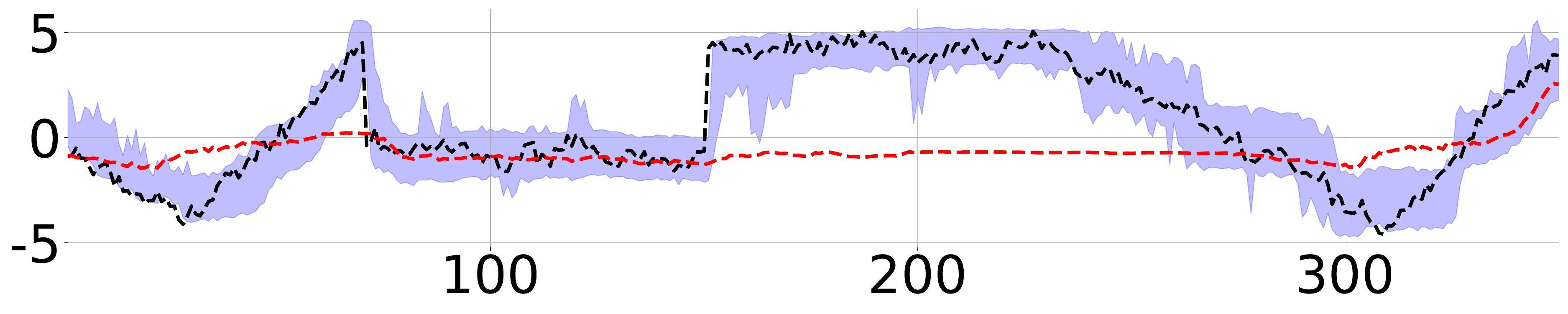}
 \includegraphics[width=0.9\linewidth]{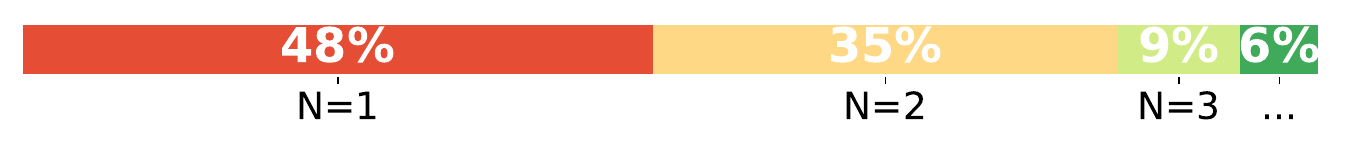}
\caption{Study on synthetic data with two predefined clusters. DistMatch correctly recovers the two underlying clusters within the synthetic data, with only about 17\% of noisy samples forming minor auxiliary leaves, demonstrating its robustness.}
\label{fig:synthetic}
\end{figure}
\section{Experiments}
\subsection{Experimental Setup}
\textbf{Datasets: }
To quantitatively evaluate DistMatch, we utilize five real-world datasets: electricity consumption (\textit{Elec.}), solar radiation (\textit{Solar}), wind energy (\textit{Wind}), and two stock datasets, META and NVDA. Additional experiments on heavy-tailed datasets are provided in the Appendix \ref{appx:heavy_tail}.

\textbf{Point Predictor: }
We use a Random Forest (RF)~\cite{breiman2001random} in Table~\ref{tab:all_results_main} for fair comparison, and an LSTM~\cite{hochreiter1997long} for Table~\ref{tab:prob}, since the corresponding baselines require a sequential backbone. Additional results using LGBM and TCN across various significance levels are provided in Appendix~\ref{appx:whole_results}.

\textbf{Hyperparameters: }
We optimize hyperparameters via greedy search for all experiments, extending \citet{auer2023hopcpt}. All results are averaged over four random seeds, with standard deviation reported.

\textbf{Evaluation Metrics: }
`Cov.' is the target coverage, `Width' denotes interval size, and `Win.' is the Winkler score penalizing miscoverage while accounting for width. Following \citet{auer2023hopcpt}, methods with miscoverage above $0.25\alpha$ are deemed invalid, regardless of interval width.

\subsection{Synthetic Dataset}
To evaluate DistMatch’s clustering performance, we generate two-segmented synthetic time series by combining a piecewise deterministic trend with regime-dependent Gaussian noise (Appendix \ref{appx:synthetic}). As shown in Figure~\ref{fig:synthetic}, DistMatch correctly identifies the two major groups, which jointly contain about 83\% of the samples, while assigning noisy points to small auxiliary leaves. This flexibility—recovering dominant clusters without forcing a fixed number of leaves—enhances robustness under real-world noise and distributional complexity.

\begin{figure*}[ht]
    \centering
    \adjustbox{valign=b}{
        \begin{subfigure}[b]{0.58\linewidth}
            \centering
            \includegraphics[width=\linewidth]{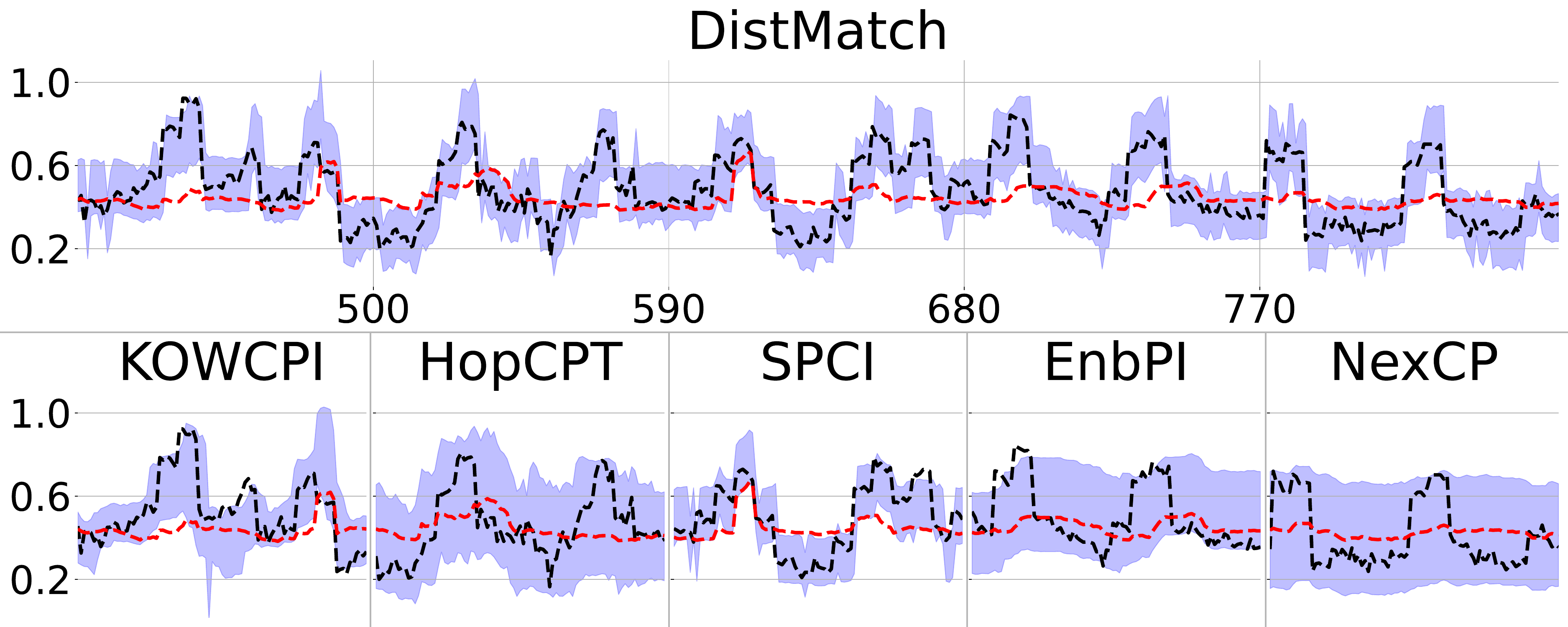}
            \caption{Visual Comparison of Prediction Intervals.}
            \label{fig:exp_dist_w_baselines_elec}
        \end{subfigure}
    }
    \hspace{0.01\linewidth}
    \adjustbox{valign=b}{
        \begin{subfigure}[b]{0.38\linewidth}
            \centering
            \includegraphics[width=\linewidth]{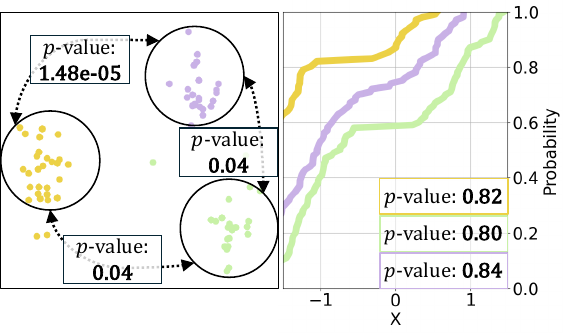}
            \caption{Clustered Results and $p$-value from DistMatch.}
            \label{fig:exp_cluster}
        \end{subfigure}
    }
    \caption{Figure (a) shows prediction interval visualizations for DistMatch and baselines on the \textit{Elec.} dataset (Table~\ref{tab:all_results_main}). Black indicates ground truth, red points predictions, and purple estimated quantile intervals. Figure (b) presents projected residuals and ECDFs from three representative DistMatch leaf nodes, along with KS $p$-values for within- and cross-node comparisons on \textit{Elec.} dataset (Table \ref{tab:all_results_main}). DistMatch achieves narrower prediction intervals while capturing extreme shifts via distributionally bounded leaves.
    }
\end{figure*}
\subsection{Result Analysis}
As shown in Table~\ref{tab:all_results_main}, DistMatch achieves the narrowest prediction intervals while maintaining valid coverage across all five datasets, whereas several baselines fail to meet the coverage target. HopCPT attains valid coverage but yields wider intervals due to the use of absolute residuals and misspecified retrieval, while SPCI and EnbPI suffer coverage degradation under extreme shifts from over-reliance on recent residuals. KOWCPI and CopulaCPTS, which are sensitive to model misspecification, as well as exchangeability-based methods such as NexCP and CP, preserve coverage but incur wider intervals than DistMatch. This gap is most highlighted on highly non-stationary datasets such as META and NVDA. Qualitatively (Figure~\ref{fig:exp_dist_w_baselines_elec}), DistMatch produces adaptive, narrow intervals, whereas HopCPT, EnbPI, and NexCP generate overly wide and nearly uniform intervals, and KOWCPI and SPCI struggle under shifts. With an LSTM backbone (Table~\ref{tab:prob}), DistMatch continues to satisfy the target coverage with a low Winkler score, whereas probabilistic forecasting baselines lose coverage and produce substantially wider intervals, inducing a higher Winkler score than DistMatch and limiting practical applicability.

To further validate DistMatch, we assess its ability to cluster distributionally bounded groups within each leaf node. Figure~\ref{fig:exp_cluster} visualizes three representative leaf nodes from the \textit{Elec.} dataset (Table \ref{tab:all_results_main}), where residuals are projected into two dimensions using Neighborhood Component Analysis (NCA) \cite{goldberger2004neighbourhood}. We hypothesize that residuals within a node follow similar distributions ($p\text{-value} > 0.05$), whereas those across nodes differ significantly ($p\text{-value} \leq 0.05$). The results confirm this hypothesis, demonstrating that DistMatch's binning strategy effectively clusters residuals by distributional similarity.

\begin{table}[t]
    \centering
    \caption{Comparison of DistMatch with probabilistic forecasting models at $\alpha = 0.1$ using an LSTM predictor. We report the coverage averaged over the three datasets (Avg. Cov.) and the Winkler score (Win.) for each dataset. Methods with miscoverage exceeding $0.25\alpha$ are greyed out. DistMatch achieves the best performance among methods satisfying the target coverage.}
    \resizebox{\linewidth}{!}{
    \begin{tabular}{|l|c|c|c|c|}
    \hline
        \multicolumn{2}{|c|}{Dataset} & Elec. & Solar & Wind \\
    \hline
    Metric & Avg. Cov. $\uparrow$ & Win. $\downarrow$ & Win. $\downarrow$ & Win. $\downarrow$ \\
    \hline
    DistMatch & $0.90$ & $\boldsymbol{1.15^{\pm 0.02}}$ & $\boldsymbol{1.87^{\pm 0.11}}$ & $\boldsymbol{2.05^{\pm 0.01}}$ \\
    \hline
    MDN \citeyearpar{bishop1994mixture} & $\color{gray}{0.67}$ & $\color{gray}{1.55^{\pm 0.34}}$ & $\color{gray}{1.53^{\pm 0.25}}$ & $\color{gray}{3.08^{\pm 0.34}}$ \\
    Gauss \citeyearpar{williams1995gaussian} & $\color{gray}{0.60}$ & $\color{gray}{2.02^{\pm 0.39}}$ & $\color{gray}{1.53^{\pm 0.23}}$ & $\color{gray}{3.59^{\pm 0.15}}$ \\
    MCD \citeyearpar{gal2016dropout} & $\color{gray}{0.53}$ & $\color{gray}{1.82^{\pm 0.10}}$ & $\color{gray}{3.35^{\pm 0.20}}$ & $\color{gray}{3.55^{\pm 0.05}}$\\
    \hline
    \end{tabular}
    }
    \label{tab:prob}
\end{table}

\subsection{Ablation Studies}
\begin{figure}[ht]
    \centering
    \begin{subfigure}[t]{0.95\linewidth}
        \centering
        \includegraphics[width=\linewidth]{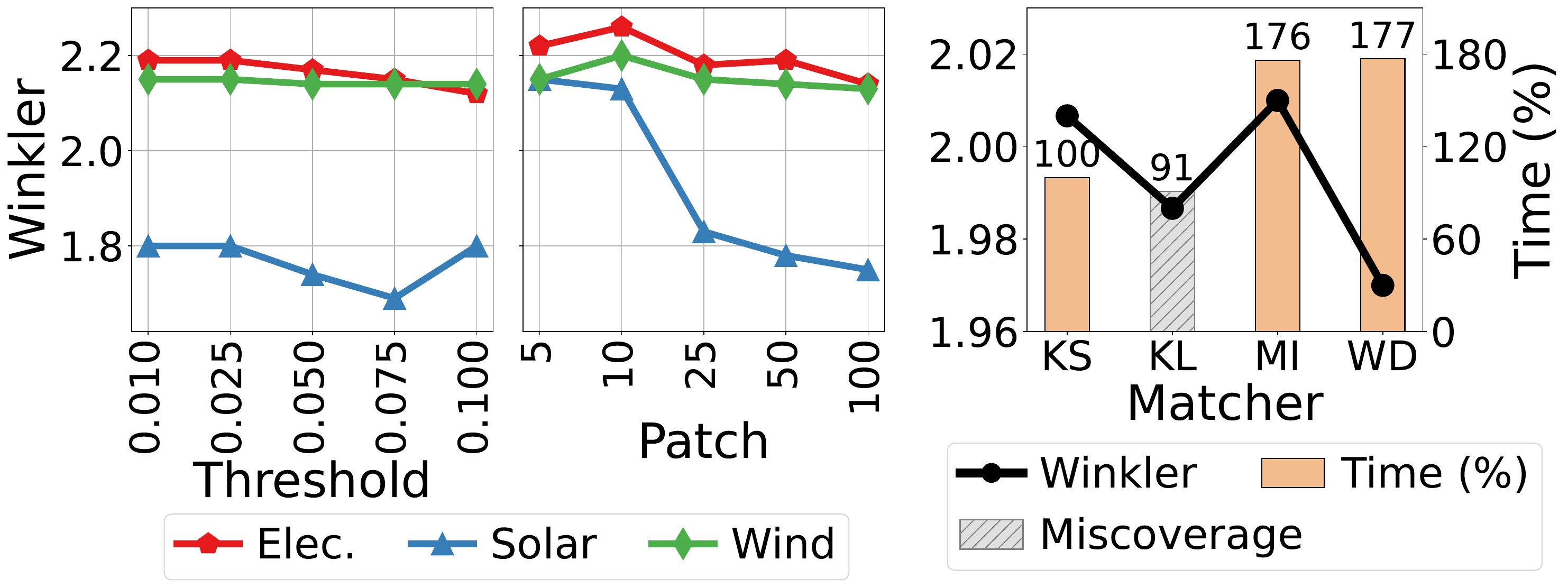}
        \caption{Ablation study on KS threshold $\gamma$, patch length $w$, and distribution maching criteria.}
        \label{fig:hyperparameter_ablation}
    \end{subfigure}

    \begin{subfigure}[t]{0.95\linewidth}
        \centering
        \includegraphics[width=0.99\linewidth]{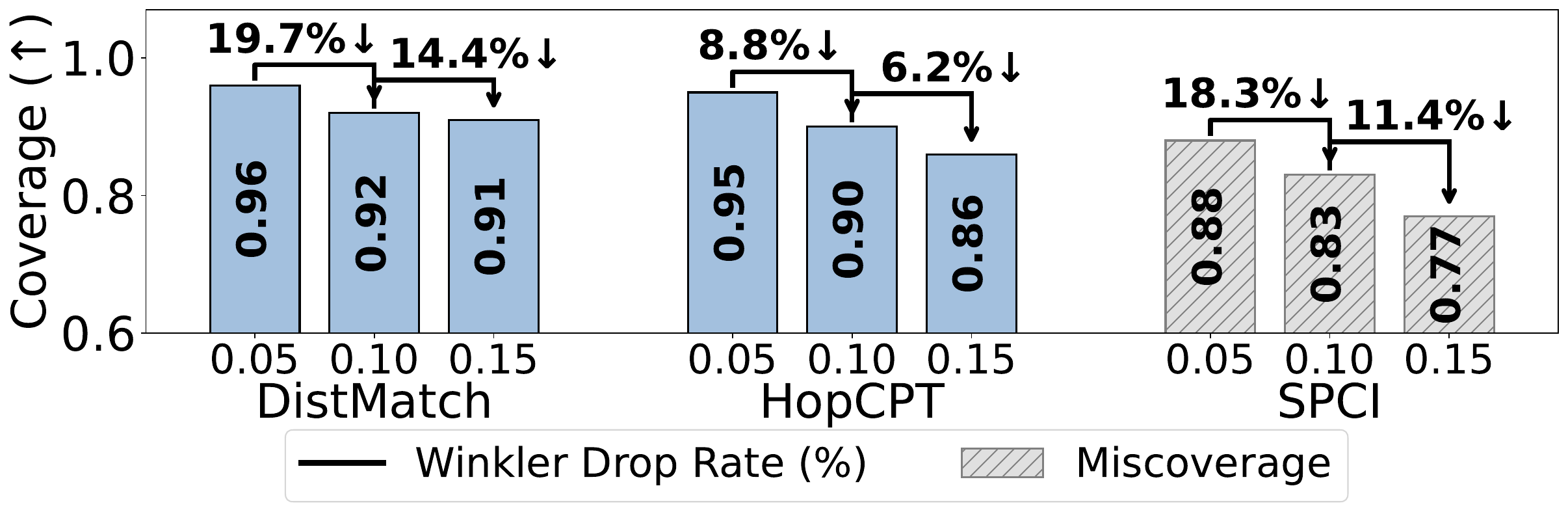}
        \caption{Ablation study on significance level $\alpha$.}
        \label{fig:hyperparameter_ablation_matcher}
    \end{subfigure}

    \caption{(a) presents an ablation study of the main hyperparameters of DistMatch on the \textit{Elec.}, \textit{Solar}, and \textit{Wind} datasets under same setting with Table \ref{tab:all_results_main}. (b) presents an ablation study on $\alpha$ using the RF predictor on the \textit{Elec.} dataset, analyzing target coverage and the Winkler score drop rate.}
    \label{fig:hyperparameter_ablation}
\end{figure}

\textbf{Hyperparameters: }
Figure~\ref{fig:hyperparameter_ablation} examines the robustness of DistMatch under varying hyperparameters. We first examine the KS threshold $\gamma$, which controls tree construction and governs the bias–variance trade-off in Theorem~\ref{thm:coverage_guarantee}. We set $\gamma \leq 0.1$ since larger values violate approximate exchangeability; within this range, DistMatch yields stable Winkler scores by balancing the trade-off.
For the patch length $w$, we adopt $w=100$ as a conservative default for Table~\ref{tab:all_results_main} to satisfy Theorem~\ref{thm:coverage_guarantee}. In practice, smaller values are also viable: \textit{Solar} benefits from $w=25$, while other datasets are robust across a wide range. Among the considered distribution-matching criteria, the KS statistic achieves the most favorable Winkler scores, averaged across three datasets, while maintaining computational efficiency. In contrast, MI and WD incur substantially higher computational costs than KS, while KL fails to meet the target coverage. Finally, with respect to the significance level $\alpha$, DistMatch consistently attains the target coverage while resulting in the greatest decrease in the Winkler score. In summary, DistMatch performs robustly within the suggested parameter range, which is also theoretically grounded.

\begin{figure}[t]
    \centering
    \includegraphics[width=\linewidth]{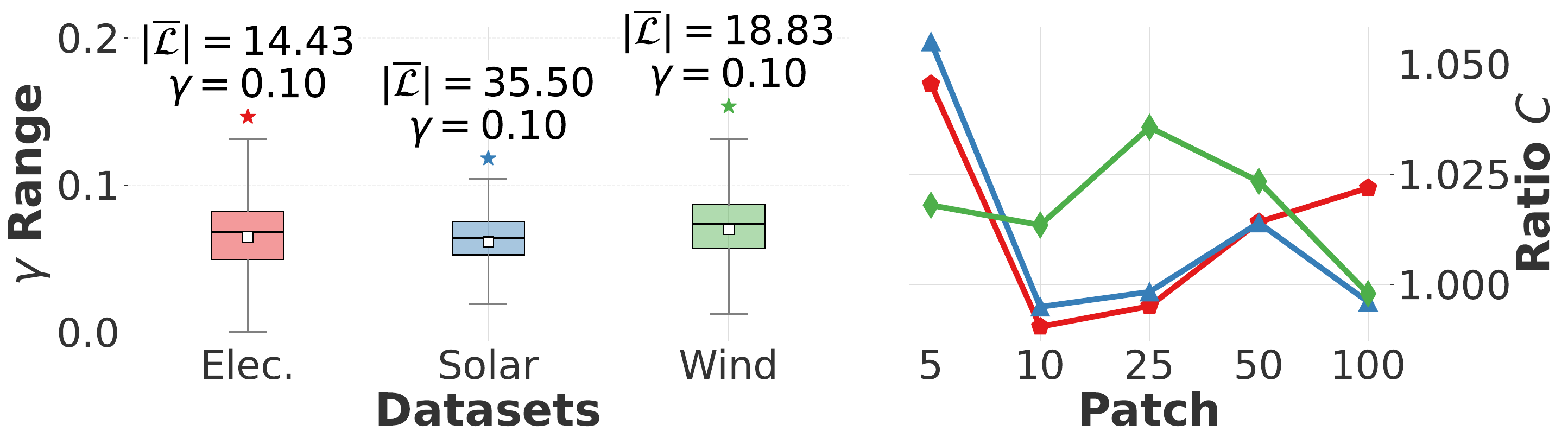}\\
    \includegraphics[width=\linewidth]{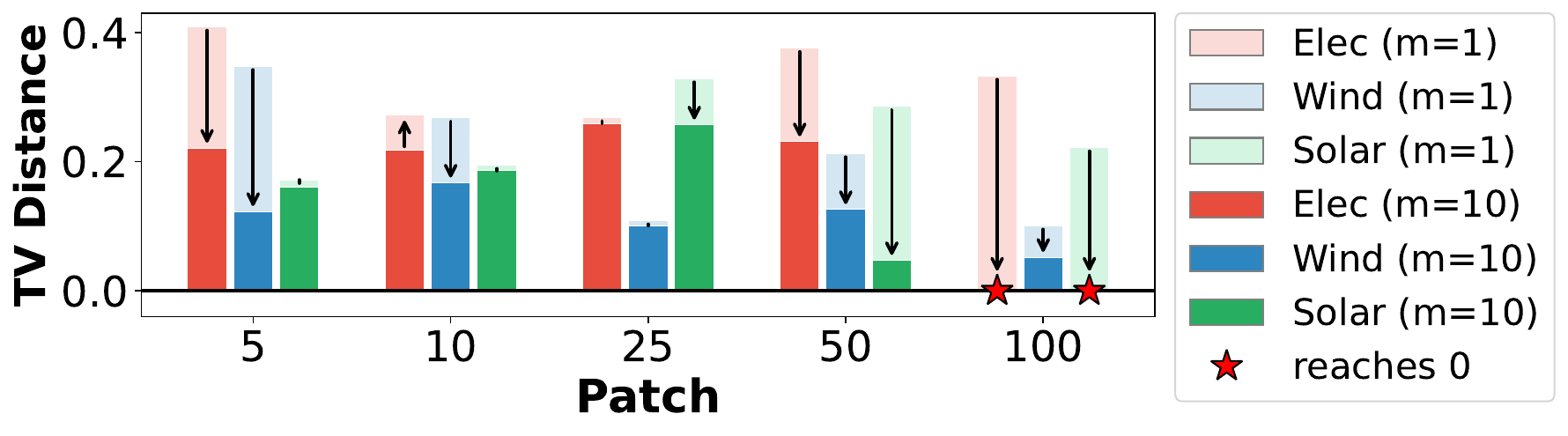}
    \caption{Top-left subplot shows the within-leaf KS distance at the patch-level with fixed $\gamma=0.1$, while top-right reports the patch-to-target smoothness ratio $C$. Bottom-subplot presents $\beta$-mixing assumption validity by showing that the TV distance decays as the lag $m$ increases.}
    \label{fig:gamma_leaf}
\end{figure}

\textbf{Theoretical Validity: }
Figure \ref{fig:gamma_leaf} examines the theoretical validity of DistMatch. We first empirically validate Theorem~\ref{thm:convergence} by verifying that DistMatch maintains within-leaf KS distances below $2\gamma$ while preserving a sufficient average within-leaf sample size ($|\bar{\mathcal{L}}|$). As shown in Figure~\ref{fig:gamma_leaf}, we set $\gamma=0.1$, under which residual patch distributions within each leaf are uniformly bounded by $0.2$. The average leaf size also remains above $10$ even when we set $n_{\min}=0$ for the stopping criterion as default, since the matching criterion (Eq. \eqref{eq:match_ks}) naturally prevents extremely small leaves from forming. This is because a sample can only be routed to a leaf if it satisfies the distributional match with the anchor, which intrinsically limits fragmentation and keeps finite-sample errors controlled even for small $\gamma$. Next, we analyze the patch-to-target smoothness ratio $C$ in Assumption~\ref{assum:mild_dependence} using the KS distance by varying the patch length $w \in \{5,10,25,50,100\}$. As shown in Figure~\ref{fig:gamma_leaf}, a significant drop occurs when increasing the patch length from $w=5$ to $w=10$, and beyond this point, $C$ fluctuates due to a trade-off between patch-to-target smoothness and within-patch diversity, but converges at $w=100$. This empirically supports the claim that $C = O(1)$ (Theorem~\ref{thm:coverage_guarantee}). 

Finally, for the patch-wise $\beta$-mixing assumption, we conduct nonparametric independence tests based on Total Variation (TV) distances. Specifically, to capture the full distribution of residual patches, each patch is mapped to its sorted quantile vector, serving as a sufficient statistic for the ECDF. We then discretize this high-dimensional space into a finite set of states using K-means clustering \cite{lloyd1982least}. Based on this discretization, we compute the TV distance between the empirical joint distribution and the product of marginals of two patches across various lags $m$. As shown in Figure \ref{fig:gamma_leaf}, the TV distance generally decreases as the lag $m$ increases across various datasets and patch sizes; in particular, at $w=100$ it converges to $0$, providing empirical support for the $\beta$-mixing condition. 

\begin{table}[t]
   \centering
   \caption{Computational Complexity. Here $n$ is the calibration set size, 
   $w$ is patch window size, and for HopCPT, $d$ is hidden dimension, $K$ is 
   number of hops, $E_g$ is the number of graph edges, 
   and $t$ is the current online step. Amortized costs assume $T \gg n$ 
   and DistMatch's one-time calibration cost becomes negligible.}
   \begin{resizebox}{0.99\linewidth}{!}{
   \begin{tabular}{c|c|c|c}
   \hline
       \multicolumn{2}{c|}{Methods} & Calibration & Per-step (online) \\
       \hline
       \multirow{3}{*}{}& SPCI &  $O(nw\log w)$ & $O((n+t)w\log w)$ \\
       & HopCPT & $O(E_g d + w\log w)$ & $O(Kd^2 + \text{retrain})$ \\
       \cline{2-4}
       & DistMatch & $O(n^{2}w\log n)$ & $O(w\log n)$ \\
       \hline
       \multicolumn{4}{c}{\textit{Amortized cost over $T$ online steps ($T \gg n$)}} \\
       \hline
       & SPCI & \multicolumn{2}{c}{$O(T^2w\log w)$} \\
       & HopCPT & \multicolumn{2}{c}{$O(TKd^2 + T \cdot \text{retrain})$} \\
       \cline{2-4}
       & DistMatch & \multicolumn{2}{c}{$O(Tw\log n)$} \\
       \hline
   \end{tabular}}
   \end{resizebox}
   \label{tab:computational_complexity}
\end{table}

\subsection{Computational Complexity}
The computation of DistMatch consists of two components: the matching tree and the leaf-wise QRF. First, patch-level ECDFs are precomputed once in $O(nw\log w)$ and cached, so that each subsequent KS comparison runs in $\mathcal{O}(w)$ without additional sorting. Tree construction is then dominated by KS-based matching over these cached representations, with worst-case complexity $O(n^{2}w\log n)$ as shown in Table~\ref{tab:computational_complexity}. In Appendix~\ref{appx:tradeoff_calibration}, however, we show that DistMatch maintains robust performance under limited calibration sizes, enabling efficient deployment with small calibration budgets. After training, online inference requires only tree traversal and leaf-wise updates, yielding $O(Tw\log n)$ amortized complexity over $T$ steps. We further compare the fixed and retrained variants of DistMatch under online deployment in Appendix~\ref{appx:retraining}, and find that retraining is unnecessary. This is a key advantage of DistMatch: a single training step suffices to maintain efficient inference throughout the online sequence, unlike SPCI and HopCPT, which incur cumulative superlinear online costs.

The QRF component introduces an additional per-update cost $O(|\mathcal{L}|\log|\mathcal{L}|)$ when new residuals are added to a leaf, where $|\mathcal{L}|$ is the leaf size; Appendix~\ref{appx:tradeoff_analysis} further analyzes the trade-off between computational cost and alternative leaf storage strategies, showing that DistMatch remains robust under small leaf sizes without accumulating leaf samples.

\section{Conclusion}
We introduce DistMatch, a novel binning-based sequential CP method that recursively partitions residuals into distributionally bounded groups using the KS statistic. We theoretically establish that the resulting leaf-wise groups satisfy approximate exchangeability, which is sufficient to guarantee valid conformal coverage. Extensive experiments on real-world datasets demonstrate that DistMatch consistently outperforms existing sequential CP methods. As future work, we plan to extend DistMatch with adaptive thresholding and online learning mechanisms.
\clearpage

\section*{Impact Statement}
This paper presents work aimed at advancing the field of sequential conformal prediction. Our contributions promote reliable uncertainty quantification under distribution shifts via binning, which is relevant to high-stakes domains such as finance, where calibrated prediction intervals can support safer decision-making. We do not foresee negative societal impacts specific to this work, beyond the general caution that the theoretical coverage guarantees hold only insofar the stated assumptions are satisfied.

\section*{Acknowledgement}
	
This work was supported by Institute for Information \& communications Technology Planning \& Evaluation (IITP) grant funded by the Korea government (MSIT) (RS-2019-II190075, Artificial Intelligence Graduate School Program (KAIST); No. RS-2024-00509258, AI Guardians: Development of Robust, Controllable, and Unbiased Trustworthy AI Technology; No. RS-2024-00457882, AI Research Hub Project), and by the InnoCORE program of the Ministry of Science and ICT (N10250156).

\bibliography{example_paper}
\bibliographystyle{icml2026}
\newpage
\appendix
\onecolumn
\section*{\huge Appendix}

\vspace{1.0em}
\section*{\Large Contents}
\vspace{0.5em}
\renewcommand{\baselinestretch}{1.0}\selectfont

\noindent
\begin{tabular}{@{}p{0.03\textwidth}p{0.85\textwidth}r@{}}
\textbf{A} & \textbf{Notation} \dotfill & \pageref{appx:notation} \\[10pt]

\textbf{B} & \textbf{Proof} \dotfill & \pageref{appx:proof} \\[4pt]
 & \quad B.1\quad Theorem \ref{thm:convergence} \dotfill & \pageref{appx:theorem_contraction} \\[4pt]
 & \quad B.2\quad Theorem \ref{thm:coverage_guarantee} \dotfill & \pageref{appx:theorem_coverage} \\[4pt]
 & \quad B.3\quad Proposition \ref{prop:online-coverage} \dotfill & \pageref{appx:prop_ood} \\[4pt]
 & \quad B.4\quad Discussion on Definition \ref{def:approx_exchangeability} \dotfill & \pageref{appx:def_discussion} \\[2pt]
 & \quad\quad\quad B.4.1\quad Relationship between KS and TV Distances \dotfill & \pageref{appx:def_discussion_1} \\[2pt]
 & \quad\quad\quad B.4.2\quad Why KS Distance is More Suitable for Quantile-Based Sequential CP \dotfill & \pageref{appx:def_discussion_2} \\[2pt]
 & \quad\quad\quad B.4.3\quad Comparison with NexCP's TV-Based Approach \dotfill & \pageref{appx:def_discussion_3} \\[2pt]
 & \quad\quad\quad B.4.4\quad Why Definition \ref{def:approx_exchangeability} is Appropriate for Sequential CP \dotfill & \pageref{appx:def_discussion_4} \\[2pt]
 & \quad\quad\quad B.4.5\quad Practical Implications \dotfill & \pageref{appx:def_discussion_5} \\[2pt]
 & \quad\quad\quad B.4.6\quad Conclusion \dotfill & \pageref{appx:def_discussion_6} \\[10pt]

\textbf{C} & \textbf{Motivation of KS Statistic} \dotfill & \pageref{appx:motivation_ks} \\[4pt]
 & \quad C.1\quad Divergence-Based Methods \dotfill & \pageref{appx:motivation_ks} \\[2pt]
 & \quad C.2\quad Optimal Transport and Kernel-Based Methods \dotfill & \pageref{appx:motivation_ks} \\[2pt]
 & \quad C.3\quad Moment-Based Methods \dotfill & \pageref{appx:motivation_ks} \\[2pt]
 & \quad C.4\quad Comparison with KS Statistic \dotfill & \pageref{appx:motivation_ks} \\[2pt]
 & \quad C.5\quad KS Statistic and $p$-value \dotfill & \pageref{appx:motivation_ks} \\[2pt]
 & \quad\quad\quad C.5.1\quad Comparison with Other Tree-based Methods \dotfill & \pageref{appx:ks_tree_comparison} \\[10pt]

\textbf{D} & \textbf{Extended Related Work} \dotfill & \pageref{appdx:extended_related_work} \\[10pt]

\textbf{E} & \textbf{Extended Experiments} \dotfill & \pageref{appx:experiments} \\[4pt]
 & \quad E.1\quad Experimental Details \dotfill & \pageref{appx:exp_details} \\[2pt]
 & \quad\quad\quad E.1.1\quad Datasets \dotfill & \pageref{appx:datasets} \\[2pt]
 & \quad\quad\quad E.1.2\quad Synthetic Dataset \dotfill & \pageref{appx:synthetic} \\[2pt]
 & \quad\quad\quad E.1.3\quad Hyperparameters \dotfill & \pageref{appx:hyperparameters} \\[2pt]
 & \quad\quad\quad E.1.4\quad Point Predictors \dotfill & \pageref{appx:point_predictors} \\[2pt]
 & \quad\quad\quad E.1.5\quad Evaluation Metrics \dotfill & \pageref{appx:eval_metrics} \\[4pt]
 & \quad E.2\quad Additional Experiments \dotfill & \pageref{appx:add_experiments} \\[2pt]
 & \quad\quad\quad E.2.1\quad Quantile Regression \dotfill & \pageref{sec:ablation_studies_qrf} \\[2pt]
 & \quad\quad\quad E.2.2\quad Comparison with Online Conformal Inference Methods \dotfill & \pageref{appx:aci_comparison} \\[2pt]
 & \quad\quad\quad E.2.3\quad Additional Datasets on Extreme Values \dotfill & \pageref{appx:heavy_tail} \\[2pt]
 & \quad\quad\quad E.2.4\quad OOD-Leaf Analysis \dotfill & \pageref{appx:ood} \\[2pt]
 & \quad\quad\quad E.2.5\quad Trade-off Analysis on Calibration Size \dotfill & \pageref{appx:tradeoff_calibration} \\[2pt]
 & \quad\quad\quad E.2.6\quad Trade-off Analysis on Within-Leaf Sample Size \dotfill & \pageref{appx:tradeoff_analysis} \\[2pt]
 & \quad\quad\quad E.2.7\quad Analysis on Retraining DistMatch in Online Deployment \dotfill & \pageref{appx:retraining} \\[2pt]
 & \quad\quad\quad E.2.8\quad Analysis on Multi-step Prediction Settings \dotfill & \pageref{appx:multistep} \\[2pt]
 & \quad\quad\quad E.2.9\quad Extended Results with Various Significance Levels \dotfill & \pageref{appx:whole_results} \\[10pt]

\textbf{F} & \textbf{Limitation and Future Work} \dotfill & \pageref{appx:limitation} \\
\end{tabular}

% \normalsize
% \renewcommand{\baselinestretch}{1.0}\selectfont

\clearpage

\section{Notation}
\label{appx:notation}
\begin{table}[!ht]
\centering
\caption{Notation summary.}
\setlength{\tabcolsep}{23pt}
\renewcommand{\arraystretch}{1.05}

\begin{tabular}{@{}l p{0.7\linewidth}@{}}
\toprule
\textbf{Symbol} & \textbf{Description} \\
\midrule

\multicolumn{2}{@{}l}{\textbf{Data and predictor}} \\
$X=\{(x_t,y_t)\}_{t=1}^{T}$ & Original dataset with input--target pairs. \\
$t,\,T$ & Data index and the total number of samples. \\
$x_t \in \mathbb{R}^{d}$ & $d$-dimensional input at index $t$. \\
$y_t \in \mathbb{R}$ & Scalar target at index $t$. \\
$\phi$ & Point predictor. \\
$\hat{y}_t=\phi(x_t)$ & Prediction at index $t$. \\
$(\cdot),\,(\star)$ & Arbitrary index (any index) and optimal index. \\

\midrule
\multicolumn{2}{@{}l}{\textbf{Residual and exchangeability}} \\
$\varepsilon_t = y_t-\hat{y}_t$ & Residual at index $t$. \\
$P, Q$ & Probability distributions. \\
$\pi$ & A permutation of indices. \\

\midrule
\multicolumn{2}{@{}l}{\textbf{KS statistic}} \\
$S$ & A set of scalar residual samples. \\
$\mathcal{I}$ & Index set of samples. \\
$\mathds{1}$ & Indicator function. \\
$F(\cdot)$ & CDF (cumulative distribution function). \\
$D_{\mathrm{KS}}(\cdot,\cdot)$ & Kolmogorov--Smirnov statistic (distance). \\

\midrule
\multicolumn{2}{@{}l}{\textbf{Conformal prediction}} \\
$\mathbb{P}$ & Probability. \\
$\mathcal{C}$ & Conformal coverage. \\
$\alpha$ & Significance level. \\
$Z^{(c)}$ & Calibration set for signed residual, where $\varepsilon \in Z^{(c)}$. \\

\midrule
\multicolumn{2}{@{}l}{\textbf{DistMatch}} \\
$(\tilde{\varepsilon}_{t}, \varepsilon_{t+1})$ & Pair of patch residual and target residual. \\
$w$ & Patch (window) size. \\
$\gamma$ & KS statistic threshold (tree construction threshold). \\
$\mathrm{MG}$ & Matching count-based gain score. \\
$\mathcal{D}$ & Node dataset in the tree, where $\{(\tilde{\varepsilon}_{t}, \varepsilon_{t+1})\} \in \mathcal{D}$. \\
$\mathcal{T}$ & DistMatch tree. \\
$\mathcal{T}_L,\,\mathcal{T}_R$ & Left and right subtrees. \\
$n,\,n_{\min}$ & Node dataset ($\mathcal{D}$) size and minimum leaf size. \\
$\mathcal{L}_{(\cdot)},\,K$ & An arbitrary leaf and the final number of leaves. \\
$s$ & Split anchor. \\
$N(s,\mathcal{T}_R,\mathcal{T}_L)$ & Split node with anchor $s$ and subtrees. \\
$\delta$ & Quantile bin. \\
$B$ & Number of bootstrap samples. \\
$\theta$ & Bootstrapping ratio. \\

\midrule
\multicolumn{2}{@{}l}{\textbf{Theory}} \\
$i,\,j$ & Data indices within $\mathcal{D}$. \\
$\xi$ & Failure probability. \\
$\sigma_{\mathrm{mix}}$ & $\beta$-mixing concentration bound constant. \\
$C$ & Patch to target smoothness ratio (constant). \\
$\mathcal{K}$ & Kernel. \\

\bottomrule
\end{tabular}
\label{tab:notation}
\end{table}

\clearpage

\section{Proof}
\label{appx:proof}
\subsection{Theorem \ref{thm:convergence}}
\label{appx:theorem_contraction}
We prove each claim of Theorem~\ref{thm:convergence} based on the construction of Algorithm~\ref{alg:distmatch}.

\paragraph{Claim 1: Approximate Local Exchangeability}

Let $\mathcal{L}_{(\cdot)}$ be an arbitrary leaf terminated by 
\begin{equation}
D_{\mathrm{KS}}\!\left( \tilde{\varepsilon}_i , s \right) \le \gamma,
\label{eq:ks_to_anchor}
\end{equation}
where $s$ denotes the split anchor selected at line~6 of Algorithm~\ref{alg:distmatch}, for all indices $i \in \mathcal{I}(\mathcal{L}_{(\cdot)})$.

For any two samples $i,j \in \mathcal{I}(\mathcal{L}_{(\cdot)})$, the triangle inequality of the KS distance yields
\begin{equation}
D_{\mathrm{KS}}\!\left( \mathcal{P}_i , \mathcal{P}_j \right)
= D_{\mathrm{KS}}\!\left( \tilde{\varepsilon}_i , \tilde{\varepsilon}_j \right)
\le D_{\mathrm{KS}}\!\left( \tilde{\varepsilon}_i , s \right)
+ D_{\mathrm{KS}}\!\left( s , \tilde{\varepsilon}_j \right).
\label{eq:triangle_ks}
\end{equation}
Substituting \eqref{eq:ks_to_anchor} into \eqref{eq:triangle_ks}, we obtain
\begin{equation}
D_{\mathrm{KS}}\!\left( \mathcal{P}_i , \mathcal{P}_j \right) \le 2\gamma.
\label{eq:leaf_diameter}
\end{equation}
Therefore, the patch-level KS diameter within the leaf is bounded as
\begin{equation}
\max_{i,j \in \mathcal{I}(\mathcal{L}_{(\cdot)})}
D_{\mathrm{KS}}\!\left( \mathcal{P}_i , \mathcal{P}_j \right)
\le 2\gamma.
\end{equation}

\paragraph{Claim 2: Minimum Leaf Size.}

We show that Algorithm \ref{alg:distmatch} produces only leaves whose sizes are at least $n_{\min}$.
The recursion terminates and declares a leaf $\mathcal{L}_{(\cdot)}$ in one of the following cases.

\medskip
\noindent
\textbf{Case 1.} $|\mathcal{D}| - |I_{i^\star}| < n_{\min}$.  
This corresponds to the termination condition, which prevents the creation of an undersized left child.
In this case, the algorithm returns a leaf
$\mathcal{L}_{(\cdot)} = \mathcal{D}$, and hence
\begin{equation}
    |\mathcal{L}_{(\cdot)}| = |\mathcal{D}| \ge n_{\min}.
\end{equation}

\medskip
\noindent
\textbf{Case 2.} $|I_{i^\star}| = |\mathcal{D}|$.  
In this case, all samples match the selected anchor.
By Algorithm \ref{alg:distmatch}, the algorithm returns a leaf
$\mathcal{L}_{(\cdot)} = \mathcal{D}$, and thus
\begin{equation}
    |\mathcal{L}_{(\cdot)}| = |\mathcal{D}| \ge n_{\min}.
\end{equation}

\medskip
\noindent
\textbf{Case 3.} \emph{Split occurs.}  
If none of the above conditions hold, the algorithm proceeds to split.
The negation of the termination conditions implies
\begin{equation}
    |I_{i^\star}| > n_{\min}, \quad
|I_{i^\star}| < |\mathcal{D}|, \quad
|\mathcal{D}| - |I_{i^\star}| \ge n_{\min}.
\end{equation}
Therefore, the resulting right and left subsets satisfy
\begin{equation}
    |\mathcal{D}_R| = |I_{i^\star}| > n_{\min}, \qquad
|\mathcal{D}_L| = |\mathcal{D}| - |I_{i^\star}| \ge n_{\min}.
\end{equation}
Both children thus meet the minimum leaf size requirement.

\medskip
\noindent
\textbf{Inductive Argument.}  
We prove by strong induction on the tree depth that all terminal leaves satisfy
$|\mathcal{L}| \ge n_{\min}$.

\emph{Base case.}  
The root node $\mathcal{D}_{0}$ corresponds to the calibration set,
which satisfies $|\mathcal{D}_{0}| \ge n_{\min}$ by assumption.

\emph{Inductive step.}  
Assume that for all nodes up to depth $\ell$, any node that becomes a leaf satisfies
$|\mathcal{L}| \ge n_{\min}$.
Consider a node $\mathcal{D}$ at depth $\ell+1$.
By construction, $\mathcal{D}$ arises only from a valid split in Case~3,
and therefore satisfies $|\mathcal{D}| \ge n_{\min}$.
If $\mathcal{D}$ terminates (Cases~1--2), then
$|\mathcal{L}_{(\cdot)}| = |\mathcal{D}| \ge n_{\min}$.
If it splits (Case~3), both children satisfy the same condition, and the
inductive hypothesis applies to all their descendants.

\medskip
\noindent
Thus, by induction, all leaves produced by Algorithm~\ref{alg:distmatch} satisfy
$|\mathcal{L}| \ge n_{\min}$.
\hfill$\square$

\newpage
\subsection{Theorem \ref{thm:coverage_guarantee}}
\label{appx:theorem_coverage}

Let $\tilde{\varepsilon}_T$ denote the unseen patch, and let
\begin{equation}
\mathcal{L}_{(k^\star)} := \mathcal{L}(\tilde{\varepsilon}_T)
\end{equation}
be the leaf selected by routing $\tilde{\varepsilon}_T$ through the learned matching tree.
We aim to bound the coverage error
\begin{equation}
\left|
\mathbb{P}\!\left(
\varepsilon_{T+1} \le \hat{q}^{\mathcal{L}_{(k^\star)}}_{1-\alpha+\delta^\star}
\right)
- (1-\alpha)
\right|,
\end{equation}
where $\hat{q}^{\mathcal{L}_{(k^\star)}}_{1-\alpha+\delta^\star}$ denotes the empirical
$(1-\alpha+\delta^\star)$-quantile computed using calibration targets within
$\mathcal{L}_{(k^\star)}$, and $\delta^\star \in [0,\alpha]$ is the adaptive refinement
parameter computed via Eq.~\eqref{eq:quantile_bin_beta}.

\paragraph{Step 1: Patch-Level Proximity of the Unseen Sample}

At each internal node with anchor $s_{(\cdot)}$, the routing rule is
\begin{equation}
\tilde{\varepsilon}_T \mapsto
\begin{cases}
\text{right child}, & D_{\mathrm{KS}}(\tilde{\varepsilon}_T, s_{(\cdot)}) \le \gamma, \\
\text{left child},  & \text{otherwise}.
\end{cases}
\end{equation}
If $\tilde{\varepsilon}_T$ is routed to the matched (right-child) leaf
$\mathcal{L}_{(k^\star)}$, then by construction,
\begin{equation}
D_{\mathrm{KS}}(\tilde{\varepsilon}_T, s_{(k^\star)}) \le \gamma,
\label{eq:t5_test_anchor}
\end{equation}
where $s_{(k^\star)}$ denotes the anchor that formed this leaf.
Moreover, for all calibration patches
$\tilde{\varepsilon}_t$ with $t \in \mathcal{I}(\mathcal{L}_{(k^\star)})$,
\begin{equation}
D_{\mathrm{KS}}(\tilde{\varepsilon}_t, s_{(k^\star)}) \le \gamma.
\label{eq:t5_cal_anchor}
\end{equation}
Combining \eqref{eq:t5_test_anchor} and \eqref{eq:t5_cal_anchor} via the triangle
inequality yields
\begin{equation}
D_{\mathrm{KS}}(\tilde{\varepsilon}_T, \tilde{\varepsilon}_t)
\le 2\gamma,
\qquad \forall t \in \mathcal{I}(\mathcal{L}_{(k^\star)}).
\label{eq:t5_patch_pair}
\end{equation}

\paragraph{Step 2: Propagation from Patch-Level to Target-Level Distributions}

Let ${P}_{t+1}$ and ${P}_{T+1}$ denote the distributions of the
target residuals $\varepsilon_{t+1}$ and $\varepsilon_{T+1}$, respectively.

Under Assumption~\ref{assum:mild_dependence}, the $\beta$-mixing condition implies that
distributional proximity between residual patches propagates to their associated
targets with a controlled error:
\begin{equation}
D_{\mathrm{KS}}\!\left(
{P}_{t+1}, {P}_{T+1}
\right)
\le
C \, D_{\mathrm{KS}}(\tilde{\varepsilon}_t, \tilde{\varepsilon}_T)
+ O(\sigma_{\mathrm{mix}}),
\label{eq:t5_patch_to_target}
\end{equation}
where
\begin{equation}
\sigma_{\mathrm{mix}} = O\!\left( \sum_{m=1}^{\infty} \sqrt{\beta_m} \right),
\end{equation}
and $C$ is a constant depending only on the regularity of the conditional
distribution of $\varepsilon_{t+1}$ given $\tilde{\varepsilon}_t$. The constant $C$ depends on the local regularity of the conditional target CDF (e.g., bounded density around the target quantile), and is O(1) for common ARMA, state-space, and Lipschitz-driven nonlinear time series.
Substituting \eqref{eq:t5_patch_pair} into \eqref{eq:t5_patch_to_target} yields
\begin{equation}
D_{\mathrm{KS}}\!\left(
{P}_{t+1}, {P}_{T+1}
\right)
\le
2C\gamma + O(\sigma_{\mathrm{mix}}),
\qquad
\forall t \in \mathcal{I}(\mathcal{L}_{(k^\star)}).
\end{equation}
Defining
\begin{equation}
\gamma^\star := 2C\gamma + O(\sigma_{\mathrm{mix}}),
\label{eq:t5_gamma_star}
\end{equation}
we obtain
\begin{equation}
\max_{t \in \mathcal{I}(\mathcal{L}_{(k^\star)})}
D_{\mathrm{KS}}\!\left(
{P}_{t+1}, {P}_{T+1}
\right)
\le \gamma^\star,
\end{equation}
which establishes $\gamma^\star$-approximate local exchangeability at the target
level in the sense of Definition~\ref{def:approx_exchangeability}.

\paragraph{Step 3: Decomposition of the Coverage Error}

Define the empirical CDF within the selected leaf
\begin{equation}
\hat{F}_{\mathcal{L}_{(k^\star)}}(x)
:=
\frac{1}{|\mathcal{L}_{(k^\star)}|}
\sum_{t \in \mathcal{I}(\mathcal{L}_{(k^\star)})}
\mathds{1}\!\left\{ \varepsilon_{t+1} \le x \right\},
\end{equation}
its expectation
\begin{equation}
\bar{F}_{\mathcal{L}_{(k^\star)}}(x)
:=
\frac{1}{|\mathcal{L}_{(k^\star)}|}
\sum_{t \in \mathcal{I}(\mathcal{L}_{(k^\star)})}
F_{t+1}(x),
\end{equation}
and the test CDF
\begin{equation}
F_{T+1}(x)
:=
\mathbb{P}(\varepsilon_{T+1} \le x).
\end{equation}
By the triangle inequality,
\begin{equation}
\sup_x
\left|
\hat{F}_{\mathcal{L}_{(k^\star)}}(x)
- F_{T+1}(x)
\right|
\le
\sup_x
\left|
\hat{F}_{\mathcal{L}_{(k^\star)}}(x)
- \bar{F}_{\mathcal{L}_{(k^\star)}}(x)
\right|
+
\sup_x
\left|
\bar{F}_{\mathcal{L}_{(k^\star)}}(x)
- F_{T+1}(x)
\right|.
\end{equation}

\paragraph{Step 4: Bias and Variance Bounds}

From $\gamma^\star$-approximate local exchangeability
\eqref{eq:t5_gamma_star}, we have
\begin{equation}
\sup_x
\left|
\bar{F}_{\mathcal{L}_{(k^\star)}}(x)
- F_{T+1}(x)
\right|
\le
\gamma^\star
=: m_{\mathcal{L}_{(k^\star)}}.
\end{equation}
Moreover, by concentration inequalities for $\beta$-mixing sequences \cite{rio1993covariance}, for any $\xi \in (0,1)$, with probability at least $1-\xi$,
\begin{equation}
\sup_x
\left|
\hat{F}_{\mathcal{L}_{(k^\star)}}(x)
- \bar{F}_{\mathcal{L}_{(k^\star)}}(x)
\right|
\le
C'
\sigma_{\mathrm{mix}}
\sqrt{
\frac{\log(1/\xi)}{|\mathcal{L}_{(k^\star)}|}
}
=:
\varrho_{\mathcal{L}_{(k^\star)}}(\xi),
\end{equation}
where $C' > 0$ is a universal constant.

\paragraph{Step 5: Quantile Mapping and Coverage Error}

By definition of the empirical quantile,
\begin{equation}
\hat{q}^{\mathcal{L}_{(k^\star)}}_{1-\alpha+\delta^\star}
:=
\inf
\left\{
y \in \mathbb{R}
:
\hat{F}_{\mathcal{L}_{(k^\star)}}(y)
\ge 1-\alpha+\delta^\star
\right\}.
\end{equation}
Combining the above bounds, on the event of probability at least $1-\xi$, we obtain
\begin{equation}
\left|
\mathbb{P}\!\left(
\varepsilon_{T+1}
\le
\hat{q}^{\mathcal{L}_{(k^\star)}}_{1-\alpha+\delta^\star}
\right)
-
(1-\alpha)
\right|
\le
m_{\mathcal{L}_{(k^\star)}}
+
\varrho_{\mathcal{L}_{(k^\star)}}(\xi)
+
\delta^\star
+
\frac{1}{|\mathcal{L}_{(k^\star)}|}.
\end{equation}
Substituting the explicit forms of $m_{\mathcal{L}_{(k^\star)}}$ and
$\varrho_{\mathcal{L}_{(k^\star)}}(\xi)$ completes the proof. \qed

\newpage

\subsection{Proof of Proposition \ref{prop:online-coverage}}
\label{appx:prop_ood}

\paragraph{Recap of notation.}
For each bootstrap tree $b\in\{1,\ldots,B\}$, let $\mathcal{L}^{(b)}_{t}$ denote
the leaf to which $\tilde{\varepsilon}_t$ is routed in tree $b$, and let
$\mathcal{L}^{(b)}_{\mathrm{OOD}}$ denote its leftmost (unmatched) leaf.
Define the per-step OOD-routing event
\begin{equation}
E^{(b)}_{\mathrm{OOD},t}
:=
\bigl\{
\tilde{\varepsilon}_t \text{ is routed to } \mathcal{L}^{(b)}_{\mathrm{OOD}}
\text{ in tree } b
\bigr\}.
\end{equation}
For a target level $\tau$ we write the per-tree coverage functional
\begin{equation}
c^{(b)}_t(\tau)
:=
\mathbb{P}\!\left(
\varepsilon_{t+1}\le \hat{q}^{(b)}_{\tau}(\tilde{\varepsilon}_t)
\right),
\end{equation}
and the ensemble quantile
$\hat{q}^{\mathrm{ens}}_{\tau}(\tilde{\varepsilon}_t)
:=\tfrac1B\sum_{b=1}^{B}\hat{q}^{(b)}_{\tau}(\tilde{\varepsilon}_t)$.
Finally, let $M_t:=\mathbb{P}(y_{t+1}\in\hat{\mathcal{C}}^{\alpha}_t)$ denote the
per-step marginal coverage and $Z_t:=\mathbb{1}\{y_{t+1}\in\hat{\mathcal{C}}^{\alpha}_t\}$.
Throughout, $|\mathcal{L}|_{\min}:=\min_{b,t}|\mathcal{L}^{(b)}_t|$ is the smallest leaf size
encountered over the online run, and $m_{\max}:=\sup_{b,t}m_{\mathcal{L}^{(b)}_t}$.
The proof proceeds by (i) controlling the per-step marginal coverage $M_t$, then
(ii) aggregating $\{Z_t\}$ to $\{M_t\}$ over the online sequence.

\paragraph{Step 1: Per-step OOD decomposition for a fixed tree.}
Fix $t\in\{T,\ldots,T+n_{\mathrm{test}}-1\}$, a tree $b$, and a level $\tau$.
By the law of total probability with respect to $E^{(b)}_{\mathrm{OOD},t}$,
\begin{equation}
c^{(b)}_t(\tau)
=
\mathbb{P}\!\bigl(\varepsilon_{t+1}\le\hat{q}^{(b)}_{\tau}\mid\neg E^{(b)}_{\mathrm{OOD},t}\bigr)
\,\mathbb{P}\!\bigl(\neg E^{(b)}_{\mathrm{OOD},t}\bigr)
+
\mathbb{P}\!\bigl(\varepsilon_{t+1}\le\hat{q}^{(b)}_{\tau}\mid E^{(b)}_{\mathrm{OOD},t}\bigr)
\,\mathbb{P}\!\bigl(E^{(b)}_{\mathrm{OOD},t}\bigr).
\end{equation}

\paragraph{Step 2: Matched-leaf bound via Theorem \ref{thm:coverage_guarantee}.}
On $\neg E^{(b)}_{\mathrm{OOD},t}$, the patch $\tilde{\varepsilon}_t$ is routed to a
matched leaf, so Theorem \ref{thm:coverage_guarantee} applies to that leaf.
Since the tree structure is fixed and the online QRF update only appends the
observed residual to its assigned leaf, the leaf-wise estimator at step $t$ is a
consistent quantile estimator built from $|\mathcal{L}^{(b)}_t|\ge n_{\min}$
samples. Hence, for any $\xi'\in(0,1)$, with probability at least $1-\xi'$,
\begin{equation}
\left|
\mathbb{P}\!\bigl(\varepsilon_{t+1}\le\hat{q}^{(b)}_{\tau}\mid\neg E^{(b)}_{\mathrm{OOD},t}\bigr)
-\tau
\right|
\le
2C\gamma+O(\sigma_{\mathrm{mix}})
+O\!\left(\sigma_{\mathrm{mix}}\sqrt{\tfrac{\log(1/\xi')}{|\mathcal{L}|_{\min}}}\right),
\label{eq:matched-leaf-bound}
\end{equation}
where the bias term $2C\gamma+O(\sigma_{\mathrm{mix}})$ is the target-level
divergence $m_{\mathcal{L}}$ of Eq.~\eqref{eq:within-leaf-divergence} and the last
term is the finite-sample term $\varrho_{\mathcal{L}}$ of Eq.~\eqref{eq:bound}.

\paragraph{Step 3: Worst-case bound under OOD routing.}
On $E^{(b)}_{\mathrm{OOD},t}$, approximate local exchangeability may fail, so we
apply the conservative bound
\begin{equation}
\left|
\mathbb{P}\!\bigl(\varepsilon_{t+1}\le\hat{q}^{(b)}_{\tau}\mid E^{(b)}_{\mathrm{OOD},t}\bigr)
-\tau
\right|
\le m_{\max}.
\end{equation}

\paragraph{Step 4: Per-tree marginal bound via bootstrap diversity.}
The $B$ bootstrap trees are constructed by i.i.d.\ subsampling with a common
ratio $\theta$, hence are identically distributed; consequently
$\mathbb{P}(\neg E^{(b)}_{\mathrm{OOD},t})$ does not depend on $b$.
Assumption \ref{assump:diversity} therefore yields, for every $b$,
\begin{equation}
\mathbb{P}\!\bigl(\neg E^{(b)}_{\mathrm{OOD},t}\bigr)\ge p_{\min},
\qquad
\mathbb{P}\!\bigl(E^{(b)}_{\mathrm{OOD},t}\bigr)\le 1-p_{\min}.
\end{equation}
Combining Steps 1--3 with $\mathbb{P}(\neg E^{(b)}_{\mathrm{OOD},t})\le 1$, we obtain
that, with probability at least $1-\xi'$,
\begin{equation}
\bigl|c^{(b)}_t(\tau)-\tau\bigr|
\le
(1-p_{\min})\,m_{\max}
+2C\gamma+O(\sigma_{\mathrm{mix}})
+O\!\left(\sigma_{\mathrm{mix}}\sqrt{\tfrac{\log(1/\xi')}{|\mathcal{L}|_{\min}}}\right)
=:\Delta(\xi'),
\label{eq:per-tree-bound}
\end{equation}
uniformly over $b$.

\paragraph{Step 5: From per-tree quantiles to the ensemble quantile.}
By Assumption \ref{assump:quantile-averaging} (quantile averaging stability)
applied to the level-$\tau$ estimators $\{\hat{q}^{(b)}_{\tau}\}_{b=1}^{B}$,
\begin{equation}
\left|
\mathbb{P}\!\bigl(\varepsilon_{t+1}\le\hat{q}^{\mathrm{ens}}_{\tau}(\tilde{\varepsilon}_t)\bigr)-\tau
\right|
\le
\frac1B\sum_{b=1}^{B}\bigl|c^{(b)}_t(\tau)-\tau\bigr|
+\kappa_{\mathrm{avg}},
\qquad \kappa_{\mathrm{avg}}=O(B^{-1}).
\end{equation}
Substituting the uniform per-tree bound \eqref{eq:per-tree-bound}, the average is
itself bounded by $\Delta(\xi')$, hence
\begin{equation}
\left|
\mathbb{P}\!\bigl(\varepsilon_{t+1}\le\hat{q}^{\mathrm{ens}}_{\tau}(\tilde{\varepsilon}_t)\bigr)-\tau
\right|
\le
\Delta(\xi')+O(B^{-1}).
\label{eq:ensemble-level-bound}
\end{equation}
This is the step that converts the (otherwise only randomized-index) guarantee
into one for the averaged ensemble quantile.

\paragraph{Step 6: Adaptive refinement.}
The interval $\hat{\mathcal{C}}^{\alpha}_t$ in Eq.~\eqref{eq:online-interval} uses
the refined level $\tau=1-\alpha+\delta^{\star}$ with $\delta^{\star}\in[0,\alpha]$
from Eq.~\eqref{eq:quantile_bin_beta}. Applying \eqref{eq:ensemble-level-bound} at this
level and using $|(1-\alpha+\delta^{\star})-(1-\alpha)|=\delta^{\star}$,
\begin{equation}
\left|
\mathbb{P}\!\bigl(\varepsilon_{t+1}\le\hat{q}^{\mathrm{ens}}_{1-\alpha+\delta^{\star}}(\tilde{\varepsilon}_t)\bigr)-(1-\alpha)
\right|
\le
\Delta(\xi')+O(B^{-1})+\delta^{\star}.
\end{equation}
Consequently the per-step marginal coverage satisfies
\begin{equation}
M_t\ge 1-\alpha-\epsilon_{\mathrm{step}}(\xi'),
\qquad
\epsilon_{\mathrm{step}}(\xi')
:=\Delta(\xi')+O(B^{-1})+\delta^{\star}.
\label{eq:per-step-marginal}
\end{equation}

\paragraph{Step 7: Online aggregation via $\beta$-mixing concentration.}
It remains to relate the empirical coverage $\tfrac1{n_{\mathrm{test}}}\sum_t Z_t$
to the per-step probabilities $\tfrac1{n_{\mathrm{test}}}\sum_t M_t$.
The indicators $Z_t$ are determined by the residual patches $\tilde{\varepsilon}_t$,
which overlap—and are thus mechanically dependent—only for lags $|t-t'|\le w$;
for lags exceeding $w$ their dependence is governed by the $\beta$-mixing
coefficients of the patch process (Assumption \ref{assump:fast-mixing}).
Partitioning the test horizon into blocks of length $w$ and coupling each block
to an independent copy incurs a total variation cost of order $\beta_w$ per block \cite{rio1993covariance,yu_1994}. A Bernstein-type inequality for $\beta$-mixing sequences then yields,
with probability at least $1-\xi'$,
\begin{equation}
\left|
\frac1{n_{\mathrm{test}}}\sum_{t=T}^{T+n_{\mathrm{test}}-1}\!\!\!Z_t
-\frac1{n_{\mathrm{test}}}\sum_{t=T}^{T+n_{\mathrm{test}}-1}\!\!\!M_t
\right|
\le
O\!\left(\sqrt{\tfrac{\log(1/\xi')}{n_{\mathrm{test}}}}\right)
+O\!\left(\sqrt{\beta_w}\right),
\label{eq:online-concentration}
\end{equation}
where the first term is the statistical fluctuation and $O(\sqrt{\beta_w})$ is the
coupling residue from window-induced overlap. Since each leaf is a subset of the
calibration set, $|\mathcal{L}|_{\min}\le n_{\mathrm{test}}$ in the regimes of interest, so the
statistical term in \eqref{eq:online-concentration} is dominated by the
finite-sample term of \eqref{eq:matched-leaf-bound} and need not be tracked
separately. 

\paragraph{Step 8: Union bound and conclusion.}
We invoke \eqref{eq:matched-leaf-bound} at every step $t$ and the concentration
bound \eqref{eq:online-concentration} once. Taking $\xi'=\xi/(n_{\mathrm{test}}+1)$
and a union bound over the $n_{\mathrm{test}}+1$ events replaces every
$\log(1/\xi')$ by $\log(n_{\mathrm{test}}/\xi)$. Combining
\eqref{eq:per-step-marginal} and \eqref{eq:online-concentration}, with probability
at least $1-\xi$ over the calibration sampling and the online process,
\begin{equation}
\frac1{n_{\mathrm{test}}}\sum_{t=T}^{T+n_{\mathrm{test}}-1}
\mathbb{1}\{y_{t+1}\in\hat{\mathcal{C}}^{\alpha}_t\}
\ \ge\ 1-\alpha-\epsilon_{\mathrm{online}},
\end{equation}
with
\begin{equation}
\epsilon_{\mathrm{online}}
\le
(1-p_{\min})\,m_{\max}
+2C\gamma
+O(\sigma_{\mathrm{mix}})
+O\!\left(\sqrt{\tfrac{\log(n_{\mathrm{test}}/\xi)}{|\mathcal{L}|_{\min}}}\right)
+O\!\left(\sqrt{\beta_w}\right)
+O(B^{-1})
+\delta^{\star}.
\end{equation}
This is exactly the bound of Eq.~\eqref{eq:epsilon-online}, where
$(1-p_{\min})m_{\max}$ is the OOD fallback error, $2C\gamma+O(\sigma_{\mathrm{mix}})$
the within-leaf bias and temporal-propagation error, the $|\mathcal{L}|_{\min}$ term the
leaf-wise finite-sample error, $O(\sqrt{\beta_w})$ the online-sequence coupling
error, $O(B^{-1})$ the ensemble-averaging error, and $\delta^{\star}$ the
adaptive-refinement offset.
\hfill$\square$

\newpage
\subsection{Discussion on Definition \ref{def:approx_exchangeability}}
\label{appx:def_discussion}

\subsubsection{Relationship between KS and TV Distances}
\label{appx:def_discussion_1}

Several recent sequential CP methods employ different notions of approximate exchangeability to handle non-stationary residuals. Most notably, NexCP with the temporal reweighting method \citep{barber2023conformal} relies on Total Variation (TV) distance to quantify distributional discrepancies.

For two probability measures $P$ and $Q$ on $\mathbb{R}$ with densities $p$ and $q$, the TV distance is defined as:
\begin{equation}
D_{\mathrm{TV}}(P, Q) = \frac{1}{2} \int_{-\infty}^\infty |p(x) - q(x)|\, dx = \sup_{A \in \mathcal{B}(\mathbb{R})} |P(A) - Q(A)|,
\end{equation}
where $\mathcal{B}(\mathbb{R})$ is the Borel $\sigma$-algebra on $\mathbb{R}$.

In contrast, Definition \ref{def:approx_exchangeability} employs the Kolmogorov--Smirnov (KS) distance:
\begin{equation}
D_{\mathrm{KS}}(P, Q) = \sup_{x \in \mathbb{R}} |F_P(x) - F_Q(x)|,
\end{equation}
where $F_P$ and $F_Q$ are the cumulative distribution functions.

These two metrics are related by the fundamental inequality:
\begin{equation}
\label{eq:ks-le-tv}
D_{\mathrm{KS}}(P, Q) \leq D_{\mathrm{TV}}(P, Q).
\end{equation}

\begin{proof}[Proof of \eqref{eq:ks-le-tv}]
For any $x_0 \in \mathbb{R}$, consider the event $A = (-\infty, x_0]$:
\begin{align}
|F_P(x_0) - F_Q(x_0)| &= |P((-\infty, x_0]) - Q((-\infty, x_0])| \\
&\leq \sup_{A \in \mathcal{B}(\mathbb{R})} |P(A) - Q(A)| = D_{\mathrm{TV}}(P, Q).
\end{align}
Taking the supremum over all $x_0 \in \mathbb{R}$ yields $D_{\mathrm{KS}}(P, Q) \leq D_{\mathrm{TV}}(P, Q)$.
\end{proof}

We emphasize that the reverse direction does not hold in general: two distributions can have nearly identical CDFs (small $D_{\mathrm{KS}}$) yet large $D_{\mathrm{TV}}$ when their densities differ on a fine scale. Thus $D_{\mathrm{KS}}$ and $D_{\mathrm{TV}}$ are not equivalent, and the choice between them is consequential for sequential CP, as we discuss next.

\subsubsection{Why KS Distance is More Suitable for Quantile-Based Sequential CP}
\label{appx:def_discussion_2}

\paragraph{1. Direct Alignment with Quantile Estimation.}
Conformal prediction fundamentally relies on quantiles of the residual distribution. The $(1-\alpha)$-quantile $q_{1-\alpha}$ satisfies:
\begin{equation}
F_P(q_{1-\alpha}) = 1 - \alpha,
\end{equation}
where $F_P$ is the CDF of the residual distribution $P$.

\textbf{KS control directly bounds quantile error.} If $D_{\mathrm{KS}}(P, Q) \leq \gamma$, then for any quantile $q_\tau^P$ of distribution $P$,
\begin{equation}
\label{eq:ks-quantile}
|F_Q(q_\tau^P) - F_P(q_\tau^P)| = |F_Q(q_\tau^P) - \tau| \leq D_{\mathrm{KS}}(P, Q) \leq \gamma.
\end{equation}
This immediately implies that the quantile $q_\tau^P$ computed from distribution $P$ provides an approximately valid $(1-\alpha)$-quantile for distribution $Q$, with controlled error $\gamma$.

\textbf{TV control provides only an indirect bound.} TV distance measures the total probability-mass difference but does not directly control pointwise CDF discrepancies. Even if $D_{\mathrm{TV}}(P, Q)$ is small, the CDFs could differ significantly at specific points relevant for quantile estimation.

\paragraph{2. A Weaker Sufficient Condition for Sequential CP.}
By \eqref{eq:ks-le-tv}, the only universally valid relation between the two metrics is $D_{\mathrm{KS}} \leq D_{\mathrm{TV}}$. Consequently, for a common tolerance $\gamma$, requiring $D_{\mathrm{TV}} \leq \gamma$ is a \emph{strictly stronger} condition than requiring $D_{\mathrm{KS}} \leq \gamma$:
\begin{itemize}
    \item \textbf{NexCP approach:} requires $D_{\mathrm{TV}}(P_t, P_{T+1}) \leq \gamma$ for approximate exchangeability.
    \item \textbf{DistMatch approach:} requires only $D_{\mathrm{KS}}(P_t, P_{T+1}) \leq \gamma$.
\end{itemize}
Crucially, valid conformal coverage depends solely on quantile (equivalently, CDF) discrepancies, which $D_{\mathrm{KS}}$ controls directly (Eq.~\eqref{eq:ks-quantile}; Lemma~\ref{lem:quantile_validity}). A $D_{\mathrm{TV}}$ bound therefore over-constrains the problem: it additionally suppresses density-level discrepancies that are irrelevant to quantile estimation, and---unlike $D_{\mathrm{KS}}$---cannot be evaluated without density estimation. DistMatch thus attains the same coverage guarantee under a \emph{weaker, density-free} matching condition, which is easier to satisfy in finite samples and yields larger, more stable leaves for the same $\gamma$.

\paragraph{3. Robustness under Skewed Residual Distributions.}
Real-world residuals often exhibit heavy tails and skewness (Figure~\ref{fig:ks_test_synthetic}). In such settings:

\textbf{KS distance is robust to tail behavior.} The KS statistic measures the maximum vertical distance between ECDFs, which is less sensitive to extreme values than density-based metrics. Even if residuals have heavy tails, the ECDF remains well-behaved, and the KS distance can be reliably estimated from finite samples.

\textbf{TV distance requires density estimation.} Computing $D_{\mathrm{TV}} = \frac{1}{2} \int |p(x) - q(x)|\, dx$ necessitates estimating the densities $p$ and $q$, which is notoriously difficult for heavy-tailed or skewed distributions. Kernel density estimation and histogram-based methods introduce additional hyperparameters (bandwidth, bin size) and can be highly biased under skewness.

\textbf{Empirical comparison.} Figure~\ref{fig:hyperparameter_ablation} demonstrates that on skewed residual distributions (e.g., the Solar and Wind datasets), KS-based matching achieves comparable or better Winkler scores than KL divergence, Wasserstein distance, and Mutual Information, while maintaining lower computational cost (Appendix~\ref{appx:motivation_ks}).

\paragraph{4. Non-Parametric and Assumption-Free.}
\textbf{The KS statistic is distribution-free.} The two-sample KS statistic assumes no parametric form for the underlying distributions. This is crucial for sequential CP, where residual distributions may change arbitrarily over time due to distributional shifts.

\textbf{TV distance often assumes density existence.} While TV can be defined measure-theoretically, its practical estimation typically assumes absolute continuity (existence of densities), which may not hold for discrete or mixed-type residuals.

\textbf{No temporal stationarity assumption.} Unlike TV-based methods that often rely on slowly-varying densities or smooth transitions \citep{barber2023conformal}, the KS statistic makes no assumptions about how distributions evolve over time. This is particularly advantageous under abrupt shifts (e.g., Figure~\ref{fig:distmatch_vs_spci}, Solar dataset).

\subsubsection{Comparison with NexCP's TV-Based Approach}
\label{appx:def_discussion_3}

NexCP \citep{barber2023conformal} defines approximate exchangeability via temporal reweighting using TV distance. Specifically, NexCP assigns weights:
\begin{equation}
w_t \propto \exp(-\rho \cdot t),
\end{equation}
where $\rho$ is a decay rate, and shows that if $D_{\mathrm{TV}}(P_t, P_{T+1}) \leq O(\rho)$, then the weighted empirical distribution approximates $P_{T+1}$.

\textbf{Key differences from DistMatch.}
\begin{enumerate}
    \item \textbf{Global vs.\ local.} NexCP enforces approximate exchangeability globally across all calibration samples via continuous weights, whereas DistMatch enforces it locally within homogeneous leaves via discrete binning.
    \item \textbf{Weight estimation.} NexCP requires tuning the decay rate $\rho$, which is sensitive to the rate of distribution drift. Misspecifying $\rho$ leads to either overfitting recent data (too large $\rho$) or ignoring shifts (too small $\rho$). DistMatch avoids weight estimation by using the data-driven threshold $\gamma$.
    \item \textbf{Computational efficiency.} NexCP recomputes weighted quantiles at every time step over the entire calibration history, incurring $O(T^2)$ cumulative cost over $T$ steps. DistMatch builds the tree once ($O(n^2 w \log n)$) and performs online updates only within the selected leaves, achieving $O(T w \log n)$ amortized cost.
    \item \textbf{Heterogeneity handling.} NexCP's exponential weighting assumes monotonic drift, struggling with non-monotonic shifts or multi-modal distributions. DistMatch naturally handles heterogeneous shifts by partitioning into multiple leaves, each capturing a distinct distribution mode (Figure~\ref{fig:exp_cluster}).
\end{enumerate}

\textbf{Empirical validation.} Table~\ref{tab:all_results_main} shows that DistMatch consistently outperforms NexCP on datasets with complex distribution shifts (Solar, META, NVDA), achieving $60$--$70\%$ narrower intervals while maintaining target coverage. This validates the advantage of KS-based local binning over TV-based global reweighting.

\subsubsection{Why Definition \ref{def:approx_exchangeability} is Appropriate for Sequential CP}
\label{appx:def_discussion_4}

We formalize why Definition~\ref{def:approx_exchangeability} (KS-based approximate local exchangeability) provides the appropriate notion of ``approximate exchangeability'' for sequential CP.

\begin{lemma}[Quantile Validity under KS-Bounded Distributions]
\label{lem:quantile_validity}
Let $\{P_t\}_{t \in \mathcal{I}(\mathcal{L})}$ be a collection of distributions satisfying
\begin{equation}
\max_{t \in \mathcal{I}(\mathcal{L})} D_{\mathrm{KS}}(P_t, P_{T+1}) \leq \gamma^\star,
\end{equation}
and let $\hat{q}_{1-\alpha}$ be the empirical $(1-\alpha)$-quantile computed from the samples $\{\varepsilon_t\}_{t \in \mathcal{I}(\mathcal{L})}$. Then, for any $\eta \in (0,1)$, with probability at least $1-\eta$,
\begin{equation}
\left| \mathbb{P}_{P_{T+1}}(\varepsilon \leq \hat{q}_{1-\alpha}) - (1-\alpha) \right|
\;\leq\; \gamma^\star + O\!\left(\sqrt{\frac{\log(1/\eta)}{|\mathcal{L}|}}\right).
\end{equation}
\end{lemma}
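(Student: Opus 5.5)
The plan is the same bias--variance decomposition as in Steps 3--5 of the proof of Theorem~\ref{thm:coverage_guarantee}, now stated for a generic KS-bounded collection. Write $n=|\mathcal{L}|$, $\hat F_{\mathcal{L}}(x)=n^{-1}\sum_{t\in\mathcal{I}(\mathcal{L})}\mathds{1}\{\varepsilon_t\le x\}$, $\bar F_{\mathcal{L}}(x)=n^{-1}\sum_{t}F_{P_t}(x)$ and $F_{T+1}=F_{P_{T+1}}$. We treat $\varepsilon$ as a fresh draw from $P_{T+1}$, independent of the leaf samples, so that $\mathbb{P}_{P_{T+1}}(\varepsilon\le\hat q_{1-\alpha})=F_{T+1}(\hat q_{1-\alpha})$ conditionally on the samples. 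The target quantity is then controlled by
\[
\bigl|F_{T+1}(\hat q_{1-\alpha})-(1-\alpha)\bigr|\le \bigl|F_{T+1}(\hat q_{1-\alpha})-\bar F_{\mathcal{L}}(\hat q_{1-\alpha})\bigr|+\bigl|\bar F_{\mathcal{L}}(\hat q_{1-\alpha})-\hat F_{\mathcal{L}}(\hat q_{1-\alpha})\bigr|+\bigl|\hat F_{\mathcal{L}}(\hat q_{1-\alpha})-(1-\alpha)\bigr|.
\]

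The \textbf{bias term} is handled directly. Since a convex combination of CDFs minus a CDF is bounded pointwise by the average of the pointwise differences, we get $\sup_x|\bar F_{\mathcal{L}}(x)-F_{T+1}(x)|\le\max_t D_{\mathrm{KS}}(P_t,P_{T+1})\le\gamma^\star$. This is exactly the mechanism of Eq.~\eqref{eq:ks-quantile}.

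The \textbf{quantization term} follows from the definition of the empirical quantile as the smallest $y$ with $\hat F_{\mathcal{L}}(y)\ge1-\alpha$. This gives $1-\alpha\le\hat F_{\mathcal{L}}(\hat q_{1-\alpha})\le 1-\alpha+1/n$ when there are no ties. With ties, I would either assume continuous $P_t$ or absorb the excess into the $O(\cdot)$. Since $1/n\le\sqrt{\log(1/\eta)/n}$ for $\eta\le e^{-1}$, the term $1/n$ is absorbed into the stated rate; for larger $\eta$, absorb it into the constant.

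The \textbf{variance term} is bounded by $\sup_x|\hat F_{\mathcal{L}}(x)-\bar F_{\mathcal{L}}(x)|$, which handles the data dependence of $\hat q_{1-\alpha}$ by uniformity. This is a uniform deviation for independent but non-identically distributed samples. I would use the DKW-type bound for non-i.i.d.\ independent variables, or, more elementarily, the bounded-differences (McDiarmid) inequality applied to the supremum. Each sample changes the supremum by at most $1/n$, giving concentration at rate $\sqrt{\log(1/\eta)/(2n)}$ around the expectation. The expectation itself is $O(1/\sqrt n)$ via symmetrization and the VC dimension of half-lines, which equals $1$. The result is $O(\sqrt{\log(1/\eta)/n})$ with probability $1-\eta$.

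The main obstacle is this last step if the leaf samples are temporally dependent rather than independent, since the lemma as stated does not specify independence. In that case I would invoke the $\beta$-mixing concentration of \cite{rio1993covariance} as in Step 4 of the proof of Theorem~\ref{thm:coverage_guarantee}, under Assumption~\ref{assum:mild_dependence}, and hide $\sigma_{\mathrm{mix}}$ in the $O(\cdot)$. Summing the three terms then yields the claim.
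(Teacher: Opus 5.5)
Your proposal is correct and is essentially the paper's own proof. It uses the same three-term triangle inequality at $x=\hat{q}_{1-\alpha}$, the same bias bound $\sup_x|\bar{F}_{\mathcal{L}}(x)-F_{P_{T+1}}(x)|\le\gamma^\star$ from averaging the per-sample KS bounds, and the same absorption of the $O(1/|\mathcal{L}|)$ quantization term. The one variation is the uniform bound on $\sup_x|\hat{F}_{\mathcal{L}}(x)-\bar{F}_{\mathcal{L}}(x)|$: the paper invokes the $\beta$-mixing inequality of \cite{rio1993covariance} directly, which is your fallback, while your McDiarmid--VC argument covers the independent case.

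Your treatment of ties is more careful than the paper's, which ignores them, but only the continuity option works. If every $P_t=P_{T+1}$ is the same point mass, then $\gamma^\star=0$ while $\mathbb{P}_{P_{T+1}}(\varepsilon\le\hat{q}_{1-\alpha})=1$. The resulting error $\alpha$ cannot be absorbed into $O(\sqrt{\log(1/\eta)/|\mathcal{L}|})$, so the lemma needs a no-atoms assumption.
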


\begin{proof}
By the definition of the empirical quantile,
\begin{equation}
\hat{F}_{\mathcal{L}}(\hat{q}_{1-\alpha}) = \frac{1}{|\mathcal{L}|} \sum_{t \in \mathcal{I}(\mathcal{L})} \mathbb{I}\{\varepsilon_t \leq \hat{q}_{1-\alpha}\} \approx 1 - \alpha.
\end{equation}
The expected CDF averaged over the leaf is
\begin{equation}
\bar{F}_{\mathcal{L}}(x) = \frac{1}{|\mathcal{L}|} \sum_{t \in \mathcal{I}(\mathcal{L})} F_{P_t}(x).
\end{equation}
By the KS bound,
\begin{equation}
|F_{P_t}(x) - F_{P_{T+1}}(x)| \leq D_{\mathrm{KS}}(P_t, P_{T+1}) \leq \gamma^\star, \quad \forall t \in \mathcal{I}(\mathcal{L}),
\end{equation}
and averaging over the leaf yields
\begin{equation}
|\bar{F}_{\mathcal{L}}(x) - F_{P_{T+1}}(x)| \leq \gamma^\star.
\end{equation}
By a bounded-difference concentration inequality for $\beta$-mixing sequences \citep{rio1993covariance}, for any $\eta \in (0,1)$, with probability at least $1-\eta$,
\begin{equation}
\sup_x |\hat{F}_{\mathcal{L}}(x) - \bar{F}_{\mathcal{L}}(x)|
\;\leq\; O\!\left(\sqrt{\frac{\log(1/\eta)}{|\mathcal{L}|}}\right).
\end{equation}
Combining these bounds at $x = \hat{q}_{1-\alpha}$, with probability at least $1-\eta$,
\begin{align}
\left| F_{P_{T+1}}(\hat{q}_{1-\alpha}) - (1-\alpha) \right|
&\leq |F_{P_{T+1}}(\hat{q}_{1-\alpha}) - \bar{F}_{\mathcal{L}}(\hat{q}_{1-\alpha})| \\
&\quad + |\bar{F}_{\mathcal{L}}(\hat{q}_{1-\alpha}) - \hat{F}_{\mathcal{L}}(\hat{q}_{1-\alpha})| \\
&\quad + |\hat{F}_{\mathcal{L}}(\hat{q}_{1-\alpha}) - (1-\alpha)| \\
&\leq \gamma^\star + O\!\left(\sqrt{\frac{\log(1/\eta)}{|\mathcal{L}|}}\right) + O\!\left(\frac{1}{|\mathcal{L}|}\right) \\
&= \gamma^\star + O\!\left(\sqrt{\frac{\log(1/\eta)}{|\mathcal{L}|}}\right),
\end{align}
where the last step absorbs the $O(1/|\mathcal{L}|)$ quantization term into the dominant $O(\sqrt{\log(1/\eta)/|\mathcal{L}|})$ term.
\end{proof}

\textbf{Interpretation.} Lemma~\ref{lem:quantile_validity} shows that Definition~\ref{def:approx_exchangeability} directly ensures valid conformal coverage, with the KS bound $\gamma^\star$ translating linearly into the coverage error. This justifies using the KS distance as the splitting criterion in Algorithm~\ref{alg:distmatch}.

\subsubsection{Practical Implications}
\label{appx:def_discussion_5}

The choice of KS distance in Definition~\ref{def:approx_exchangeability} has several practical advantages for implementing sequential CP.
\begin{enumerate}
    \item \textbf{Efficient computation.} The two-sample KS statistic can be computed in $O(n \log n)$ time by sorting, making it feasible to evaluate all pairwise comparisons during tree construction.
    \item \textbf{Interpretable threshold.} The threshold $\gamma$ has a clear interpretation: it bounds the maximum difference between CDFs at any quantile level. Practitioners can set $\gamma \in [0.005, 0.1]$ to ensure visibly similar distributions (Figure~\ref{fig:ks_test_synthetic}).
    \item \textbf{Stable under small samples.} The KS statistic converges at rate $O(1/\sqrt{n})$, faster than density-based estimators. This ensures reliable matching even when leaf sizes are moderate ($|\mathcal{L}| = 10$--$50$).
    \item \textbf{No hyperparameter sensitivity.} Unlike kernel-based methods (e.g., MMD requires bandwidth selection) or optimal transport methods (e.g., Wasserstein distance requires metric/cost specification), the KS statistic has no tunable hyperparameters beyond the data itself.
\end{enumerate}

\subsubsection{Conclusion}
\label{appx:def_discussion_6}

Definition~\ref{def:approx_exchangeability} provides a theoretically grounded and practically effective notion of approximate exchangeability for sequential CP:
\begin{itemize}
    \item \textbf{Theoretical soundness:} the KS distance directly controls quantile error (Lemma~\ref{lem:quantile_validity}), ensuring valid coverage (Theorem~\ref{thm:coverage_guarantee}).
    \item \textbf{Weaker sufficient condition:} KS-based matching achieves valid coverage under a weaker, density-free condition than TV-based reweighting, and is easier to satisfy in finite samples.
    \item \textbf{Robustness:} the KS distance is distribution-free and stable under the skewed, heavy-tailed residuals common in real-world time series.
    \item \textbf{Efficiency:} KS-based matching enables efficient tree construction and online inference.
\end{itemize}
These advantages explain why DistMatch consistently outperforms TV-based methods (NexCP) and other reweighting approaches across diverse datasets, validating the design choice of KS-based approximate local exchangeability.
\clearpage
\twocolumn
\section{Motivation of KS Statistic}
\label{appx:motivation_ks}
There are several possible candidates for distribution-matching criteria in binning-based sequential CP. However, we justify that the KS statistic is the most suitable choice.

\subsection{Divergence-Based Methods}
\label{appx:ks_divergence}
Kullback–Leibler (KL) divergence~\cite{kullback1951divergence} can be considered as a distribution-matching criterion, as it measures the discrepancy between two probability distributions. However, it requires additional assumptions on the underlying distributions, such as absolute continuity and shared support, and it diverges to infinity when the distributions do not have overlapping support~\cite{akbari2021novel}. In addition, estimating KL divergence from finite samples often relies on density estimation techniques (e.g., histograms or kernel density estimation), which introduce sensitive hyperparameters and may bias the results~\cite{wang2009divergence}. Furthermore, KL divergence is asymmetric with respect to the order of the distributions~\cite{riaz2023partial}, which can hinder exchangeability.

\subsection{Optimal Transport and Kernel-Based Methods}
\label{appx:ks_ot_kernel}
Optimal transport and kernel-based methods can also serve as distribution-matching criteria. For example, the Wasserstein distance~\cite{villani2009wasserstein}, an optimal transport metric, computes the minimum cost required to transform one distribution into another, thereby distinguishing between different distributions. Kernel-based methods, such as Maximum Mean Discrepancy (MMD) \cite{gretton2006kernel}, instead compare distributions through their embeddings in a Reproducing Kernel Hilbert Space (RKHS). While both metrics are effective at capturing global distributional shifts, they may overlook subtle local mismatches~\cite{zhu2025deep}. In addition, MMD scales quadratically with the sample size~\cite{gretton2012kernel}, and can suffer from bias and instability when distributions differ substantially~\cite{dayal2025leveraging}.

\subsection{Moment-Based Methods}
\label{appx:ks_moment}
Moment-based methods provide another class of distribution-matching criteria. For example, Correlation Alignment (CORAL) \cite{sun2017correlation} quantifies distributional differences by aligning second-order statistics, i.e., covariance matrices, between distributions. However, it struggles to capture local distributional characteristics, as it relies solely on global covariance structure and ignores higher-order or localized variations. In contrast, Central Moment Discrepancy (CMD) \cite{zellinger2017central} matches higher-order central moments to better capture local discrepancies. Nevertheless, CMD can become unstable when the distribution centers differ substantially, leading to distortions in the comparison~\cite{zhu2025deep}. As a result, its reliability degrades under skewed or heavy-tailed distributions, which are commonly encountered in sequential CP settings.

\subsection{Comparison with KS Statistic}
\label{appx:ks_comparison}
The aforementioned methods are unsuitable as distribution-matching criteria for three main reasons: (1) some rely on parametric or distributional assumptions, (2) some overlook local discrepancies, and (3) some incur high computational costs. In contrast, the KS statistic is a nonparametric measure that directly quantifies distributional similarity without requiring additional temporal assumptions or density estimation. Furthermore, by leveraging the maximum deviation between ECDFs, it captures subtle local differences with $O(n \log n)$ computational complexity and exhibits stable behavior under distribution shifts and outliers.

\subsection{KS Statistic and $p$-value}
\label{appx:ks_pvalue}
The Kolmogorov–Smirnov (KS) test is a $p$-value–based hypothesis test that explicitly requires strong temporal assumptions, such as independence or global stationarity~\cite{massey1951ks,dos2016fast}, which are often violated in real-world time series. In contrast, the KS statistic itself depends only on ECDFs and can therefore be used as a direct measure of distributional similarity without invoking formal hypothesis testing. Motivated by this property, several recent works leverage the KS statistic rather than the KS test, thereby relaxing global temporal assumptions. For example, \citet{gong2012kolmogorov} proposes a KS-statistic–based segmentation method in which segmentation and feature selection become insensitive to class imbalance and distribution skew, as the KS statistic is robust to differences in class proportions and distribution shapes. This robustness allows the method to partition complex imbalanced datasets into simpler subproblems and to select relevant features without bias toward the majority class, improving performance in settings where class imbalance would otherwise degrade standard algorithms. Another line of work explores KS-based online compression~\cite{professor_paper}, which enables direct comparison of empirical distributions without relying on $p$-values. This property yields stable similarity decisions under non-stationary streaming data and varying block sizes, where $p$-value–based tests become unreliable. DistMatch similarly employs the KS statistic as a pure measure of distributional similarity between residual sets, with the additional advantage of effectively distinguishing skewed distributions. Consequently, the KS statistic is particularly well-suited for residual-based binning in sequential conformal prediction.

\subsubsection{Comparison with Other Tree-based Methods}
\label{appx:ks_tree_comparison}  
Standard tree-based methods~\cite{breiman2001random,chen2016xgboost} operate in a supervised manner, maximizing information gain (IG) by splitting nodes to reduce label impurity (e.g., entropy or the Gini index) or prediction error. In contrast, DistMatch operates without labels and therefore redefines IG as a matching-count–based gain (MG) that measures distributional similarity, with the tree optimized to maximize MG by selecting an optimal input as an anchor. This formulation preserves the core principle of tree-based learning—maximizing IG—while enabling unsupervised, distribution-driven splitting.

\section{Extended Related Work}
\label{appdx:extended_related_work}
Online conformal inference provides an alternative approach to conformal prediction (CP) by relying on a backbone model that simultaneously produces point predictions and corresponding prediction intervals. This framework accounts for potential distribution shifts by dynamically adjusting quantile levels through online updates of the significance parameter. For example, Adaptive Conformal Inference (ACI)~\cite{gibbs2021adaptive} calibrates quantile predictions by incorporating feedback from observed miscoverage, using ground-truth outcomes \(y_t\) revealed sequentially. Building on this idea, several recent variants enhance ACI via smooth gradient-based updates~\cite{bhatnagar2023improved} or adaptive weighting schemes informed by recent prediction losses~\cite{gibbs2024conformal}. To overcome the limitations of binary feedback, more recent work introduces continuous feedback mechanisms, such as PID control~\cite{angelopoulos2023conformal} and error-aware quantile functions~\cite{wu2025errorquantified}, enabling finer-grained calibration of uncertainty. Nevertheless, a fundamental limitation of these approaches is their dependence on the quality of the backbone model’s predicted intervals: when the initial intervals are poorly estimated, subsequent calibration is also likely to be inaccurate. Moreover, because these methods primarily rely on miscoverage rates as feedback for adjusting quantiles, they offer limited control over interval width, often resulting in overly conservative prediction intervals.

\section{Extended Experiments}
\label{appx:experiments}
\subsection{Experimental Details}
\label{appx:exp_details}
\subsubsection{Datasets}
\label{appx:datasets}
To quantitatively evaluate DistMatch, we use five datasets: electricity consumption (Elec.), solar radiation (Solar), wind energy (Wind), Meta stock prices (META), and NVIDIA stock prices (NVDA). In addition, we include the Weather dataset to assess DistMatch’s performance on extreme values across diverse domains.

For the dataset setup, we follow the implementation of \citet{xu2021enbpi}. The Solar dataset consists of hourly solar radiation measurements from Atlanta in 2018, obtained from the National Solar Radiation Database (NSRDB)\footnote{https://nsrdb.nrel.gov/}, where the radiation level is used as the target variable. The Wind dataset includes hourly wind power generation data from the Hackberry Wind Farm in Austin in 2019\footnote{https://github.com/Duvey314/austin-green-energypredictor}, with wind energy generation as the target variable. The Elec. dataset corresponds to electricity pricing data collected at 30-minute intervals between 1996 and 1999 from the Elec2 dataset\footnote{https://www.kaggle.com/yashsharan/the-elec2-dataset}, capturing electricity transfers between New South Wales and Victoria in Australia, which serve as the target variable.

To evaluate DistMatch under highly non-stationary conditions, we additionally collect four months of 5-minute-level stock price data for Meta (META) and Nvidia (NVDA), spanning from 2024/12/06 to 2025/04/02. The dataset includes Open, High, Low, Close, and Volume, along with technical indicators such as RSI, momentum, Bollinger Bands, and moving averages, with the Close price used as the target variable.

Lastly, to evaluate performance across diverse time-series domains with extreme values, we include a daily Weather dataset\footnote{https://weather.com/} spanning from 2020/01/01 to 2025/08/01. The target variable is precipitation (rain). It exhibits heavy-tailed target distributions with extreme values and is used to assess the robustness of DistMatch in capturing such outliers.

\begin{table*}[!t]
\centering
\caption{Optimized hyperparameters for each method across datasets in Table~\ref{tab:all_results_main}.}
\small
\setlength{\tabcolsep}{5pt}
\renewcommand{\arraystretch}{1.2} 

\begin{tabular}{l| l| l| l| l| l| l}
\toprule
\textbf{Data} & \textbf{DistMatch} & \textbf{KOWCPI} & \textbf{HopCPT} & \textbf{SPCI} & \textbf{EnbPI} & \textbf{NexCP} \\
\midrule

Elec. & 
\begin{tabular}[c]{@{}l@{}} 
node\_split\_threshold $(\gamma)=0.05$ \\ patch\_window\_size$(w)=100$ 
\end{tabular} & 
\begin{tabular}[c]{@{}l@{}} 
bandwidth \\ $(h)=8$ 
\end{tabular} & 
\multirow{5}{*}{
    \begin{tabular}[c]{@{}l@{}} \\ \\
    learning\_rate $=0.001$ \\ 
    drop\_out $=0.5$ \\ 
    time\_encode $=400$ 
    \end{tabular}
} & 
\multirow{5}{*}{
    \begin{tabular}[c]{@{}l@{}} \\ \\
    window\_ \\ size $=100$ 
    \end{tabular}
} & 
\multirow{5}{*}{
    \begin{tabular}[c]{@{}l@{}} \\ \\
    window\_ \\ size $=100$ 
    \end{tabular}
    } & 
\multirow{5}{*}{
    \begin{tabular}[c]{@{}l@{}}  \\ \\
    decay \\ $(\rho)=0.99$ 
    \end{tabular}
} \\
\cline{1-3}

Solar & 
\begin{tabular}[c]{@{}l@{}} node\_split\_threshold $(\gamma)=0.005$ \\ patch\_window\_size $(w)=100$ \end{tabular} & 
\begin{tabular}[c]{@{}l@{}} bandwidth \\ $(h)=8$ \end{tabular} & & & & \\
\cline{1-3}

Wind & 
\begin{tabular}[c]{@{}l@{}} node\_split\_threshold $(\gamma)=0.1$ \\ patch\_window\_size $(w)=100$ \end{tabular} & 
\begin{tabular}[c]{@{}l@{}} bandwidth \\ $(h)=8$ \end{tabular} & & & & \\
\cline{1-3}

META & 
\begin{tabular}[c]{@{}l@{}} node\_split\_threshold $(\gamma)=0.1$ \\ patch\_window\_size $(w)=100$ \end{tabular} & 
\begin{tabular}[c]{@{}l@{}} bandwidth \\ $(h)=5$ \end{tabular} & & & & \\
\cline{1-3}

NVDA & 
\begin{tabular}[c]{@{}l@{}} node\_split\_threshold $(\gamma)=0.1$ \\ patch\_window\_size $(w)=100$ \end{tabular} & 
\begin{tabular}[c]{@{}l@{}} bandwidth \\ $(h)=3$ \end{tabular} & & & & \\

\bottomrule
\end{tabular}
\label{tab:hyperparams_all}
\end{table*}
\begin{table}[ht]
    \centering
    \caption{Hyperparameter ranges used for tuning.}
    \resizebox{\linewidth}{!}{
    \begin{tabular}{|l|l|l|}
    \hline
    \textbf{Method} & \textbf{Hyperparameter}        & \textbf{Value} \\ \hline
    \multirow{4}{*}{DistMatch}
    & node\_split\_ & 0.005, 0.01, 0.025, 0.05, \\ 
    & threshold ($\gamma$) &  0.075, 0.1 \\ \cline{2-3}
    & patch\_window\_size  & 5, 10, 25, \\ 
    & ($w$) & 50, 100, 150  \\ \hline
    \multirow{3}{*}{HopCPT}
    & learning\_rate & 0.001, 0.01 \\ \cline{2-3}
    & drop\_out & 0, 0.25, 0.5 \\ \cline{2-3}
    & time\_encode & yes, no \\ \hline
    \multirow{2}{*}{SPCI}
    & \multirow{2}{*}{window\_size} & 10, 25, 50, 75  \\
    & & 100, 125, 150, 200 \\ \hline
    \multirow{2}{*}{EnbPI} 
    & \multirow{2}{*}{window\_size} & 10, 25, 50, 75  \\
    & & 100, 125, 150, 200 \\ \hline
    \multirow{2}{*}{NexCP}
    & \multirow{2}{*}{decay ($\rho$)} & 0.90, 0.95, 0.98, 0.99 \\
    & & 0.993, 0.995, 0.999  \\ \hline
    KOWCPI & bandwidth ($h$) & 8, 10, 12, 15, 20 \\ \hline
    \end{tabular}}
    \label{tab:hyperparams}
\end{table}

\subsubsection{Synthetic Dataset}
\label{appx:synthetic}
The synthetic data consists of three components with two predefined clusters, as follows:
\paragraph{Piecewise trend.}
For $m\in\{0,1,2,3,4\}$ and local index $u\in\{0,\dots,279\}$ with $t=280m+u$,
the trend $\mu_t$ is
\begin{equation}
\mu_t=
\begin{cases}
-1, & 0\le u<80,\\
4,  & 80\le u<160,\\
4-0.1\,(u-160), & 160\le u<240,\\
-4+0.2\,(u-240), & 240\le u<280.
\end{cases}
\end{equation}
Let $T$ be the total length. Define a base 280-length motif repeated 5 times:
\begin{equation}
    T = 5(80+80+80+40)=1400.
\end{equation}

\paragraph{AR(1) noise.}
Let $\eta_t\sim\mathcal{N}(0,\sigma^2)$ i.i.d. with $\sigma=0.2$ and $\phi=0.8$.
Define $e_0\sim \mathcal{N}(0,\sigma^2)$ and
\begin{equation}
    e_t=\phi e_{t-1}+\eta_t,\qquad t=1,\dots,T-1.
\end{equation}

\paragraph{Grouped Gaussian noise.}
Partition $\{0,\dots,T-1\}$ into $G=3$ contiguous groups with sizes
$n_g\in\{\lfloor T/G\rfloor,\lceil T/G\rceil\}$ (first groups take the remainder),
and for each $t$ in group $g$,
\begin{equation}
\begin{split}
    &g_t \sim \mathcal{N}(m_g,s_g^2),\\
    &(m_1,m_2,m_3)=(-0.2,0,0.2),\\
    &(s_1,s_2,s_3)=(0.1,0.3,0.2),
\end{split}
\end{equation}
independently across $t$ (conditional on the grouping).

\paragraph{Observed series.}
The (noise-added) series is
\begin{equation}
    x_t=\mu_t + e_t + g_t,\qquad t=0,\dots,T-1.
\end{equation}

\subsubsection{Hyperparameters}
\label{appx:hyperparameters}  
As shown in Table~\ref{tab:hyperparams}, DistMatch has two primary tunable hyperparameters: the KS threshold $\gamma$ and the patch window size $w$. Other hyperparameters—such as the number of trees $B$, the bootstrapping ratio $\theta$, and the minimum number of samples per leaf $n_{\min}$—are fixed to $B = 10$, $\theta = 0.9$, and $n_{\min} = 0$. Empirically, most leaf nodes (except for the leftmost) contain, on average, more than 10 samples, rendering the effect of $n_{\min}$ negligible; thus, it is set to its minimal default value $0$. For the tunable hyperparameters, the node-splitting threshold $\gamma$ is selected via greedy search on the calibration set across all datasets, and the patch length $w$ is set to 100 for a conservative upper bound to satisfy Theorem \ref{thm:coverage_guarantee}. For all baseline methods, hyperparameters are also tuned via greedy search, except when optimal settings are already reported in \citet{auer2023hopcpt}. The optimized hyperparameters used for Table~\ref{tab:all_results_main} are reported in Table~\ref{tab:hyperparams_all}.

\subsubsection{Point Predictors}
\label{appx:point_predictors}  
To demonstrate the robustness of DistMatch across arbitrary point predictors—a core goal of CP—we evaluate it using three models: Random Forest \cite{breiman2001random}, Light Gradient Boosting Machine (LGBM) \cite{ke2017lightgbm}, and Temporal Convolutional Network (TCN) \cite{lea2017temporal}. The first two are tree-based ensemble methods, whereas the third is a deep neural network. Random Forest constructs an ensemble of decision trees in which each internal node represents a data-driven decision rule. Although LGBM is also tree-based, it differs in its tree construction strategy by employing Gradient-based One-Side Sampling (GOSS) and Exclusive Feature Bundling (EFB) to efficiently select splits, rather than relying on naive IG maximization. In contrast, TCN is a residual deep network built on causal convolutions, making it particularly effective at capturing temporal dependencies in time series data.

\subsubsection{Evaluation Metrics}
\label{appx:eval_metrics}
The two primary metrics for evaluating conformal prediction methods are coverage and interval width. Coverage measures whether the ground-truth value lies within the prediction interval, while interval width quantifies the tightness of that interval. Following \citet{auer2023hopcpt}, we allow a miscoverage fluctuation margin of $0.25\alpha$. Ultimately, we compare conformal prediction methods based on their ability to construct narrow prediction intervals while satisfying the coverage constraint, since excessively wide intervals—although achieving high coverage—have limited practical utility.

In addition to coverage and interval width, we use the Winkler score ($\mathrm{WS}$) \cite{winkler1972decision,auer2023hopcpt} to jointly account for both interval width and coverage in a single metric, defined as:
\begin{align}
&\mathrm{WS}_\alpha = \mathrm{IW} + \frac{2}{\alpha} \cdot \lambda, \\
&\lambda = \begin{cases}
y - \mathcal{C}_\alpha^{\mathrm{u}} & \text{if } y > \mathcal{C}_\alpha^{\mathrm{u}}, \\
\mathcal{C}_\alpha^{\mathrm{l}} - y & \text{if } y < \mathcal{C}_\alpha^{\mathrm{l}}, \\
0 & \text{otherwise}.
\end{cases}
\label{eq:ws}
\end{align}

Here, \(\mathcal{C}_\alpha^{\mathrm{u}}\) and \(\mathcal{C}_\alpha^{\mathrm{l}} \) denote the upper and lower bounds of the prediction region, respectively, and the interval width is defined as: 
\begin{equation}
    \mathrm{IW}_t^\alpha(x_{t+1}) = \mathcal{C}_\alpha^{\mathrm{u}}(x_{t+1}) - \mathcal{C}_\alpha^{\mathrm{l}}(x_{t+1}).
\end{equation}
The Winkler score increases when the prediction interval is either far from the ground truth or excessively wide, and is maximized when both occur simultaneously, indicating ineffective uncertainty quantification despite nominal coverage. Following \citet{auer2023hopcpt}, we report the normalized Winkler score, computed over normalized prediction regions.

\subsection{Additional Experiments}
\label{appx:add_experiments}
\subsubsection{Quantile Regression}
\label{sec:ablation_studies_qrf}
\begin{table}[h]
\centering
\caption{Comparison between DistMatch and SPCI using a \textit{Quantile Linear} as the quantile regressor. RF is used as the point predictor, and the significance level is set to \(\alpha = 0.1\).}
    \begin{tabular}{|l|c|c|c|c|}
    \hline
    Model & \multicolumn{2}{c|}{DistMatch} & \multicolumn{2}{c|}{SPCI} \\
    \hline
    Metric & Cov. $\uparrow$ & Width $\downarrow$ & Cov. $\uparrow$ & Width $\downarrow$  \\
    \hline
    Elec. & \textbf{0.90} & \textbf{0.54} & 0.18 & 0.13 \\
    \hline
    \end{tabular}
    \label{tab:quantile_linear}
\end{table}

While DistMatch effectively clusters residuals into approximately exchangeable subsets within each leaf, it requires a quantile regressor for final quantile estimation. Following SPCI~\cite{xu2023spci}, we adopt Quantile Regression Forests (QRF) and justify their suitability as follows. Transformer-based quantile models~\cite{lee2024transformer}, for instance, apply learned nonlinear mappings that do not preserve the original residual distribution. Such transformations can disrupt the approximate exchangeability achieved by DistMatch’s binning of distributionally homogeneous samples. Similarly, although Flow Matching~\cite{lee2025flow} guarantees the marginal distribution, it adapts the conditional flow at each time point based on the specific historical context. Consequently, it becomes order-sensitive and violates the exchangeability assumption required for conformal prediction. Quantile Linear Regression~\cite{koenker1978regression}, on the other hand, estimates quantiles asymmetrically via the pinball loss but relies on a linear mapping of potentially nonlinear residual distributions, leading to substantial information loss~\cite{meinshausen2006quantile}. In contrast, QRF preserves the local empirical distribution within each leaf—formed through DistMatch’s KS-based binning—by employing tree-based partitioning without imposing parametric transformations. Therefore, QRF serves as a natural and well-aligned quantile estimator for DistMatch, as it maintains the original distributional structure within each approximately exchangeable leaf. Specifically, we additionally note that the key difference between QRF and reweighting lies in whether they are used to enforce approximate exchangeability. In reweighting methods, weights are assigned globally across heterogeneous calibration samples to approximate exchangeability itself. In contrast, DistMatch ensures approximate exchangeability prior to QRF via KS-based binning without weighting. The QRF weights are then used solely to estimate the future residual, not to correct for exchangeability.

As shown in Table~\ref{tab:quantile_linear}, we evaluate the performance of Quantile Linear Regression in place of QRF for both DistMatch and SPCI. While DistMatch still meets the target coverage, it yields wider prediction intervals compared to the QRF-based results. In contrast, SPCI fails to achieve the target coverage, with a substantial drop in empirical coverage, which limits its practical applicability across different choices of quantile regressors.

\subsubsection{Comparison with Online Conformal Inference Methods}
\label{appx:aci_comparison} 
\begin{table}[h]
\centering
\caption{Comparison between DistMatch and the online conformal inference method, Adaptive Conformal Inference (ACI). LGBM is used as the backbone model, as it supports both point prediction and uncertainty estimation, and the significance level is set to \(\alpha = 0.1\).}
    \resizebox{0.8\linewidth}{!}{
    \begin{tabular}{|l|c|c|c|c|}
    \hline
    Model & \multicolumn{2}{c|}{DistMatch} & \multicolumn{2}{c|}{ACI \citeyearpar{gibbs2021adaptive}} \\
    \hline
    Metric & Cov. $\uparrow$ & Width $\downarrow$ & Cov. $\uparrow$ & Width $\downarrow$  \\
    \hline
    Elec. & 0.93 & \textbf{0.25}  & 0.94 & 0.53 \\
    Solar & 0.89 & \textbf{85.71} & 0.89 & 93.25 \\
    Wind  & 0.89 & \textbf{60.75} & 0.90 & 134.07 \\
    \hline
    \end{tabular}}
    \label{tab:aci}
\end{table}

\begin{table*}[!ht]
    \centering
    \caption{Quantitative evaluation of sequential CP algorithms on Weather (Rain) datasets. The significance level $\alpha$ is set to $0.1$. We utilize Random Forest (Forest), LightGBM (LGBM), and TCN for point predictors. The smallest (best) widths and Winkler (Win.) scores are highlighted with \textbf{bold} font. The ``Cov." with a miscoverage rate higher than $0.25 \alpha$ are not considered and are thus greyed out.}
    \resizebox{\linewidth}{!}{
    \begin{tabular}{|c|c|c|c|c|c|c|c|c|c|c|}
    \hline
         \multicolumn{3}{|c|}{UC Model} &
         DistMatch &
         KOWCPI &
         HopCPT &
         SPCI &
         EnbPI &
         NexCP &
         CP &
         Copula \\
    \hline
        \multirow{9}{*}{\rotatebox{90}{Rain}} & \multirow{3}{*}{\rotatebox{90}{Forest}}
& Win. $\downarrow$ & $\boldsymbol{6.80^{\pm 0.25}}$ & $\color{gray}{7.19^{\pm 0.00}}$ & $7.30^{\pm 0.54}$ & $\color{gray}{7.29^{\pm 0.04}}$ & $\color{gray}{7.45}$ & $7.06$ & $7.04$ & $7.05$ \\
& & Cov. $\uparrow$ & $0.88^{\pm 0.01}$ & $\color{gray}{0.86^{\pm 0.00}}$ & $0.91^{\pm 0.03}$ & $\color{gray}{0.83^{\pm 0.01}}$ & $\color{gray}{0.85}$ & $0.89$ & $0.90$ & $0.90$ \\
& & Width $\downarrow$ & $\boldsymbol{0.06^{\pm 0.00}}$ & $\color{gray}{0.06^{\pm 0.00}}$ & $0.08^{\pm 0.02}$ & $\color{gray}{0.05^{\pm 0.00}}$ & $\color{gray}{0.05}$ & $0.06$ & $0.07$ & $0.07$ \\
\cline{2-11}
& \multirow{3}{*}{\rotatebox{90}{LGBM}}
& Win. $\downarrow$ & $5.24^{\pm 0.05}$ & $5.10^{\pm 0.00}$ & $\boldsymbol{4.95^{\pm 0.09}}$ & $\color{gray}{5.13^{\pm 0.02}}$ & $\color{gray}{5.34}$ & $5.00$ & $5.02$ & $5.04$ \\
& & Cov. $\uparrow$ & $0.89^{\pm 0.00}$ & $0.90^{\pm 0.00}$ & $0.90^{\pm 0.01}$ & $\color{gray}{0.85^{\pm 0.01}}$ & $\color{gray}{0.85}$ & $0.88$ & $0.90$ & $0.90$ \\
& & Width $\downarrow$ & $\boldsymbol{0.04^{\pm 0.00}}$ & $0.05^{\pm 0.00}$ & $0.06^{\pm 0.00}$ & $\color{gray}{0.04^{\pm 0.00}}$ & $\color{gray}{0.04}$ & $0.05$ & $0.05$ & $0.05$ \\
\cline{2-11}
& \multirow{3}{*}{\rotatebox{90}{TCN}}
& Win. $\downarrow$ & $4.90^{\pm 0.07}$ & $64.44^{\pm 0.07}$ & $\boldsymbol{4.52^{\pm 0.12}}$ & $\color{gray}{4.91^{\pm 0.02}}$ & $\color{gray}{4.95}$ & $4.81$ & $4.94$ & $4.86$ \\
& & Cov. $\uparrow$ & $0.89^{\pm 0.00}$ & $0.90^{\pm 0.01}$ & $0.92^{\pm 0.01}$ & $\color{gray}{0.86^{\pm 0.00}}$ & $\color{gray}{0.86}$ & $0.90$ & $0.93$ & $0.92$ \\
& & Width $\downarrow$ & $\boldsymbol{0.04^{\pm 0.00}}$ & $0.05^{\pm 0.00}$ & $0.05^{\pm 0.01}$ & $\color{gray}{0.04^{\pm 0.00}}$ & $\color{gray}{0.04}$ & $\boldsymbol{0.04}$ & $0.05$ & $0.05$ \\
\hline
    \end{tabular}}
    \label{tab:extreme_results}
\end{table*}
To assess DistMatch against online conformal inference methods, we compare it with Adaptive Conformal Inference (ACI)~\cite{gibbs2021adaptive}. As shown in Table~\ref{tab:aci}, ACI produces prediction intervals that are more than twice as wide as those of DistMatch in Elec. and Wind, despite achieving a comparable coverage rate. This empirical result supports the observation that uncertainty calibration–based methods heavily depend on the quality of the backbone model’s uncertainty estimates and often fail to produce meaningfully tight intervals, as discussed in Appendix Section~\ref{appdx:extended_related_work}. Note that ACI is clipped at every step to ensure that $\alpha_t \in [0, 1]$. Whenever $\alpha_t$ goes beyond this range, the clipped value is used in the subsequent step.

\subsubsection{Additional Datasets on Extreme Values}
\label{appx:heavy_tail}
\begin{figure}[ht]
\centering
 \includegraphics[width=\linewidth]{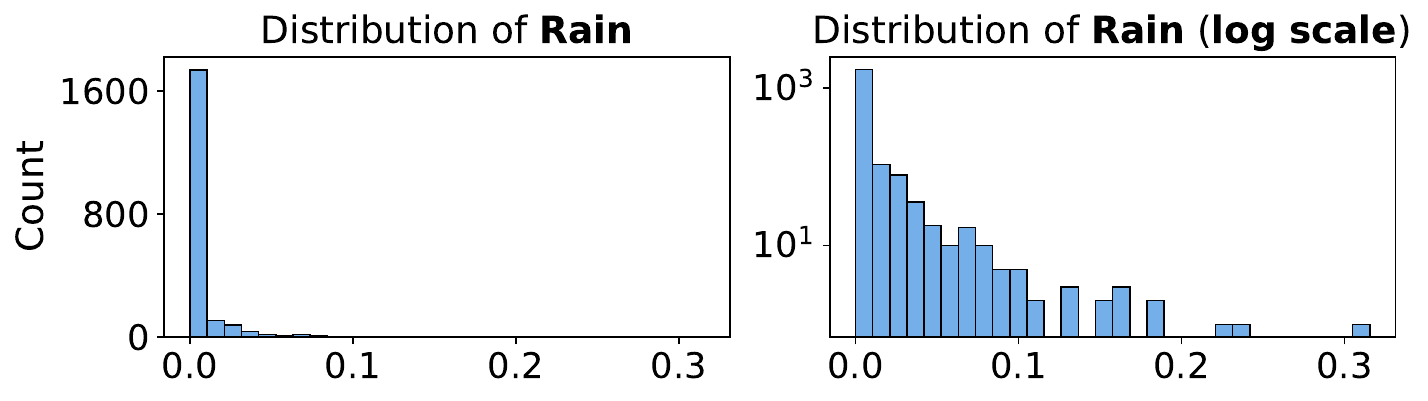}
\caption{The distributions of Weather (Rain) dataset exhibit non-normality and heavy tails. The right plots show the same data as the left, with the log scale of the $y$-axis.}
\label{fig:add_dataset_extreme}
\end{figure}
To evaluate DistMatch under extreme-value regimes, we additionally conduct experiments on the Weather dataset, predicting rainfall. As shown in Figure~\ref{fig:add_dataset_extreme}, the target variables exhibit heavy-tailed, non-normal distributions. Quantitatively, as reported in Table~\ref{tab:extreme_results},  HopCPT and DistMatch show competitive performance. HopCPT attains higher empirical coverage but at the cost of substantially wider prediction intervals. In contrast, DistMatch maintains the target coverage while producing considerably narrower intervals.

\subsubsection{OOD-Leaf Analysis}
\label{appx:ood}
\begin{table}[ht]
\centering
\caption{ The table presents: 1) $\mathcal{H}$-ratio: Homogeneity ratio (ratio of values under the similar distribution in the leftmost node), 2) $K$-leaves: average number of leaves per tree in a bootstrapped DistMatch, and 3) OOD-ratio: $n_{\mathrm{ood}}$ / $n_{\mathrm{calib}}$. $\mathrm{G}$ denotes the group clustered by homogeneity ratio, where (a), (b), and (c) correspond to each group. 
}
\resizebox{\linewidth}{!}{
\begin{tabular}{|c|l|c|c|c|}
\hline
$\mathrm{G}$ & Data & \textbf{$\mathcal{H}$-ratio} & \textbf{$K$-Leaves} & \textbf{OOD-ratio (\%)}\\
\hline
\multirow{2}{*}{(a)} 
    & Solar & $0.93$    & $5.10$    & $0.04 (0.30 / 800)$  \\
    & Rain  & $0.88$    & $6.50$    & $0.10 (0.80 / 816)$  \\
\hline
\multirow{2}{*}{(b)} 
    & Elec. & $0.64$    & $26.10$   & $0.41(5.70 / 1378)$ \\
    & Wind  & $0.46$    & $55.70$   & $0.00 (0.00 / 3500)$ \\
\hline
\multirow{2}{*}{(c)} 
    & META  & $0.27$    & $38.50$   & $0.96 (19.2 / 2000)$ \\
    & NVDA  & $0.13$    & $65.50$   & $1.57 (38.4 / 2445)$ \\
\hline
\end{tabular}}
\label{tab:ood}
\end{table}

To empirically validate Proposition~\ref{prop:online-coverage}, we analyze the leftmost leaf across all datasets. In Table~\ref{tab:ood}, the $\mathcal{H}$-ratio denotes the within-leaf homogeneity of the leftmost leaf, while the OOD-ratio represents the proportion of out-of-distribution (OOD) samples relative to the total calibration set. Table~\ref{tab:ood} shows that datasets can be grouped by the magnitude of their $\mathcal{H}$-ratio according to their characteristics, and that the more nonstationary the data, the lower the $\mathcal{H}$-ratio of the leftmost leaf. As shown in Proposition~\ref{prop:online-coverage}, this means that the more nonstationary the data, the lower the probability that an unseen sample is assigned to the matched leaf among the $B$ bootstrapped trees; at the same time, this keeps the other (right) leaves homogeneous.

\subsubsection{Trade-off Analysis on Calibration Size}
\label{appx:tradeoff_calibration}
\begin{figure}[h]
\centering
 \includegraphics[width=0.5\linewidth]{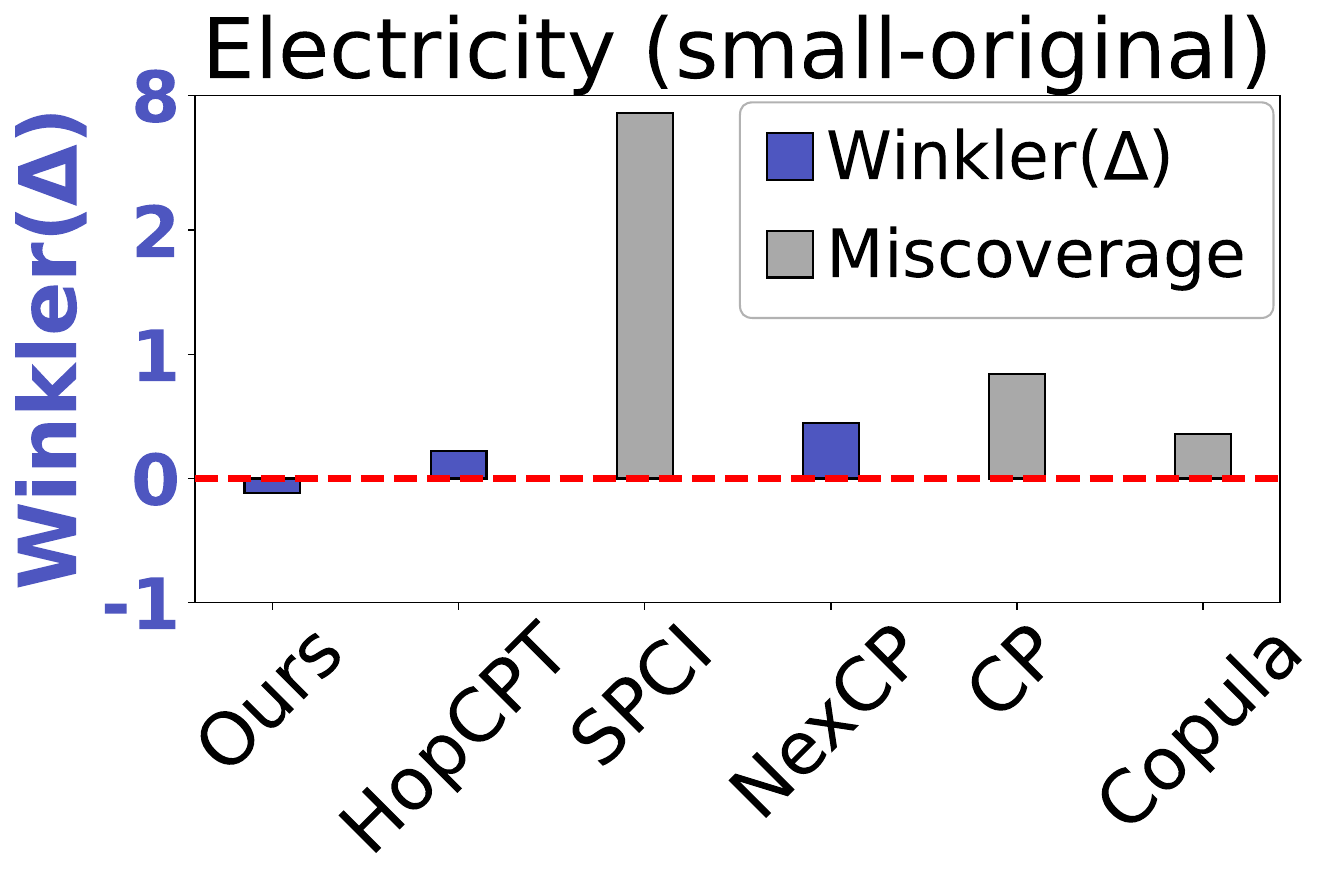}
 \raisebox{1.5ex}{\includegraphics[width=0.43\linewidth]{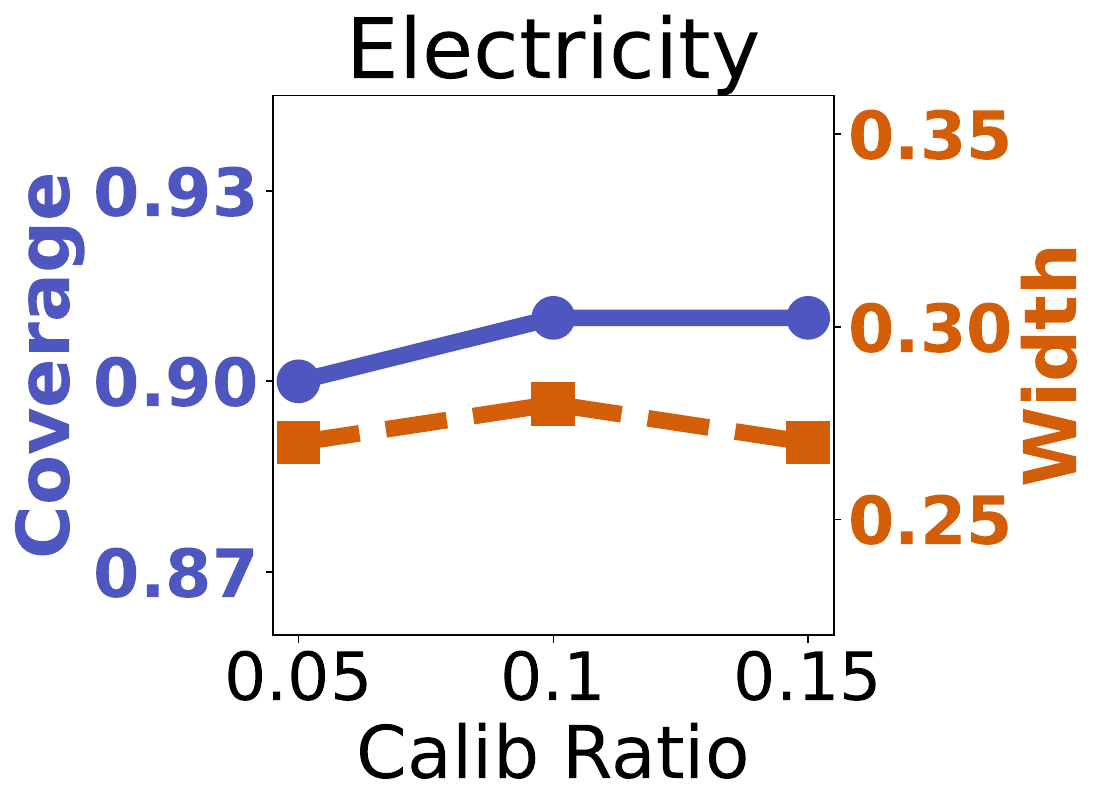}}
\caption{The left subplot shows Winkler score differences (log-scaled) between half and full datasets, and the right shows performance under varying calibration ratios. Both use an RF predictor at $\alpha=0.1$.}
\label{fig:winkler_small}
\end{figure}

To assess robustness under data scarcity, we evaluate DistMatch on half-sized datasets and under varying calibration ratios, while keeping the training set fixed. As shown in Figure \ref{fig:winkler_small}, despite more dispersed residuals in the half-sized setting, DistMatch maintains reliable coverage with competitive Winkler scores, whereas SPCI, Copula, and standard CP fail to meet the coverage target, and HopCPT and NexCP exhibit higher Winkler scores. When varying the calibration ratio, DistMatch continues to preserve both coverage and interval width even with substantially fewer calibration samples.

\subsubsection{Trade-off Analysis on Within-Leaf Sample Size}
\begin{table}[!ht]
    \centering
    \caption{Comparison between the original DistMatch, DistMatch-$\mathrm{E}$ (early-discarding variant), and DistMatch-$\mathrm{O}$ (optimal-sample variant). The significance level is set to $\alpha$ is set to $0.1$, and an RF point predictor is used.}
    \resizebox{\linewidth}{!}{
    \begin{tabular}{|c|c|c|c|c|c|}
    \hline
         \multicolumn{2}{|c|}{Model} &
         DistMatch &
         DistMatch-$\mathrm{E}$ &
         DistMatch-$\mathrm{O}$
          \\
    \hline
\multirow{3}{*}{\rotatebox{90}{Elec.}} 
& Win. $\downarrow$ & $\textbf{1.97}$ & ${2.03}$ & $2.12$  \\
& Cov. $\uparrow$ & $0.92$ & $0.92$ & $0.92$  \\
& Width $\downarrow$ & $\textbf{0.27}$ & ${0.28}$ & $0.32$  \\
\hline
\multirow{3}{*}{\rotatebox{90}{Solar}}
& Win. $\downarrow$ & $\textbf{1.54}$ & $1.57$ & $1.64$  \\
& Cov. $\uparrow$ & $0.91$ & $0.91$ & $0.90$  \\
& Width $\downarrow$ & $\textbf{60.00}$ & $64.82$ & $59.22$  \\
\hline
\multirow{3}{*}{\rotatebox{90}{Wind}}
& Win. $\downarrow$ & ${2.15}$ & $\textbf{2.13}$ & $\color{gray}{2.19}$  \\
& Cov. $\uparrow$ & $0.90$ & $0.89$ & $\color{gray}{0.86}$  \\
& Width $\downarrow$ & ${69.04}$ & $\textbf{67.35}$ & $\color{gray}{65.27}$  \\
\hline
    \end{tabular}
    }
    \label{tab:leaf_size}
\end{table}
\label{appx:tradeoff_analysis}
We study the trade-off between limiting the number of samples ($\leq300$) retained within each leaf and using the full set of residuals without restriction. The early-discarding variant is denoted as DistMatch-$\mathrm{E}$, while the variant that selects an optimal subset under a fixed leaf sample budget is denoted as DistMatch-$\mathrm{O}$. As shown in Table~\ref{tab:leaf_size}, retaining all residuals within each leaf yields the best overall performance, whereas early discarding achieves the best results on the \textit{Wind} dataset and second best for other datasets. Importantly, DistMatch-$\mathrm{E}$ achieves better performance than DistMatch-$\mathrm{O}$, demonstrating that recent samples are often more informative than similar samples, a fact that many existing methods implicitly rely on.

\subsubsection{Analysis on Retraining DistMatch in Online Deployment}
\label{appx:retraining}
DistMatch operates robustly under severe distribution shifts in online deployment and does not require frequent updates. As a result, the training cost is incurred only once, after which it enables efficient, low-cost inference.

To empirically validate, we compare two variants of DistMatch with different online update strategies. All variants are evaluated on the same five datasets as in Table \ref{tab:all_results_main}, under identical settings.
\begin{itemize}
    \item \textbf{DistMatch-update:} The entire matching tree is retrained from scratch every time 300 unseen samples accumulate.
    \item \textbf{DistMatch-leaf only update:} The learned tree structure is fixed; however, as samples accumulate within a leaf, any sample satisfying Eq. \eqref{eq:match_ks}—i.e., qualifying as a new split anchor—can trigger further partitioning of that leaf, enabling local structural adaptation without full tree retraining.
\end{itemize}

\begin{table}[ht]
\centering
\caption{Comparison between the DistMatch updating strategy variants in the online deployment setting. The significance level $\alpha$ is set to $0.1$, and an RF point predictor is used.}
\small
\resizebox{\linewidth}{!}{
\begin{tabular}{|l|c|c|c|c|c|c|}
\hline
  Model & \multicolumn{2}{c|}{\makecell{DistMatch-\\default}} & \multicolumn{2}{c|}{\makecell{DistMatch-\\update}} & \multicolumn{2}{c|}{\makecell{DistMatch-\\leaf only update}} \\
\hline
Metric & Cov. $\uparrow$  & Width $\downarrow$  & Cov. $\uparrow$  & Width $\downarrow$  & Cov. $\uparrow$ & Width $\downarrow$\\
\hline
Elec. & $0.92$ & $\textbf{0.27}$ & $0.91$ & $\textbf{0.27}$ & $\color{gray}{0.61}$ & $\color{gray}{0.18}$ \\
Solar & $0.91$ & $\textbf{60.00}$ & $0.90$ & $61.19$ & $\color{gray}{0.84}$ & $\color{gray}{59.21}$ \\
Wind & $0.90$ & $69.04$ & $0.88$ & $\textbf{67.49}$ & $\color{gray}{0.60}$ & $\color{gray}{57.24}$ \\
META & $0.90$ & $\textbf{4.28}$ & $0.93$ & $11.30$ & $\color{gray}{0.47}$ & $\color{gray}{4.14}$ \\
NVDA & $0.90$ & $\textbf{2.71}$ & $0.96$ & $3.55$ & $\color{gray}{0.55}$ & $\color{gray}{1.22}$ \\
\hline
\end{tabular}}
\label{tab:online}
\end{table}
As shown in Table \ref{tab:online}, the fixed tree achieves the best performance. The updated variants do not yield improvements, as updates increase tree complexity and depth, increasing the variance term in Theorem \ref{thm:coverage_guarantee} while leaving the bias unchanged. In contrast, the fixed tree balances tree complexity and leaf sample size by learning the tree under a limited calibration budget. This suggests that frequent retraining or structural updates are unnecessary, as fixed DistMatch already provides sufficient adaptability, as shown in Table \ref{tab:all_results_main} with META and NVDA.

\subsubsection{Analysis on Multistep Prediction Settings}
\label{appx:multistep}
For multi-step prediction, DistMatch extends naturally by replacing $\varepsilon_{t+1}$ with $\varepsilon_{t+h}$ for horizon $h$, while keeping the patch structure unchanged. As shown in Table~\ref{tab:multivariate}, DistMatch outperforms SPCI in multistep prediction.

\begin{table}[h]
\centering
\caption{Comparison between the DistMatch and SPCI in a multistep setting. The significance level $\alpha$ is set to $0.1$, and an RF point predictor is used.}
    \resizebox{0.8\linewidth}{!}{
    \begin{tabular}{|l|c|c|c|c|}
    \hline
    Model & \multicolumn{2}{c|}{DistMatch} & \multicolumn{2}{c|}{SPCI} \\
    \hline
    Metric & Cov. $\uparrow$ & Width $\downarrow$ & Cov. $\uparrow$ & Width $\downarrow$  \\
    \hline
    Elec. & 0.90 & \textbf{0.28}  & 0.89 & 0.29 \\
    Solar & 0.90 & \textbf{73.16} & \color{gray}{0.83} & \color{gray}{64.78} \\
    Wind  & 0.90 & \textbf{69.80} & \color{gray}{0.83} & \color{gray}{61.55} \\
    META  & 0.88 & \textbf{12.49} & \color{gray}{0.60} & \color{gray}{10.27} \\
    NVDA  & 0.91 & \textbf{3.25}  & \color{gray}{0.82} & \color{gray}{3.25} \\
    \hline
    \end{tabular}}
    \label{tab:multivariate}
\end{table}

For multivariate prediction, since the KS statistic is defined for univariate distributions, DistMatch requires independent instances per output dimension. However, this is a shared limitation across sequential CP methods, as most are also designed for univariate settings. Extending DistMatch to joint multivariate distributions via multivariate nonparametric statistics remains an interesting direction for future work.

\subsubsection{Extended Results with Various Significance Levels}
\label{appx:whole_results}
We report extended results across significance levels and additional point predictors, including LGBM and TCN. The significance level $\alpha$ is the most critical hyperparameter in conformal prediction, as robustness across different values of $\alpha$ reflects a method’s reliability in identifying valid prediction regions. To evaluate this robustness, we conduct experiments over multiple significance levels, $\alpha \in \{0.05, 0.10, 0.15\}$, with quantitative results reported in Tables~\ref{tab:all_results}, \ref{tab:all_results_alpha_0.05}, and \ref{tab:all_results_alpha_0.15}. In addition, qualitative results for $\alpha = 0.1$ are presented in Figures~\ref{fig:elec} and \ref{fig:meta}.

Across all settings, DistMatch consistently attains the target coverage while maintaining the narrowest or comparable prediction intervals. Its advantage is particularly pronounced on highly nonstationary datasets, where it achieves low miscoverage rates. For instance, at $\alpha = 0.05$, DistMatch is the only method that consistently satisfies the target coverage with meaningful interval widths on the \textit{Solar} and \textit{Stock} datasets. 

\section{Limitation and Future Work}
\label{appx:limitation}
A key limitation of DistMatch is that the leftmost leaf often serves as an out-of-distribution (OOD) leaf, where quantile estimation performance may degrade. Although DistMatch leverages bootstrapping to promote generalization from a limited calibration set, it can become vulnerable when the unseen distribution undergoes substantial shifts. This observation motivates two promising directions for future work. First, we propose an online updating strategy that continuously monitors the $\mathcal{H}$-ratio of the OOD leaf. When the degree of partial OOD exceeds a predefined threshold, the leaf can be further split to restore distributional homogeneity, allowing the model to adapt to evolving test distributions flexibly. Second, we suggest learning to adaptively adjust the threshold used in the KS statistic, enabling DistMatch to handle a broader range of distributional patterns more robustly. Together, these directions aim to extend DistMatch with online adaptation mechanisms and improve its robustness under distributional shifts in the unseen set.

\newpage
\begin{table*}[!ht]
    \centering
    \caption{Quantitative evaluation of sequential CP algorithms on Electricity consumption (Elec.), Solar radiation (Solar), Wind energy (Wind), META stock price (META), and NVIDIA stock price (NVDA) datasets. The significance level $\alpha$ is set to $0.1$ for all experiments. We utilize Random Forest (Forest), LightGBM (LGBM), and TCN for point predictors. The smallest (best) widths and Winkler (Win.) scores are highlighted with \textbf{bold} font. The ``Cov." with a miscoverage rate higher than $0.25 \alpha$ are not considered and are thus greyed out.}
    \resizebox{\linewidth}{!}{
    \begin{tabular}{|c|c|c|c|c|c|c|c|c|c|c|}
    \hline
         \multicolumn{3}{|c|}{UC Model} &
         DistMatch &
         KOWCPI &
         HopCPT &
         SPCI &
         EnbPI &
         NexCP &
         CP &
         Copula \\
    \hline
        \multirow{9}{*}{\rotatebox{90}{Elec.}} & \multirow{3}{*}{\rotatebox{90}{Forest}}
& Win. $\downarrow$ & $\boldsymbol{1.97^{\pm 0.01}}$ & $2.44^{\pm 0.00}$ & $3.35^{\pm 0.21}$ & $2.54^{\pm 0.06}$ & $\color{gray}{3.36}$ & $3.51$ & $3.93$ & $3.77$ \\
& & Cov. $\uparrow$ & $0.92^{\pm 0.00}$ & $0.90^{\pm 0.00}$ & $0.91^{\pm 0.02}$ & $0.90^{\pm 0.00}$ & $\color{gray}{0.85}$ & $0.92$ & $0.97$ & $0.96$ \\
& & Width $\downarrow$ & $\boldsymbol{0.27^{\pm 0.00}}$ & $0.38^{\pm 0.00}$ & $0.50^{\pm 0.05}$ & $0.28^{\pm 0.00}$ & $\color{gray}{0.45}$ & $0.57$ & $0.71$ & $0.67$ \\
\cline{2-11}
& \multirow{3}{*}{\rotatebox{90}{LGBM}}
& Win. $\downarrow$ & $\boldsymbol{1.73^{\pm 0.01}}$ & $2.09^{\pm 0.00}$ & $3.33^{\pm 0.12}$ & $1.99^{\pm 0.01}$ & $\color{gray}{3.07}$ & $3.59$ & $3.98$ & $3.98$ \\
& & Cov. $\uparrow$ & $0.93^{\pm 0.01}$ & $0.89^{\pm 0.00}$ & $0.91^{\pm 0.01}$ & $0.93^{\pm 0.00}$ & $\color{gray}{0.84}$ & $0.92$ & $0.92$ & $0.94$\\
& & Width $\downarrow$ & $\boldsymbol{0.25^{\pm 0.00}}$ & $0.30^{\pm 0.00}$ & $0.53^{\pm 0.02}$ &  $0.29^{\pm 0.00}$ & $\color{gray}{0.41}$ & $0.60$ & $0.63$ & $0.66$\\
\cline{2-11}
& \multirow{3}{*}{\rotatebox{90}{TCN}}
& Win. $\downarrow$ & $\boldsymbol{1.96^{\pm 0.01}}$ & $2.48^{\pm 0.00}$ & $2.86^{\pm 0.13}$ & $2.06^{\pm 0.01}$ & $\color{gray}{2.78}$ & $3.78$ & $4.00$ & $3.99$ \\
& & Cov. $\uparrow$ & $0.92^{\pm 0.00}$ & $0.90^{\pm 0.00}$ & $0.91^{\pm 0.01}$ & $0.92^{\pm 0.00}$ & $\color{gray}{0.85}$ & $0.91$ & $0.89$ & $0.91$ \\
& & Width $\downarrow$ & $\boldsymbol{0.24^{\pm 0.00}}$ & $0.40^{\pm 0.00}$ & $0.44^{\pm 0.02}$ & $0.28^{\pm 0.00}$ & $\color{gray}{0.35}$ & $0.61$ & $0.55$ & $0.60$\\
\hline
        \multirow{9}{*}{\rotatebox{90}{Solar.}} & \multirow{3}{*}{\rotatebox{90}{Forest}}
& Win. $\downarrow$ & $\boldsymbol{1.54^{\pm 0.01}}$ & $2.07^{\pm 0.00}$ & $1.90^{\pm 0.13}$ & $\color{gray}{1.98^{\pm 0.15}}$ & $2.55$ & $3.74$ & $\color{gray}{3.56}$ & $3.54$ \\
& & Cov. $\uparrow$ & $0.91^{\pm 0.00}$ & $0.93^{\pm 0.00}$ & $0.92^{\pm 0.01}$ & $\color{gray}{0.85^{\pm 0.01}}$ & $0.88$ & $0.89$ & $\color{gray}{0.86}$ & $0.88$ \\
& & Width $\downarrow$ & $\boldsymbol{60.00^{\pm 0.40}}$ & $109.31^{\pm 0.00}$ & $97.04^{\pm 8.21}$ & $\color{gray}{47.36^{\pm 5.14}}$ & $112.57$ & $215.67$ & $\color{gray}{167.91}$ & $187.69$ \\
\cline{2-11}
& \multirow{3}{*}{\rotatebox{90}{LGBM}}
& Win. $\downarrow$ & $\boldsymbol{2.00^{\pm 0.01}}$ & $\color{gray}{1.87^{\pm 0.00}}$  & $2.73^{\pm 0.18}$ & $\color{gray}{3.00^{\pm 0.15}}$ & $\color{gray}{2.76}$ & $\color{gray}{4.76}$ & $\color{gray}{7.12}$ & $\color{gray}{5.23}$ \\
& & Cov. $\uparrow$ & $0.89^{\pm 0.00}$ & $\color{gray}{0.87^{\pm 0.00}}$ & $0.88^{\pm 0.03}$ & $\color{gray}{0.74^{\pm 0.02}}$ & $\color{gray}{0.85}$ & $\color{gray}{0.86}$ & $\color{gray}{0.62}$ & $\color{gray}{0.75}$ \\
& & Width $\downarrow$ & $\boldsymbol{85.71^{\pm 0.27}}$ & $\color{gray}{87.84^{\pm 0.00}}$ & $138.24^{\pm 4.86}$ & $\color{gray}{75.68^{\pm 0.90}}$ & $\color{gray}{130.03}$ & $\color{gray}{262.85}$ & $\color{gray}{176.62}$ & $\color{gray}{229.04}$ \\
\cline{2-11}
& \multirow{3}{*}{\rotatebox{90}{TCN}}
& Win. $\downarrow$ & $\boldsymbol{1.94^{\pm 0.00}}$ & $2.54^{\pm 0.00}$ & $2.81^{\pm 1.05}$ & $\color{gray}{2.67^{\pm 0.36}}$ & $\color{gray}{3.07}$ & $2.85$ & $2.75$ & $2.75$ \\
& & Cov. $\uparrow$ & $0.88^{\pm 0.00}$ & $0.90^{\pm 0.00}$ & $0.91^{\pm 0.02}$ & $\color{gray}{0.79^{\pm 0.02}}$ & $\color{gray}{0.87}$ & $0.89$ & $0.89$ & $0.89$ \\
& & Width $\downarrow$ & $\boldsymbol{66.21^{\pm 0.00}}$ & $105.47^{\pm 0.24}$ & $154.04^{\pm 50.57}$ & $\color{gray}{56.11^{\pm 3.49}}$ & $\color{gray}{103.25}$ & $136.94$ & $115.03$& $115.94$\\
\hline
        \multirow{9}{*}{\rotatebox{90}{Wind.}} & \multirow{3}{*}{\rotatebox{90}{Forest}}
& Win. $\downarrow$ & $\boldsymbol{2.15^{\pm 0.01}}$ & $3.54^{\pm 0.00}$ & $3.72^{\pm 0.31}$ & $\color{gray}{2.19^{\pm 0.00}}$ & $\color{gray}{4.37}$ & $3.98$ & $\color{gray}{4.40}$ & $\color{gray}{4.10}$ \\
& & Cov. $\uparrow$ & $0.90^{\pm 0.00}$ & $0.89^{\pm 0.00}$ & $0.90^{\pm 0.01}$ & $\color{gray}{0.83^{\pm 0.00}}$ & $\color{gray}{0.85}$ & $0.89$ & $\color{gray}{0.74}$ & $\color{gray}{0.82}$ \\
& & Width $\downarrow$ & $\boldsymbol{69.04^{\pm 0.03}}$ & $139.25^{\pm 0.00}$ & $151.66^{\pm 12.82}$ & $\color{gray}{63.14^{\pm 0.18}}$ & $\color{gray}{145.63}$ & $171.67$ & $\color{gray}{147.57}$ & $\color{gray}{159.00}$ \\
\cline{2-11}
& \multirow{3}{*}{\rotatebox{90}{LGBM}}
& Win. $\downarrow$ & $\boldsymbol{2.03^{\pm 0.01}}$ & $\color{gray}{3.85^{\pm 0.00}}$ & $3.01^{\pm 0.35}$ & $\color{gray}{2.08^{\pm 0.00}}$ & $\color{gray}{3.68}$ & $4.23$ & $\color{gray}{4.51}$ & $\color{gray}{4.34}$ \\
& & Cov. $\uparrow$ & $0.89^{\pm 0.00}$ & $\color{gray}{0.51^{\pm 0.00}}$ & $0.89^{\pm 0.01}$ & $\color{gray}{0.82^{\pm 0.00}}$ & $\color{gray}{0.85}$ & $0.89$ & $\color{gray}{0.77}$ & $\color{gray}{0.82}$ \\
& & Width $\downarrow$ & $\boldsymbol{60.75^{\pm 0.20}}$ & $\color{gray}{46.52^{\pm 0.00}}$ & $122.64^{\pm 15.85}$ & $\color{gray}{57.40^{\pm 0.22}}$ & $\color{gray}{123.32}$ & $188.70$ & $\color{gray}{174.51}$ & $\color{gray}{180.76}$ \\
\cline{2-11}
& \multirow{3}{*}{\rotatebox{90}{TCN}}
& Win. $\downarrow$ & $\boldsymbol{2.05^{\pm 0.01}}$ & $4.19^{\pm 0.00}$ & $\color{gray}{5.44^{\pm 0.81}}$ & $\color{gray}{2.66^{\pm 0.41}}$ & $\color{gray}{5.87}$ & $5.32$ & $\color{gray}{5.08}$ & $5.39$ \\
& & Cov. $\uparrow$ & $0.89^{\pm 0.00}$ & $0.91^{\pm 0.00}$ & $\color{gray}{0.87^{\pm 0.02}}$ & $\color{gray}{0.86^{\pm 0.03}}$ & $\color{gray}{0.82}$ & $0.90$ & $\color{gray}{0.81}$ & $0.90$ \\
& & Width $\downarrow$ & $\boldsymbol{64.33^{\pm 0.20}}$ & $173.35^{\pm 0.00}$ & $\color{gray}{207.37^{\pm 28.94}}$ & $\color{gray}{76.77^{\pm 6.50}}$ & $\color{gray}{188.37}$ & $219.21$ & $\color{gray}{188.85}$ & $215.85$\\
\hline
        \multirow{9}{*}{\rotatebox{90}{META}} & \multirow{3}{*}{\rotatebox{90}{Forest}}
& Win. $\downarrow$ & $\boldsymbol{0.12^{\pm 0.00}}$ & $0.61^{\pm 0.00}$ & $\color{gray}{0.25^{\pm 0.08}}$ & $\boldsymbol{0.12^{\pm 0.00}}$ & $\color{gray}{0.19}$ & $0.25$  & $0.29$ & $0.27$  \\
& & Cov. $\uparrow$ & $0.90^{\pm 0.02}$ & $0.91^{\pm 0.00}$ & $\color{gray}{0.83^{\pm 0.10}}$ & $0.96^{\pm 0.00}$ & $\color{gray}{0.83}$ & $0.90$  & $0.98$ & $0.96$  \\
& & Width $\downarrow$ &  $\boldsymbol{4.28^{\pm 0.03}}$ & $26.27^{\pm 0.00}$ & $\color{gray}{8.08^{\pm 0.65}}$ & $5.00^{\pm 0.04}$ & $\color{gray}{5.77}$ & $11.25$ & $14.77$ & $12.96$ \\
\cline{2-11}
& \multirow{3}{*}{\rotatebox{90}{LGBM}}
& Win. $\downarrow$ & $\boldsymbol{0.15^{\pm 0.00}}$ & $0.54^{\pm 0.00}$ & $0.31^{\pm 0.00}$ & $\boldsymbol{0.15^{\pm 0.00}}$ & $\color{gray}{0.26}$ & $0.53$  & $0.71$ & $0.64$  \\
& & Cov. $\uparrow$ & $0.91^{\pm 0.01}$ & $0.94^{\pm 0.00}$ & $0.88^{\pm 0.00}$ & $0.95^{\pm 0.00}$ & $\color{gray}{0.78}$ & $0.90$  & $1.00$ & $0.99$  \\
& & Width $\downarrow$ & $\boldsymbol{5.63^{\pm 0.05}}$ & $26.90^{\pm 0.00}$ & $12.86^{\pm 0.11}$  & $6.93^{\pm 0.02}$ & $\color{gray}{7.46}$ & $26.24$ & $38.08$ & $34.00$ \\
\cline{2-11}
& \multirow{3}{*}{\rotatebox{90}{TCN}}
& Win. $\downarrow$ & $\boldsymbol{0.10^{\pm 0.00}}$ & $\boldsymbol{0.10^{\pm 0.00}}$ & $0.13^{\pm 0.04}$  & $\color{gray}{0.13^{\pm 0.04}}$ & $\color{gray}{0.12}$ & $0.13$  & $0.10$ & $0.10$  \\
& & Cov. $\uparrow$ & $0.88^{\pm 0.00}$ & $0.91^{\pm 0.00}$ & $0.92^{\pm 0.01}$  & $\color{gray}{0.87^{\pm 0.01}}$ & $\color{gray}{0.87}$ & $0.91$  & $0.92$ & $0.91$  \\
& & Width $\downarrow$ & $\boldsymbol{3.05^{\pm 0.00}}$ & $3.25^{\pm 0.01}$ & $4.71^{\pm 1.77}$  & $\color{gray}{4.07^{\pm 1.30}}$ & $\color{gray}{3.42}$ & $5.23$ & $3.69$ & $3.58$ \\
\hline
        \multirow{9}{*}{\rotatebox{90}{NVDA}} & \multirow{3}{*}{\rotatebox{90}{Forest}}
& Win. $\downarrow$ & $\boldsymbol{0.49^{\pm 0.00}}$ & $0.72^{\pm 0.00}$ & $\color{gray}{1.59^{\pm 0.01}}$  & $\color{gray}{1.05^{\pm 0.00}}$ & $\color{gray}{1.01}$ & $1.42$ & $2.94$ & $2.94$  \\
& & Cov. $\uparrow$ & $0.90^{\pm 0.00}$ & $0.89^{\pm 0.00}$ & $\color{gray}{0.75^{\pm 0.00}}$  & $\color{gray}{0.78^{\pm 0.01}}$ & $\color{gray}{0.79}$ & $0.88$ & $0.90$ & $0.89$  \\
& & Width $\downarrow$ & $\boldsymbol{2.71^{\pm 0.01}}$ & $4.42^{\pm 0.00}$ & $\color{gray}{6.93^{\pm 0.01}}$  & $\color{gray}{3.23^{\pm 0.01}}$ & $\color{gray}{3.53}$ & $8.98$ & $16.30$ & $14.52$ \\
\cline{2-11}
& \multirow{3}{*}{\rotatebox{90}{LGBM}}
& Win. $\downarrow$ & $\boldsymbol{0.30^{\pm 0.00}}$ & $0.57^{\pm 0.00}$ & $0.64^{\pm 0.00}$ & $0.36^{\pm 0.01}$ & $\color{gray}{0.55}$ & $0.56$ & $0.79$ & $0.63$  \\
& & Cov. $\uparrow$ & $0.93^{\pm 0.00}$ & $0.90^{\pm 0.00}$ & $0.88^{\pm 0.00}$  & $0.95^{\pm 0.01}$ & $\color{gray}{0.83}$ & $0.91$ & $0.98$ & $0.96$  \\
& & Width $\downarrow$ & $\boldsymbol{1.68^{\pm 0.03}}$ & $2.82^{\pm 0.00}$ & $3.12^{\pm 0.02}$ & $2.30^{\pm 0.03}$ & $\color{gray}{2.19}$ & $2.96$ & $6.06$ & $4.18$ \\
\cline{2-11}
& \multirow{3}{*}{\rotatebox{90}{TCN}}
& Win. $\downarrow$ & $0.33^{\pm 0.00}$ & $\color{gray}{0.32^{\pm 0.00}}$ & $0.32^{\pm 0.07}$ & $\color{gray}{1.53^{\pm 1.86}}$ & $\color{gray}{0.25}$ & $\boldsymbol{0.28}$ & $\color{gray}{1.44}$ & $\color{gray}{0.59}$  \\
& & Cov. $\uparrow$ & $0.92^{\pm 0.00}$ & $\color{gray}{0.83^{\pm 0.00}}$ & $0.89^{\pm 0.03}$ & $\color{gray}{0.62^{\pm 0.25}}$ & $\color{gray}{0.87}$ & $0.89$ & $\color{gray}{0.34}$ & $\color{gray}{0.66}$  \\
& & Width $\downarrow$ & $1.79^{\pm 0.01}$ & $\color{gray}{1.18^{\pm 0.00}}$ & $1.58^{\pm 0.77}$ & $\color{gray}{1.35^{\pm 0.24}}$ & $\color{gray}{0.99}$ & $\boldsymbol{1.29}$ & $\color{gray}{1.18}$ & $\color{gray}{2.05}$ \\
\hline
    \end{tabular}}
    \label{tab:all_results}
\end{table*}
\newpage
\begin{table*}[!ht]
    \centering
    \caption{Quantitative evaluation of sequential CP algorithms on Electricity consumption (Elec.), Solar radiation (Solar), Wind energy (Wind), META stock price (META), and NVIDIA stock price (NVDA) datasets. The significance level $\alpha$ is set to $0.05$ for all experiments. We utilize Random Forest (Forest), LightGBM (LGBM), and TCN for point predictors. The smallest (best) widths and Winkler (Win.) scores are highlighted with \textbf{bold} font. The ``Cov." with a miscoverage rate higher than $0.25 \alpha$ are not considered and are thus greyed out.}
    \resizebox{\linewidth}{!}{
    \begin{tabular}{|c|c|c|c|c|c|c|c|c|c|c|}
    \hline
         \multicolumn{3}{|c|}{UC Model} &
         DistMatch &
         KOWCPI &
         HopCPT &
         SPCI &
         EnbPI &
         NexCP & 
         CP & 
         Copula \\
    \hline
        \multirow{9}{*}{\rotatebox{90}{Elec.}} & \multirow{3}{*}{\rotatebox{90}{Forest}}
& Win.  $\downarrow$ & $\boldsymbol{2.41^{\pm 0.02}}$ & $\color{gray}{2.97^{\pm 0.00}}$ & $3.77^{\pm 0.34}$ & $\color{gray}{3.53^{\pm 0.14}}$ & $\color{gray}{3.83}$ & $3.87$ & $3.98$ & $3.97$ \\
& & Cov.  $\uparrow$ & $0.95^{\pm 0.00}$ & $\color{gray}{0.93^{\pm 0.00}}$ & $0.96^{\pm 0.01}$ & $\color{gray}{0.93^{\pm 0.00}}$ & $\color{gray}{0.90}$ & $0.96$ & $0.97$ & $0.96$ \\
& & Width  $\downarrow$ & $\boldsymbol{0.34^{\pm 0.00}}$ & $\color{gray}{0.44^{\pm 0.00}}$ & $0.59^{\pm 0.05}$ & $\color{gray}{0.37^{\pm 0.00}}$ & $\color{gray}{0.50}$ & $0.66$ & $0.68$ & $0.68$ \\
\cline{2-11}
& \multirow{3}{*}{\rotatebox{90}{LGBM}}
& Win. $\downarrow$ & $\boldsymbol{2.12^{\pm 0.01}}$ & $\color{gray}{2.61^{\pm 0.00}}$ & $3.66^{\pm 0.08}$ & $2.59^{\pm 0.02}$ & $\color{gray}{3.39}$ & $3.93$ & $4.46$ & $4.48$ \\
& & Cov. $\uparrow$ & $0.95^{\pm 0.00}$ & $\color{gray}{0.91^{\pm 0.00}}$ & $0.95^{\pm 0.01}$ & $0.96^{\pm 0.00}$ & $\color{gray}{0.91}$ & $0.96$ & $0.95$ & $0.97$ \\
& & Width $\downarrow$ & $\boldsymbol{0.31^{\pm 0.00}}$ & $\color{gray}{0.34^{\pm 0.00}}$ & $0.60^{\pm 0.01}$ & $0.40^{\pm 0.00}$ & $\color{gray}{0.46}$ & $0.69$ & $0.71$ & $0.78$ \\
\cline{2-11}
& \multirow{3}{*}{\rotatebox{90}{TCN}}
& Win. $\downarrow$ & $\boldsymbol{2.35^{\pm 0.02}}$ & $2.80^{\pm 0.01}$ & $3.36^{\pm 0.12}$ & $2.60^{\pm 0.02}$ & $\color{gray}{3.27}$ & $4.23$ & $\color{gray}{4.65}$ & $4.46$ \\
& & Cov. $\uparrow$ & $0.94^{\pm 0.00}$ & $0.94^{\pm 0.00}$ & $0.96^{\pm 0.00}$ & $0.96^{\pm 0.00}$ & $\color{gray}{0.90}$ & $0.96$ & $\color{gray}{0.92}$ & $0.96$ \\
& & Width $\downarrow$ & $\boldsymbol{0.30^{\pm 0.01}}$ & $0.46^{\pm 0.00}$ & $0.52^{\pm 0.02}$ & $0.38^{\pm 0.00}$ & $\color{gray}{0.41}$ & $0.71$ & $\color{gray}{0.63}$ & $0.75$ \\
\hline
        \multirow{9}{*}{\rotatebox{90}{Solar.}} & \multirow{3}{*}{\rotatebox{90}{Forest}}
& Win. $\downarrow$ & $\boldsymbol{2.05^{\pm 0.03}}$ & $2.71^{\pm 0.00}$ & $2.08^{\pm 0.18}$ & $\color{gray}{2.90^{\pm 0.37}}$ & $\color{gray}{3.14}$ & $3.68$ & $3.72$ & $3.69$ \\
& & Cov. $\uparrow$ & $0.94^{\pm 0.00}$ & $0.95^{\pm 0.00}$ & $0.95^{\pm 0.01}$ & $\color{gray}{0.88^{\pm 0.01}}$ & $\color{gray}{0.92}$ & $0.97$ & $0.97$ & $0.97$ \\
& & Width $\downarrow$ & $\boldsymbol{81.22^{\pm 1.09}}$ & $134.92^{\pm 0.00}$ & $113.57^{\pm 6.80}$ & $\color{gray}{64.71^{\pm 6.97}}$ & $\color{gray}{136.15}$ & $225.51$ & $236.53$ & $230.23$ \\
\cline{2-11}
& \multirow{3}{*}{\rotatebox{90}{LGBM}}
& Win. $\downarrow$ & $\boldsymbol{2.43^{\pm 0.05}}$ & $\color{gray}{2.19^{\pm 0.00}}$ & $\color{gray}{3.11^{\pm 0.21}}$ & $\color{gray}{4.18^{\pm 0.12}}$ & $\color{gray}{3.21}$ & $\color{gray}{5.10}$ & $\color{gray}{7.10}$ & $\color{gray}{5.62}$ \\
& & Cov. $\uparrow$ & $0.94^{\pm 0.00}$ & $\color{gray}{0.91^{\pm 0.00}}$ & $\color{gray}{0.92^{\pm 0.02}}$ & $\color{gray}{0.80^{\pm 0.01}}$ & $\color{gray}{0.91}$ & $\color{gray}{0.92}$ & $\color{gray}{0.67}$ & $\color{gray}{0.86}$ \\
& & Width $\downarrow$ & $\boldsymbol{110.02^{\pm 0.51}}$ & $\color{gray}{100.78^{\pm 0.00}}$ & $\color{gray}{160.95^{\pm 7.32}}$ & $\color{gray}{93.38^{\pm 1.35}}$ & $\color{gray}{145.81}$ & $\color{gray}{294.64}$ & $\color{gray}{232.84}$ & $\color{gray}{262.50}$ \\
\cline{2-11}
& \multirow{3}{*}{\rotatebox{90}{TCN}}
& Win. $\downarrow$ & $\boldsymbol{2.64^{\pm 0.00}}$ & $3.21^{\pm 0.04}$ & $3.30^{\pm 1.12}$ & $\color{gray}{3.96^{\pm 0.61}}$ & $\color{gray}{3.66}$ & $3.34$ & $3.29$ & $3.26$ \\
& & Cov. $\uparrow$ & $0.94^{\pm 0.00}$ & $0.95^{\pm 0.00}$ & $0.94^{\pm 0.02}$ & $\color{gray}{0.84^{\pm 0.02}}$ & $\color{gray}{0.91}$ & $0.94$ & $0.93$ & $0.94$ \\
& & Width $\downarrow$ & $\boldsymbol{95.42^{\pm 0.00}}$ & $146.45^{\pm 0.94}$ & $183.78^{\pm 59.36}$ & $\color{gray}{74.56^{\pm 3.62}}$ & $\color{gray}{140.83}$ & $172.96$ & $156.12$ & $165.80$ \\
\hline
        \multirow{9}{*}{\rotatebox{90}{Wind.}} & \multirow{3}{*}{\rotatebox{90}{Forest}}
& Win. $\downarrow$ & $\boldsymbol{2.61^{\pm 0.03}}$ & $3.82^{\pm 0.00}$ & $4.08^{\pm 0.27}$ & $\color{gray}{2.68^{\pm 0.01}}$ & $\color{gray}{4.72}$ & $4.15$  & $\color{gray}{4.57}$ & $\color{gray}{4.27}$ \\
& & Cov. $\uparrow$ & $0.94^{\pm 0.00}$ & $0.95^{\pm 0.00}$ & $0.95^{\pm 0.01}$ & $\color{gray}{0.88^{\pm 0.00}}$ & $\color{gray}{0.90}$ & $0.95$ & $\color{gray}{0.85}$ & $\color{gray}{0.91}$ \\
& & Width $\downarrow$ & $\boldsymbol{83.25^{\pm 0.21}}$ & $158.42^{\pm 0.00}$ & $170.69^{\pm 10.62}$ & $\color{gray}{75.44^{\pm 0.17}}$ & $\color{gray}{163.59}$ & $183.19$ & $\color{gray}{164.72}$ & $\color{gray}{174.67}$ \\
\cline{2-11}
& \multirow{3}{*}{\rotatebox{90}{LGBM}}
& Win. $\downarrow$ & $\boldsymbol{2.50^{\pm 0.00}}$ & $\color{gray}{6.48^{\pm 0.00}}$ & $3.23^{\pm 0.27}$ & $\color{gray}{2.56^{\pm 0.01}}$ & $\color{gray}{3.87}$ & $4.35$ & $4.75$ & $4.48$ \\
& & Cov. $\uparrow$ & $0.94^{\pm 0.00}$ & $\color{gray}{0.53^{\pm 0.00}}$ & $0.94^{\pm 0.00}$ & $\color{gray}{0.89^{\pm 0.00}}$ & $\color{gray}{0.90}$ & $0.95$ & $\color{gray}{0.83}$ & $\color{gray}{0.88}$ \\
& & Width $\downarrow$ & $\boldsymbol{74.31^{\pm 0.00}}$ & $\color{gray}{52.94^{\pm 0.00}}$ & $138.35^{\pm 12.26}$ & $\color{gray}{72.20^{\pm 0.37}}$ & $\color{gray}{138.23}$ & $195.74$ & $\color{gray}{183.41}$ & $\color{gray}{189.16}$ \\
\cline{2-11}
& \multirow{3}{*}{\rotatebox{90}{TCN}}
& Win. $\downarrow$ & $\boldsymbol{2.58^{\pm 0.11}}$ & $4.51^{\pm 0.00}$ & $\color{gray}{5.94^{\pm 0.84}}$ & $\color{gray}{3.25^{\pm 0.55}}$ & $\color{gray}{6.33}$ & $5.79$ & $\color{gray}{5.58}$ & $5.89$ \\
& & Cov. $\uparrow$ & $0.94^{\pm 0.00}$ & $0.94^{\pm 0.00}$ & $\color{gray}{0.92^{\pm 0.02}}$ & $\color{gray}{0.91^{\pm 0.02}}$ & $\color{gray}{0.88}$ & $0.95$ & $\color{gray}{0.88}$ & $0.94$ \\
& & Width $\downarrow$ & $\boldsymbol{82.35^{\pm 3.11}}$ & $194.58^{\pm 0.00}$ & $\color{gray}{233.94^{\pm 29.23}}$ & $\color{gray}{91.52^{\pm 6.37}}$ & $\color{gray}{208.71}$ & $245.43$ & $\color{gray}{201.84}$ & $239.63$ \\
\hline
        \multirow{9}{*}{\rotatebox{90}{META}} & \multirow{3}{*}{\rotatebox{90}{Forest}}
& Win. $\downarrow$ & $\boldsymbol{0.16^{\pm 0.01}}$ & $0.71^{\pm 0.00}$ & $\color{gray}{0.33^{\pm 0.11}}$ & $0.16^{\pm 0.00}$ & $\color{gray}{0.24}$ & $0.30$  & $0.38$ & $0.32$  \\
& & Cov. $\uparrow$ & $0.95^{\pm 0.01}$ & $0.94^{\pm 0.00}$ & $\color{gray}{0.89^{\pm 0.10}}$ & $0.97^{\pm 0.00}$ & $\color{gray}{0.89}$ & $0.95$  & $0.99$ & $0.98$  \\
& & Width $\downarrow$ & $\boldsymbol{5.49^{\pm 0.09}}$ & $31.63^{\pm 0.00}$ & $\color{gray}{10.79^{\pm 0.13}}$ & $6.00^{\pm 0.01}$ & $\color{gray}{7.09}$ & $13.24$ & $19.91$ & $16.12$ \\
\cline{2-11}
& \multirow{3}{*}{\rotatebox{90}{LGBM}}
& Win. $\downarrow$ & $\boldsymbol{0.18^{\pm 0.00}}$ & $0.62^{\pm 0.00}$ & $\color{gray}{0.36^{\pm 0.00}}$ & $0.19^{\pm 0.00}$ & $\color{gray}{0.31}$ & $0.58$  & $0.87$ & $0.74$  \\
& & Cov. $\uparrow$ & $0.95^{\pm 0.00}$ & $0.97^{\pm 0.00}$ & $\color{gray}{0.93^{\pm 0.00}}$ & $0.98^{\pm 0.00}$ & $\color{gray}{0.86}$ & $0.95$  & $1.00$ & $1.00$  \\
& & Width $\downarrow$ & $\boldsymbol{7.07^{\pm 0.06}}$ & $31.32^{\pm 0.00}$ & $\color{gray}{15.19^{\pm 0.06}}$ & $8.42^{\pm 0.06}$ & $\color{gray}{8.83}$ & $28.63$ & $47.04$ & $40.12$ \\
\cline{2-11}
& \multirow{3}{*}{\rotatebox{90}{TCN}}
& Win. $\downarrow$ & $\boldsymbol{0.13^{\pm 0.00}}$ & $\boldsymbol{0.13^{\pm 0.00}}$ & $0.16^{\pm 0.05}$ & $0.16^{\pm 0.04}$ & $\color{gray}{0.15}$ & $0.16$ & $0.14$ & $\boldsymbol{0.13}$  \\
& & Cov. $\uparrow$ & $0.94^{\pm 0.00}$ & $0.96^{\pm 0.00}$ & $0.96^{\pm 0.00}$ & $0.94^{\pm 0.00}$ & $\color{gray}{0.92}$ & $0.96$ & $0.97$ & $0.96$  \\
& & Width $\downarrow$ & $\boldsymbol{3.91^{\pm 0.00}}$ & $4.27^{\pm 0.03}$ & $6.44^{\pm 2.24}$ & $5.34^{\pm 1.65}$ & $\color{gray}{4.35}$ & $6.60$ & $5.22$ & $4.83$ \\
\hline
        \multirow{9}{*}{\rotatebox{90}{NVDA}} & \multirow{3}{*}{\rotatebox{90}{Forest}}
& Win. $\downarrow$ & $\boldsymbol{0.66^{\pm 0.01}}$ & $0.89^{\pm 0.00}$ & $\color{gray}{1.76^{\pm 0.01}}$ & $\color{gray}{1.60^{\pm 0.00}}$ & $\color{gray}{1.15}$ & $\color{gray}{1.61}$  & $\color{gray}{3.65}$ & $\color{gray}{3.58}$  \\
& & Cov. $\uparrow$ & $0.95^{\pm 0.00}$ & $0.94^{\pm 0.00}$ & $\color{gray}{0.83^{\pm 0.00}}$ & $\color{gray}{0.84^{\pm 0.00}}$ & $\color{gray}{0.86}$ & $\color{gray}{0.92}$  & $\color{gray}{0.92}$ & $\color{gray}{0.92}$  \\
& & Width $\downarrow$ & $\boldsymbol{3.32^{\pm 0.03}}$ & $5.60^{\pm 0.00}$ & $\color{gray}{7.98^{\pm 0.00}}$ & $\color{gray}{3.72^{\pm 0.01}}$ & $\color{gray}{4.19}$ & $\color{gray}{10.41}$ & $\color{gray}{18.76}$ & $\color{gray}{17.22}$ \\
\cline{2-11}
& \multirow{3}{*}{\rotatebox{90}{LGBM}}
& Win. $\downarrow$ & $\boldsymbol{0.39^{\pm 0.00}}$ & $0.74^{\pm 0.00}$ & $0.84^{\pm 0.00}$ & $0.44^{\pm 0.00}$ & $\color{gray}{0.77}$ & $0.74$  & $1.00$ & $0.90$  \\
& & Cov. $\uparrow$ & $0.96^{\pm 0.00}$ & $0.96^{\pm 0.00}$ & $0.94^{\pm 0.00}$ & $0.98^{\pm 0.00}$ & $\color{gray}{0.90}$ & $0.96$  & $0.99$ & $0.99$  \\
& & Width $\downarrow$ & $\boldsymbol{2.10^{\pm 0.02}}$ & $4.17^{\pm 0.00}$ & $4.25^{\pm 0.02}$ & $2.79^{\pm 0.03}$ & $\color{gray}{2.98}$ & $4.14$ & $7.81$ & $6.74$ \\
\cline{2-11}
& \multirow{3}{*}{\rotatebox{90}{TCN}}
& Win. $\downarrow$ & $0.43^{\pm 0.00}$ & $\color{gray}{0.41^{\pm 0.00}}$ & $0.41^{\pm 0.07}$ & $\color{gray}{2.82^{\pm 3.71}}$ & $\color{gray}{0.36}$ & $\boldsymbol{0.38}$  & $\color{gray}{2.18}$ & $\color{gray}{0.65}$  \\
& & Cov. $\uparrow$ & $0.96^{\pm 0.00}$ & $\color{gray}{0.92^{\pm 0.00}}$ & $0.95^{\pm 0.02}$ & $\color{gray}{0.66^{\pm 0.27}}$ & $\color{gray}{0.92}$ & $0.94$  & $\color{gray}{0.37}$ & $\color{gray}{0.82}$  \\
& & Width $\downarrow$ & $2.19^{\pm 0.01}$ & $\color{gray}{1.57^{\pm 0.00}}$ & $2.02^{\pm 0.89}$ & $\color{gray}{1.62^{\pm 0.21}}$ & $\color{gray}{1.33}$ & $\boldsymbol{1.72}$ & $\color{gray}{1.55}$ & $\color{gray}{2.47}$ \\
\hline
    \end{tabular}}
    \label{tab:all_results_alpha_0.05}
\end{table*}

\newpage
\begin{table*}[!ht]
    \centering
    \caption{Quantitative evaluation of sequential CP algorithms on Electricity consumption (Elec.), Solar radiation (Solar), Wind energy (Wind), META stock price (META), and NVIDIA stock price (NVDA) datasets. The significance level $\alpha$ is set to $0.15$ for all experiments. We utilize Random Forest (Forest), LightGBM (LGBM), and TCN for point predictors. The smallest (best) widths and Winkler (Win.) scores are highlighted with \textbf{bold} font. The ``Cov." with a miscoverage rate higher than $0.25 \alpha$ are not considered and are thus greyed out.}
    \resizebox{\linewidth}{!}{
    \begin{tabular}{|c|c|c|c|c|c|c|c|c|c|c|}
    \hline
         \multicolumn{3}{|c|}{UC Model} &
         DistMatch &
         KOWCPI &
         HopCPT &
         SPCI &
         EnbPI &
         NexCP & 
         CP & 
         Copula \\
    \hline
        \multirow{9}{*}{\rotatebox{90}{Elec.}} & \multirow{3}{*}{\rotatebox{90}{Forest}}
& Win. $\downarrow$ & $\boldsymbol{1.71^{\pm 0.01}}$ & $2.19^{\pm 0.00}$ & $3.07^{\pm 0.14}$ & $2.07^{\pm 0.05}$ & $\color{gray}{3.01}$ & $3.38$ & $3.46$ & $3.44$ \\
& & Cov. $\uparrow$ & $0.88^{\pm 0.01}$ & $0.87^{\pm 0.00}$ & $0.86^{\pm 0.02}$ & $0.86^{\pm 0.01}$ & $\color{gray}{0.80}$ & $0.87$ & $0.90$ & $0.89$ \\
& & Width $\downarrow$ & $\boldsymbol{0.23^{\pm 0.00}}$ & $0.34^{\pm 0.00}$ & $0.45^{\pm 0.04}$ & $\boldsymbol{0.23^{\pm 0.00}}$ & $\color{gray}{0.41}$ & $0.54$ & $0.58$ & $0.57$ \\
\cline{2-11}
& \multirow{3}{*}{\rotatebox{90}{LGBM}}
& Win. $\downarrow$ & $\boldsymbol{1.53^{\pm 0.01}}$ & $1.85^{\pm0.00}$ & $3.12^{\pm 0.17}$ & $1.69^{\pm 0.01}$ & $\color{gray}{2.85}$ & $3.33$ & $3.62$ & $3.61$ \\
& & Cov. $\uparrow$ & $0.88^{\pm 0.00}$ & $0.86^{\pm 0.00}$ & $0.86^{\pm 0.01}$ & $0.91^{\pm 0.00}$ & $\color{gray}{0.80}$ & $0.89$ & $0.88$ & $0.90$ \\
& & Width $\downarrow$ & $\boldsymbol{0.21^{\pm 0.00}}$ & $0.27^{\pm 0.00}$ & $0.47^{\pm 0.02}$ & $0.24^{\pm 0.00}$ & $\color{gray}{0.37}$ & $0.53$ & $0.54$ & $0.57$ \\
\cline{2-11}
& \multirow{3}{*}{\rotatebox{90}{TCN}}
& Win. $\downarrow$ & $\boldsymbol{1.68^{\pm 0.00}}$ & $2.21^{\pm 0.00}$ & $2.56^{\pm 0.10}$ & $1.72^{\pm 0.01}$ & $2.57$ & $3.48$ & $3.61$ & $3.60$ \\
& & Cov. $\uparrow$ & $0.89^{\pm 0.00}$ & $0.86^{\pm 0.00}$ & $0.87^{\pm 0.00}$ & $0.90^{\pm 0.00}$ & $0.82$ & $0.86$ & $0.83$ & $0.87$ \\
& & Width $\downarrow$ & $\boldsymbol{0.20^{\pm 0.00}}$ & $0.35^{\pm 0.00}$ & $0.38^{\pm 0.01}$ & $0.22^{\pm 0.00}$ & $0.32$ & $0.53$ & $0.46$ & $0.50$ \\
\hline
        \multirow{9}{*}{\rotatebox{90}{Solar.}} & \multirow{3}{*}{\rotatebox{90}{Forest}}
& Win. $\downarrow$ & $\boldsymbol{1.34^{\pm 0.01}}$ & $1.78^{\pm 0.00}$ & $1.70^{\pm 0.09}$ & $1.57^{\pm 0.09}$ & $2.26$ & $3.12$ & $3.11$ & $3.11$ \\
& & Cov. $\uparrow$ & $0.86^{\pm 0.01}$ & $0.92^{\pm 0.00}$ & $0.89^{\pm 0.01}$ & $0.82^{\pm 0.02}$ & $0.84$ & $0.85$ & $0.84$ & $0.85$ \\
& & Width $\downarrow$ & $48.84^{\pm 0.62}$ & $92.69^{\pm 0.00}$ & $81.84^{\pm 8.43}$ & $\boldsymbol{37.96^{\pm 3.79}}$ & $96.99$ & $163.79$ & $153.50$ & $158.96$ \\
\cline{2-11}
& \multirow{3}{*}{\rotatebox{90}{LGBM}}
& Win. $\downarrow$ & $1.72^{\pm 0.01}$ & $\boldsymbol{1.71^{\pm 0.00}}$ & $2.48^{\pm 0.17}$ & $\color{gray}{2.51^{\pm 0.06}}$ & $\color{gray}{2.56}$ & $\color{gray}{4.53}$ & $\color{gray}{6.60}$ & $\color{gray}{5.00}$ \\
& & Cov. $\uparrow$ & $0.86^{\pm 0.00}$ & $0.83^{\pm 0.00}$ & $0.83^{\pm 0.03}$ & $\color{gray}{0.68^{\pm 0.01}}$ & $\color{gray}{0.80}$ & $\color{gray}{0.80}$ & $\color{gray}{0.57}$ & $\color{gray}{0.68}$ \\
& & Width $\downarrow$ & $\boldsymbol{71.97^{\pm 0.09}}$ & $78.42^{\pm 0.00}$ & $121.92^{\pm 5.26}$ & $\color{gray}{65.76^{\pm 1.35}}$ & $\color{gray}{117.00}$ & $\color{gray}{241.77}$ & $\color{gray}{138.30}$ & $\color{gray}{200.55}$\\
\cline{2-11}
& \multirow{3}{*}{\rotatebox{90}{TCN}}
& Win. $\downarrow$ & $\boldsymbol{1.61^{\pm 0.00}}$ & $2.09^{\pm0.01}$ & $2.50^{\pm 0.92}$ & $\color{gray}{2.12^{\pm 0.26}}$ & $\color{gray}{2.54}$ & $2.52$ & $2.36$ & $2.37$ \\
& & Cov. $\uparrow$ & $0.84^{\pm 0.00}$ & $0.85^{\pm 0.00}$ & $0.87^{\pm 0.02}$ & $\color{gray}{0.75^{\pm 0.03}}$ & $\color{gray}{0.81}$ & $0.84$ & $0.85$ & $0.84$ \\
& & Width $\downarrow$ & $\boldsymbol{56.81^{\pm 0.00}}$ & $78.59^{\pm 0.20}$ & $134.49^{\pm 42.16}$ & $\color{gray}{46.19^{\pm 3.88}}$ & $\color{gray}{80.51}$ & $109.36$ & $93.61$ & $92.98$ \\
\hline
        \multirow{9}{*}{\rotatebox{90}{Wind.}} & \multirow{3}{*}{\rotatebox{90}{Forest}}
& Win. $\downarrow$ & $\boldsymbol{1.91^{\pm 0.00}}$ & $3.34^{\pm 0.00}$ & $3.49^{\pm 0.33}$ & $\color{gray}{1.94^{\pm 0.00}}$ & $\color{gray}{4.24}$ & $3.85$ & $\color{gray}{4.31}$ & $\color{gray}{3.97}$ \\
& & Cov. $\uparrow$ & $0.85^{\pm 0.00}$ & $0.85^{\pm 0.00}$ & $0.86^{\pm 0.02}$ & $\color{gray}{0.77^{\pm 0.00}}$ & $\color{gray}{0.80}$ & $0.84$ & $\color{gray}{0.65}$ & $\color{gray}{0.74}$ \\
& & Width $\downarrow$ & $\boldsymbol{58.82^{\pm 0.12}}$ & $125.15^{\pm 0.00}$ & $137.75^{\pm 13.58}$ & $\color{gray}{55.03^{\pm 0.17}}$ & $\color{gray}{129.82}$ & $161.90$ & $\color{gray}{132.58}$ & $\color{gray}{147.28}$ \\
\cline{2-11}
& \multirow{3}{*}{\rotatebox{90}{LGBM}}
& Win. $\downarrow$ & $\boldsymbol{1.76^{\pm 0.01}}$ & $\color{gray}{2.97^{\pm 0.00}}$ & $2.84^{\pm 0.36}$ & $\color{gray}{1.83^{\pm 0.01}}$ & $\color{gray}{3.57}$ & $4.15$ & $\color{gray}{4.39}$ & $\color{gray}{4.25}$ \\
& & Cov. $\uparrow$ & $0.85^{\pm 0.00}$ & $\color{gray}{0.47^{\pm 0.00}}$ & $0.84^{\pm 0.01}$ & $\color{gray}{0.74^{\pm 0.00}}$ & $\color{gray}{0.79}$ & $0.83$ & $\color{gray}{0.72}$ & $\color{gray}{0.76}$ \\
& & Width $\downarrow$ & $\boldsymbol{52.53^{\pm 0.21}}$ & $\color{gray}{41.75^{\pm 0.00}}$ & $111.41^{\pm 17.53}$ & $\color{gray}{49.06^{\pm 0.20}}$ & $\color{gray}{109.07}$ & $181.94$ & $\color{gray}{164.77}$ & $\color{gray}{172.01}$ \\
\cline{2-11}
& \multirow{3}{*}{\rotatebox{90}{TCN}}
& Win. $\downarrow$ & $\boldsymbol{1.86^{\pm 0.06}}$ & $3.92^{\pm 0.00}$ & $5.09^{\pm 0.78}$ & $2.34^{\pm 0.31}$ & $\color{gray}{5.58}$ & $5.02$ & $\color{gray}{4.79}$ & $5.07$ \\
& & Cov. $\uparrow$ & $0.85^{\pm 0.01}$ & $0.88^{\pm 0.00}$ &  $0.82^{\pm 0.03}$ &  $0.82^{\pm 0.03}$ &  $\color{gray}{0.75}$ & $0.85$ & $\color{gray}{0.77}$ & $0.84$ \\
& & Width $\downarrow$ & $\boldsymbol{58.57^{\pm 2.38}}$ & $156.54^{\pm 0.00}$ & $187.49^{\pm 28.70}$ & $67.76^{\pm 6.44}$ & $\color{gray}{170.92}$ & $202.55$ & $\color{gray}{179.63}$ & $198.86$\\
\hline
        \multirow{9}{*}{\rotatebox{90}{META}} & \multirow{3}{*}{\rotatebox{90}{Forest}}
& Win. $\downarrow$ & $\boldsymbol{0.10^{\pm 0.00}}$ & $0.53^{\pm 0.00}$ & $\color{gray}{0.21^{\pm 0.06}}$ & $0.11^{\pm 0.00}$ & $\color{gray}{0.17}$ & $0.23$  & $0.26$ & $0.24$  \\
& & Cov. $\uparrow$ & $0.85^{\pm 0.01}$ & $0.89^{\pm 0.00}$ & $\color{gray}{0.78^{\pm 0.09}}$ & $0.93^{\pm 0.00}$ & $\color{gray}{0.76}$ & $0.85$  & $0.95$ & $0.91$  \\
& & Width $\downarrow$ & $\boldsymbol{3.62^{\pm 0.05}}$ & $22.18^{\pm 0.00}$ & $\color{gray}{6.69^{\pm 0.65}}$ & $4.38^{\pm 0.03}$ & $\color{gray}{4.96}$ & $10.35$ & $12.72$ & $11.33$ \\
\cline{2-11}
& \multirow{3}{*}{\rotatebox{90}{LGBM}}
& Win. $\downarrow$ & $\boldsymbol{0.13^{\pm 0.00}}$ & $0.49^{\pm 0.00}$ & $0.28^{\pm 0.00}$ & $0.14^{\pm 0.00}$ & $\color{gray}{0.23}$ & $0.51$  & $0.65$ & $0.58$  \\
& & Cov. $\uparrow$ & $0.86^{\pm 0.00}$ & $0.90^{\pm 0.00}$ & $0.84^{\pm 0.01}$ & $0.91^{\pm 0.01}$ & $\color{gray}{0.71}$ & $0.86$  & $0.99$ & $0.94$  \\
& & Width $\downarrow$ & $\boldsymbol{4.76^{\pm 0.03}}$ & $23.83^{\pm 0.00}$ & $11.57^{\pm 0.08}$  & $6.19^{\pm 0.05}$ & $\color{gray}{6.58}$ & $24.91$ & $34.73$ &  $30.31$\\
\cline{2-11}
& \multirow{3}{*}{\rotatebox{90}{TCN}}
& Win. $\downarrow$ & $\boldsymbol{0.08^{\pm 0.00}}$ & $\boldsymbol{0.08^{\pm 0.00}}$ & $0.11^{\pm 0.04}$  & $\color{gray}{0.11^{\pm 0.04}}$ & $0.10$ & $0.11$  & $0.09$ & $0.09$  \\
& & Cov. $\uparrow$ & $0.84^{\pm 0.00}$ & $0.87^{\pm 0.00}$ & $0.87^{\pm 0.01}$  & $\color{gray}{0.80^{\pm 0.01}}$ & $0.82$ & $0.87$  & $0.87$ & $0.87$  \\
& & Width $\downarrow$ & $\boldsymbol{2.57^{\pm 0.00}}$ & $2.69^{\pm 0.02}$ & $3.87^{\pm 1.55}$  & $\color{gray}{3.38^{\pm 1.05}}$ & $2.85$ & $4.43$ & $3.02$ & $2.96$ \\
\hline
        \multirow{9}{*}{\rotatebox{90}{NVDA}} & \multirow{3}{*}{\rotatebox{90}{Forest}}
& Win. $\downarrow$ & $\boldsymbol{0.45^{\pm 0.01}}$ & $0.64^{\pm 0.00}$ & $\color{gray}{1.45^{\pm 0.01}}$ & $\color{gray}{0.84^{\pm 0.00}}$ & $\color{gray}{0.92}$ & $1.33$ & $2.59$ & $2.58$  \\
& & Cov. $\uparrow$ & $0.84^{\pm 0.01}$ & $0.83^{\pm 0.00}$ & $\color{gray}{0.67^{\pm 0.00}}$ & $\color{gray}{0.74^{\pm 0.00}}$ & $\color{gray}{0.72}$ & $0.82$ & $0.88$ & $0.85$  \\
& & Width $\downarrow$ & $\boldsymbol{2.07^{\pm 0.01}}$ & $3.74^{\pm 0.00}$ & $\color{gray}{6.12^{\pm 0.01}}$ & $\color{gray}{2.90^{\pm 0.02}}$ & $\color{gray}{3.08}$ & $8.14$ & $14.94$ & $12.53$ \\
\cline{2-11}
& \multirow{3}{*}{\rotatebox{90}{LGBM}}
& Win. $\downarrow$ & $\boldsymbol{0.25^{\pm 0.00}}$ & $0.49^{\pm 0.00}$ & $0.55^{\pm 0.00}$  & $0.32^{\pm 0.01}$ & $\color{gray}{0.47}$ & $0.47$ & $0.57$ & $0.50$  \\
& & Cov. $\uparrow$ & $0.90^{\pm 0.00}$ & $0.83^{\pm 0.00}$ & $0.82^{\pm 0.00}$ & $0.89^{\pm 0.01}$ & $\color{gray}{0.78}$ & $0.86$ & $0.96$ & $0.92$  \\
& & Width $\downarrow$ & $\boldsymbol{1.39^{\pm 0.01}}$ & $2.29^{\pm 0.00}$ & $2.61^{\pm 0.15}$ & $2.01^{\pm 0.05}$ & $\color{gray}{1.82}$ & $2.43$ & $4.02$ & $3.08$ \\
\cline{2-11}
& \multirow{3}{*}{\rotatebox{90}{TCN}}
& Win. $\downarrow$ & $0.28^{\pm 0.00}$ & $\color{gray}{0.27^{\pm 0.00}}$ & $0.28^{\pm 0.07}$ & $\color{gray}{1.09^{\pm 1.17}}$ & $\color{gray}{0.22}$ & $\boldsymbol{0.24}$ & $\color{gray}{1.10}$ & $\color{gray}{0.56}$  \\
& & Cov. $\uparrow$ & $0.89^{\pm 0.00}$ & $\color{gray}{0.76^{\pm 0.00}}$ & $0.83^{\pm 0.04}$  & $\color{gray}{0.56^{\pm 0.25}}$ & $\color{gray}{0.81}$ & $0.84$ & $\color{gray}{0.33}$ & $\color{gray}{0.54}$  \\
& & Width $\downarrow$ & $1.52^{\pm 0.01}$ & $\color{gray}{0.97^{\pm 0.00}}$ & $1.34^{\pm 0.70}$  & $\color{gray}{1.16^{\pm 0.22}}$ & $\color{gray}{0.82}$ & $\boldsymbol{1.10}$ & $\color{gray}{0.96}$ & $\color{gray}{1.76}$ \\
\hline
    \end{tabular}}
    \label{tab:all_results_alpha_0.15}
\end{table*}

\clearpage
\begin{figure}[ht]

\begin{subfigure}[b]{1.\linewidth}
 \centering
 \includegraphics[width=\linewidth]{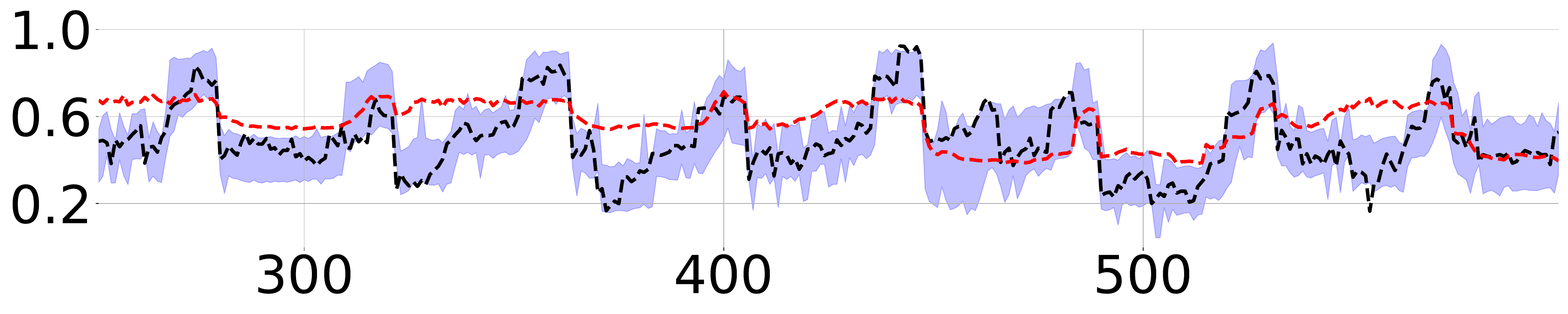}
 \caption{DistMatch prediction regions}
 \label{fig:exp_dist_elec}
\end{subfigure}

\begin{subfigure}[b]{1.\linewidth}
 \centering
 \includegraphics[width=\linewidth]{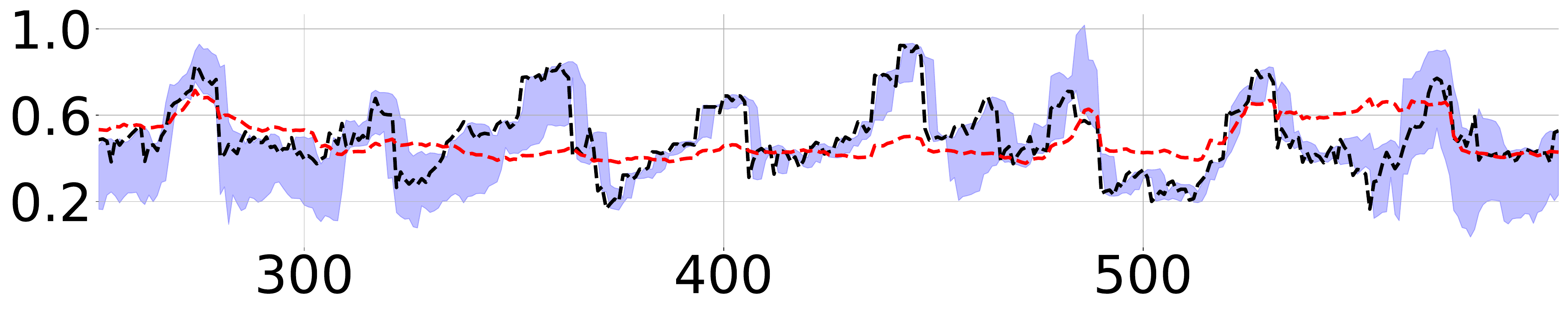}
 \caption{KOWCPI prediction regions}
 \label{fig:exp_kowcpi_elec}
\end{subfigure}

\begin{subfigure}[b]{1.\linewidth}
 \centering
 \includegraphics[width=\linewidth]{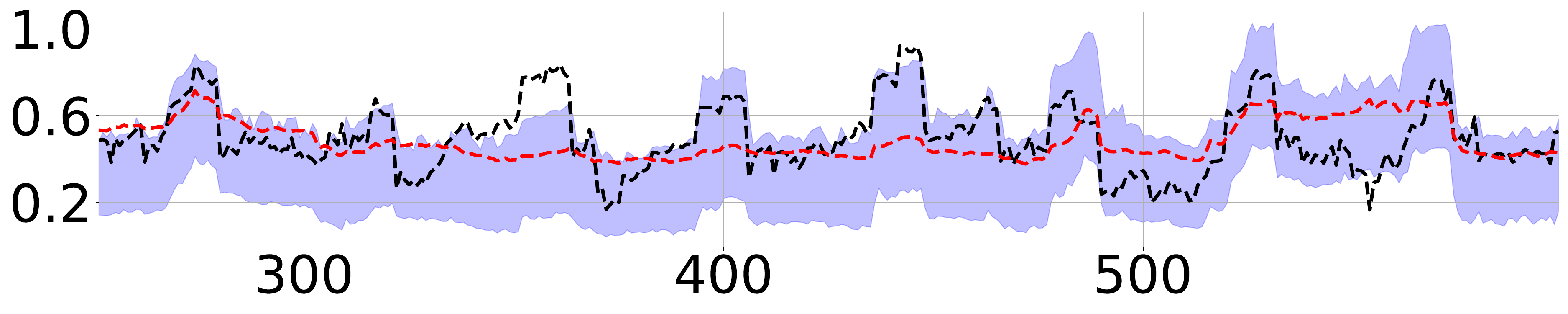}
 \caption{HopCPT prediction regions}
 \label{fig:exp_hop_elec}
\end{subfigure}

\begin{subfigure}[b]{1.\linewidth}
 \centering
 \includegraphics[width=\linewidth]{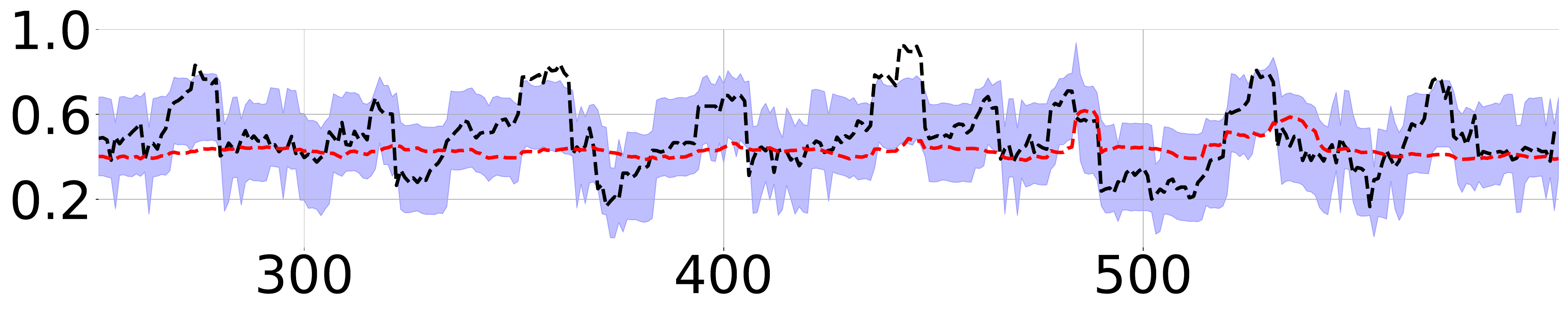}
 \caption{SPCI prediction regions}
 \label{fig:exp_spci_elec}
\end{subfigure}

\begin{subfigure}[b]{1.\linewidth}
 \centering
 \includegraphics[width=\linewidth]{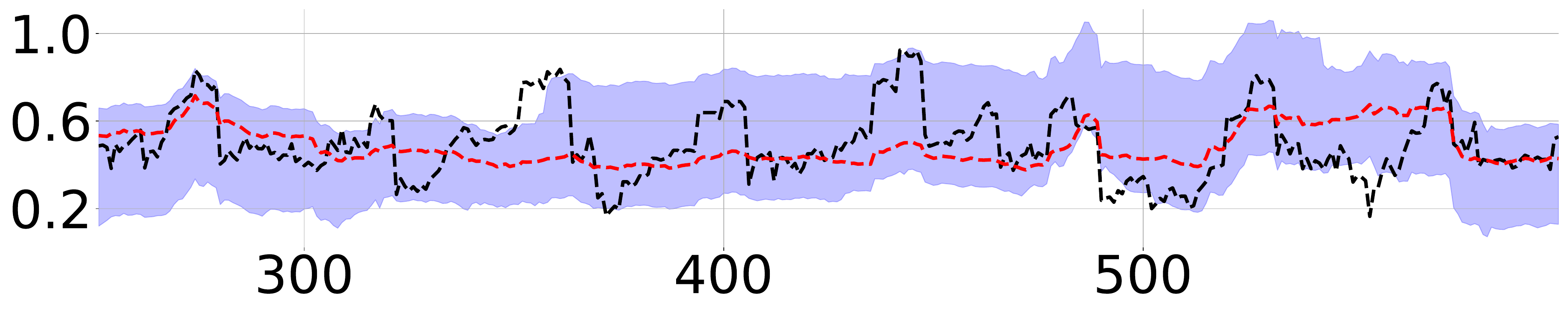}
 \caption{EnbPI prediction regions}
 \label{fig:exp_enbpi_elec}
\end{subfigure}

\begin{subfigure}[b]{1.\linewidth}
 \centering
 \includegraphics[width=\linewidth]{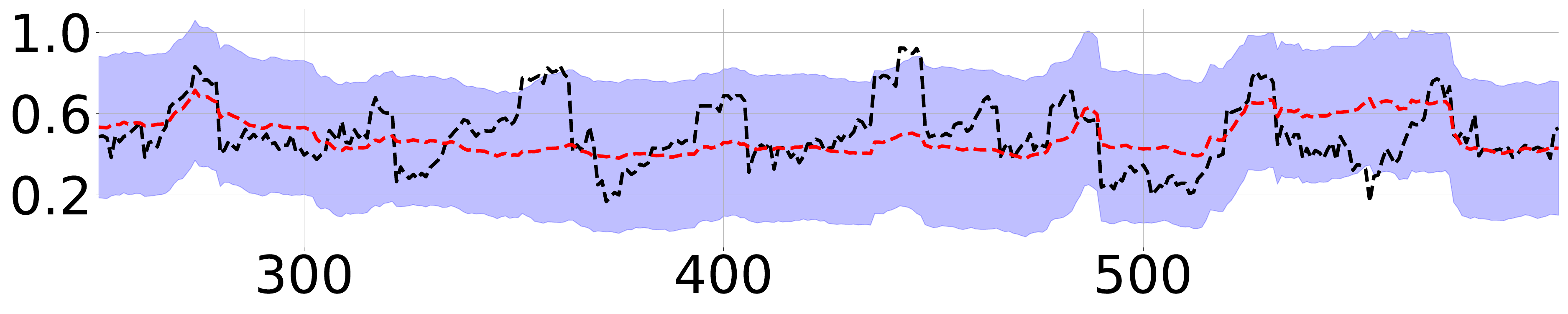}
 \caption{NexCP prediction regions}
 \label{fig:exp_nexcp_elec}
\end{subfigure}

\begin{subfigure}[b]{1.\linewidth}
 \centering
 \includegraphics[width=\linewidth]{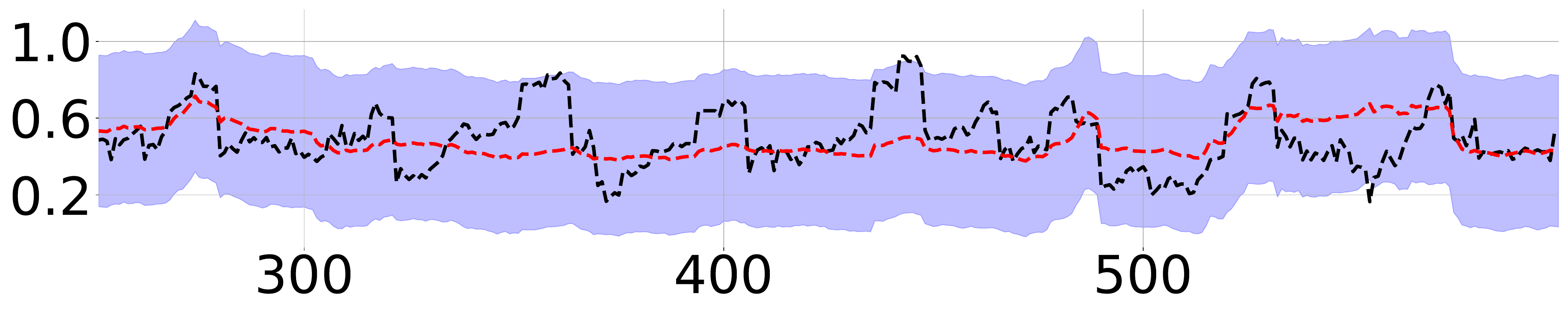}
 \caption{CP prediction regions}
 \label{fig:exp_cp_elec}
\end{subfigure}

\begin{subfigure}[b]{1.\linewidth}
 \centering
 \includegraphics[width=\linewidth]{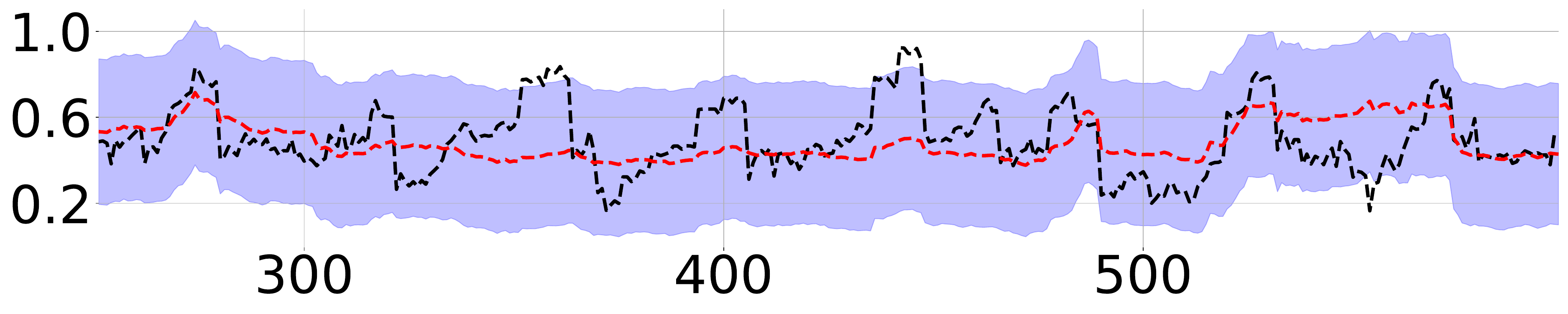}
 \caption{Copula prediction regions}
 \label{fig:exp_copula_elec}
\end{subfigure}
\caption{Qualitative evaluation of the DistMatch, KOWCPI, HopCPT, SPCI, EnbPI, NexCP, CP, and Copula on \textit{Elec.} Dataset with $\alpha=0.1$. DistMatch provides narrower regions while maintaining target coverage by accounting for distribution shifts and abrupt changes.}
\label{fig:elec}
\end{figure}
\newpage
\begin{figure}[ht]

\begin{subfigure}[b]{1.\linewidth}
 \centering
 \includegraphics[width=\linewidth]{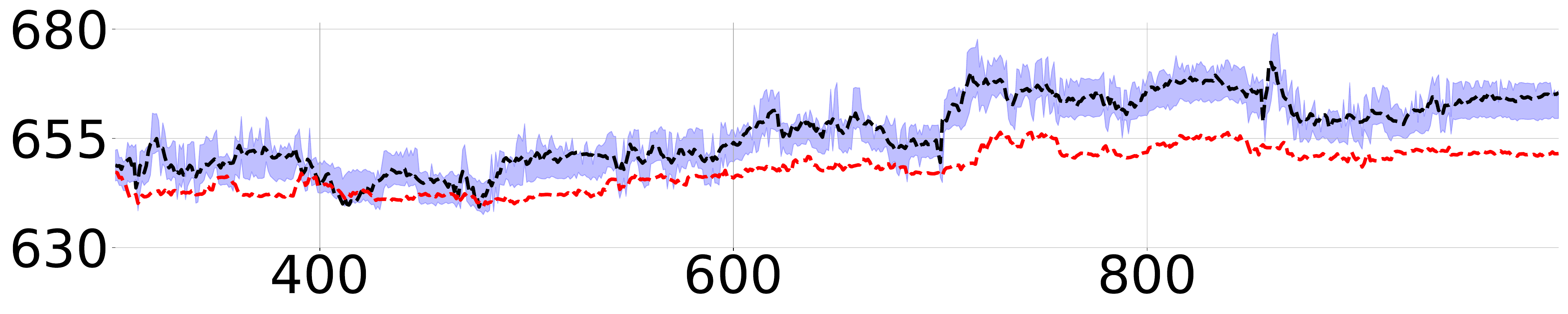}
 \caption{DistMatch prediction regions}
 \label{fig:exp_lgbm_dist_meta}
\end{subfigure}

\begin{subfigure}[b]{1.\linewidth}
 \centering
 \includegraphics[width=\linewidth]{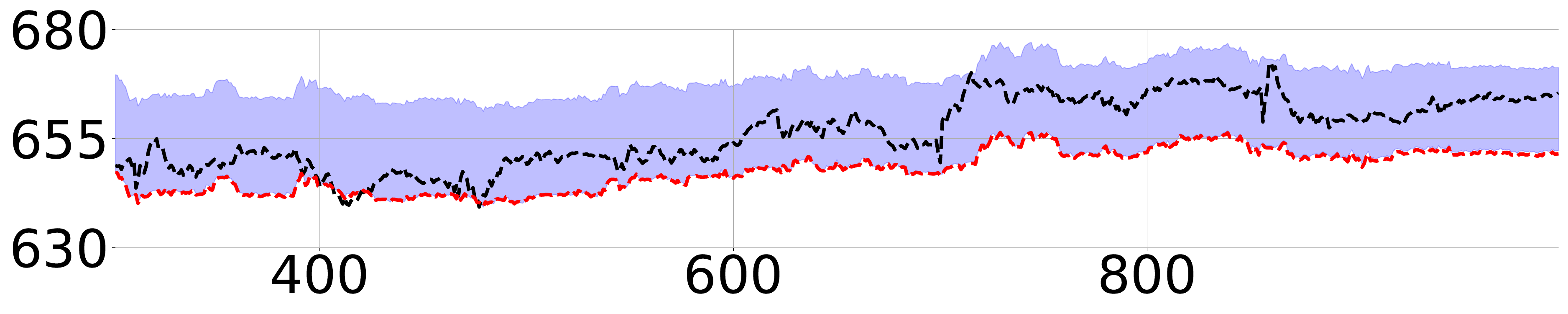}
 \caption{KOWCPI prediction regions}
 \label{fig:exp_lgbm_kowcpi_meta}
\end{subfigure}

\begin{subfigure}[b]{1.\linewidth}
 \centering
 \includegraphics[width=\linewidth]{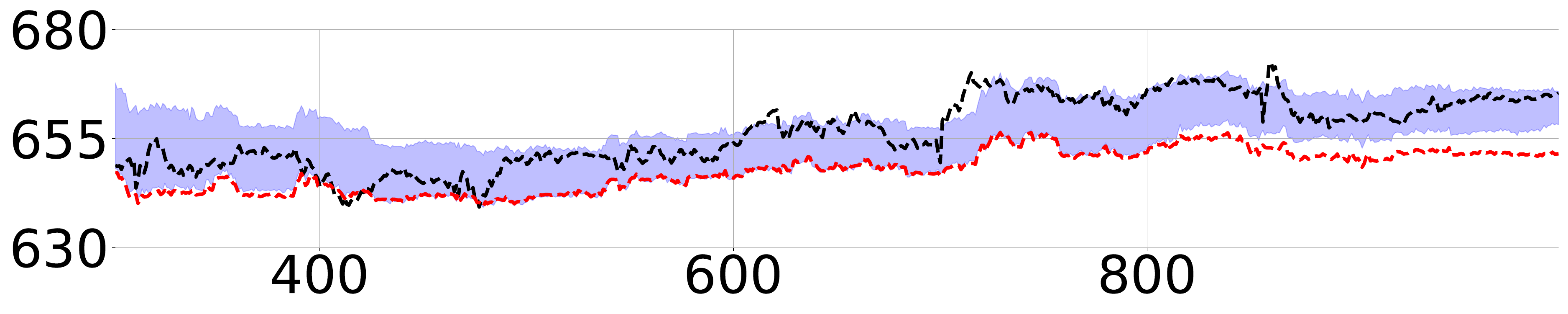}
 \caption{HopCPT prediction regions}
 \label{fig:exp_lgbm_hop_meta}
\end{subfigure}

\begin{subfigure}[b]{1.\linewidth}
 \centering
 \includegraphics[width=\linewidth]{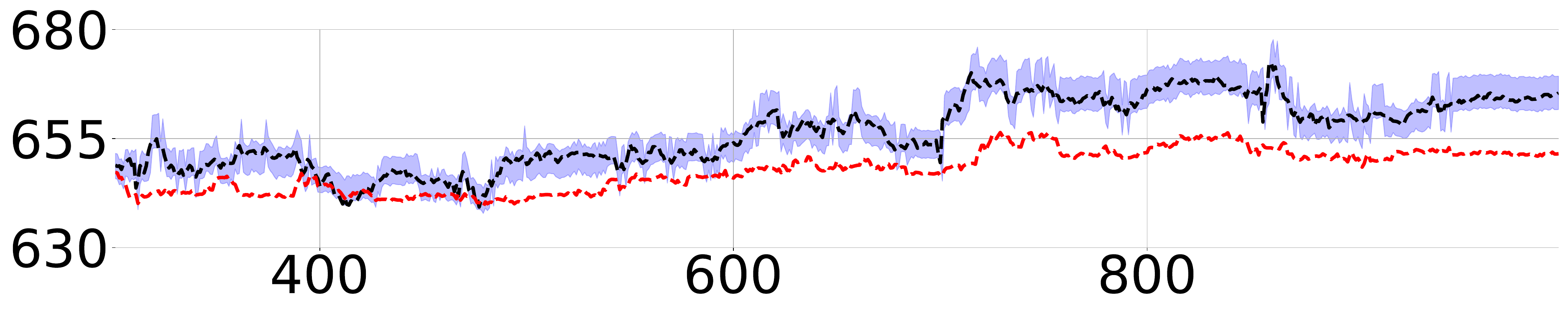}
 \caption{SPCI prediction regions}
 \label{fig:exp_lgbm_spci_meta}
\end{subfigure}

\begin{subfigure}[b]{1.\linewidth}
 \centering
 \includegraphics[width=\linewidth]{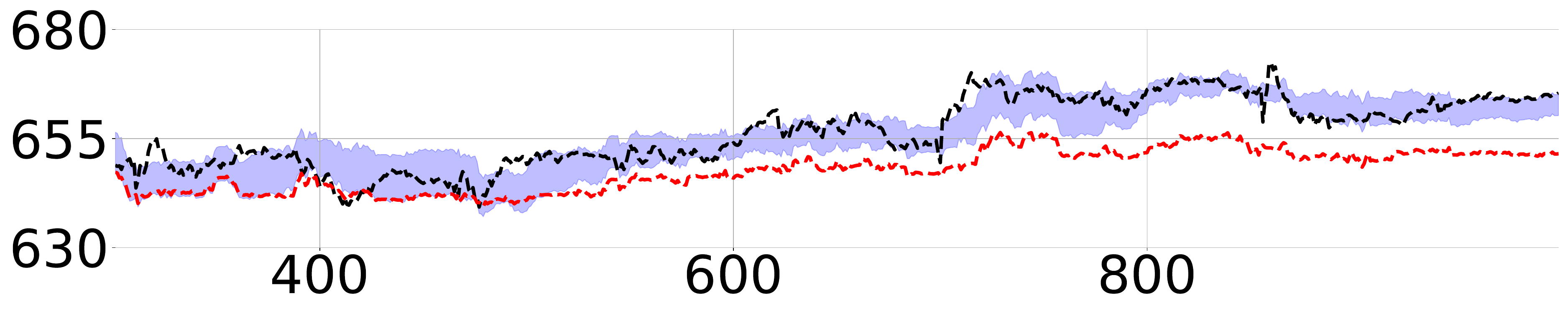}
 \caption{EnbPI prediction regions}
 \label{fig:exp_lgbm_enbpi_meta}
\end{subfigure}

\begin{subfigure}[b]{1.\linewidth}
 \centering
 \includegraphics[width=\linewidth]{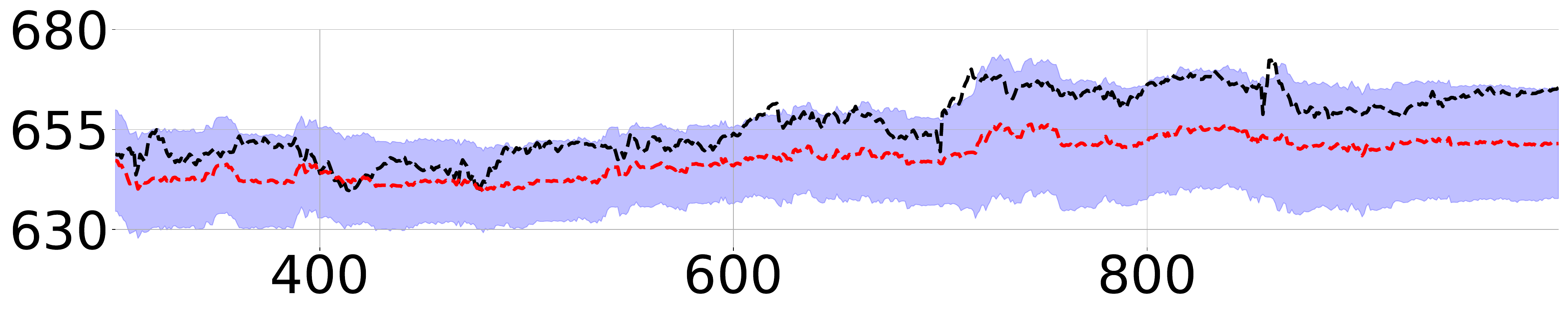}
 \caption{NexCP prediction regions}
 \label{fig:exp_lgbm_nexcp_meta}
\end{subfigure}

\begin{subfigure}[b]{1.\linewidth}
 \centering
 \includegraphics[width=\linewidth]{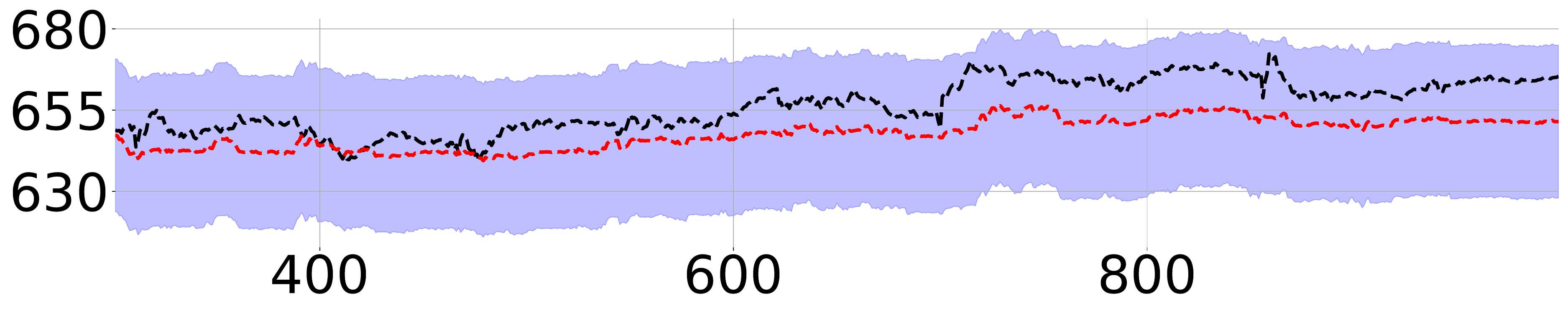}
 \caption{CP prediction regions}
 \label{fig:exp_lgbm_cp_meta}
\end{subfigure}

\begin{subfigure}[b]{1.\linewidth}
 \centering
 \includegraphics[width=\linewidth]{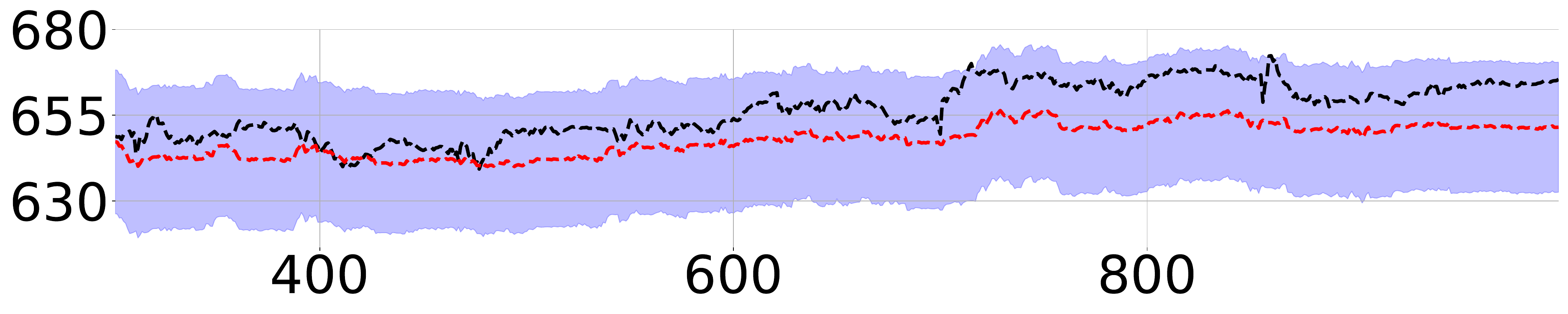}
 \caption{Copula prediction regions}
 \label{fig:exp_lgbm_copula_meta}
\end{subfigure}

\caption{Qualitative evaluation of the DistMatch, HopCPT, SPCI, EnbPI, NexCP, CP, and Copula on META Dataset with target $\alpha=0.1$, and LightGBM point predictor. DistMatch provides narrower regions while maintaining target coverage by accounting for distribution shifts and abrupt changes.}
\label{fig:meta}
\end{figure}
\end{document}